\pdfoutput=1
\documentclass{article}

\PassOptionsToPackage{numbers}{natbib}


\usepackage[final]{neurips_2025}
\usepackage{tau_commands}

\usepackage[utf8]{inputenc} 
\usepackage[T1]{fontenc}    
\usepackage[pagebackref=true]{hyperref}       
\usepackage{url}            
\usepackage{booktabs}       
\usepackage{amsfonts}       
\usepackage{nicefrac}       
\usepackage{microtype}      
\usepackage{xcolor}         

\renewcommand*\backref[1]{\ifx#1\relax \else (cited on {p.~#1}) \fi}

\usepackage{tau_commands}
\usepackage{amsmath}
\usepackage{amssymb}
\usepackage{mathtools}
\usepackage{amsthm}
\usepackage[capitalize,noabbrev]{cleveref}

\usepackage{dblfloatfix}    
\usepackage{soul} 
\usepackage{makecell} 
\usepackage{multirow} 
\usepackage{graphicx}
\usepackage{subcaption}

\input{00_custom}

\title{Optimal Rates in Continual Linear Regression
\\ via Increasing Regularization}

\newcommand*\samethanks[1][\value{footnote}]{\footnotemark[#1]}

\author{%
Ran Levinstein%
\thanks{\emph{Equal contribution}.}\,\,
\thanks{Department of Computer Science, Technion.}
 \And
Amit Attia%
\samethanks[1]\,\,\,
\thanks{Blavatnik School of Computer Science and AI, Tel Aviv University.}
\And
Matan Schliserman%
\samethanks[1]\,\,\,\samethanks[3]
\And
Uri Sherman%
\samethanks[1]\,\,\,\samethanks[3]
\AND
Tomer Koren \thanks{Blavatnik School of Computer Science and AI, Tel Aviv University, and Google Research.
}
 \And
 Daniel Soudry 
\thanks{Department of Electrical and Computer Engineering, Technion.}
 \And
Itay Evron \samethanks[5]
}

\begin{document}

\maketitle
\setcounter{footnote}{0}

\begin{abstract}
We study realizable continual linear regression under random task orderings, a common setting for developing continual learning theory.
In this setup, the worst-case expected loss after $k$ learning iterations admits a lower bound of $\Omega(1/k)$. 
However, prior work using an unregularized scheme has only established an upper bound of $\bigO(1/k^{1/4})$, leaving a significant gap.
Our paper proves that this gap can be narrowed, or even closed, using two frequently used regularization schemes:
\linebreak
(1) explicit isotropic $\ell_2$ regularization, and (2) implicit regularization via finite step budgets.
We show that these approaches, which are used in practice to mitigate forgetting, reduce to stochastic gradient descent (SGD) on carefully defined surrogate losses. 
Through this lens, we identify a fixed regularization strength that yields a near-optimal rate of $\bigO(\log k / k)$.
Moreover, formalizing and analyzing a generalized variant of SGD for time-varying functions, we derive an \emph{increasing} regularization strength schedule that provably achieves an optimal rate of $\bigO(1/k)$.
This suggests that schedules that increase the regularization coefficient or decrease the number of steps per task are beneficial, at least in the worst case.
\end{abstract}

\section{Introduction}

In continual learning, a learner encounters a sequence of tasks and aims to acquire new knowledge without ``forgetting'' what was learned in earlier tasks.
Many algorithmic approaches have been proposed to address this challenge {\citep[see surveys in][]{wang2024comprehensive,van2024continual}}. 
However, a deeper theoretical understanding is still needed to clarify the principles governing continual learning and is essential for the practical and reliable deployment of such methods.

We study standard regularization-based schemes in a setting with random task orderings.  
Both the setting and---especially---the schemes play a central role in the practical and theoretical continual learning literature, as discussed below.  
We find this combination mutually beneficial:  
(1) regularization improves the best known upper bound under random orderings, achieving an \emph{optimal} rate; and  
(2) randomness facilitates analysis that motivates heuristics for setting the regularization strength.

We focus on two forms of regularization: 
a well-known \emph{explicit} isotropic $\ell_2$ regularization, and \emph{implicit} regularization induced by a finite number of gradient steps on the unregularized loss of each task. 
Prior work studied such schemes in restricted settings---\ie two tasks \citep{li2023fixed, li2025memory}, simplified data models \citep{li2023fixed, zhao2024statistical, li2025memory}, weak regularization \citep{evron23continualClassification, jung2025convergence}, or cyclic orderings \citep{cai2025lastIterate}. 
In contrast, we consider \emph{any} number of \emph{regression} tasks drawn from \emph{any} collection, under \emph{random} orderings.

\pagebreak

Random task orderings are both theoretically motivated and empirically relevant: they closely characterize non-adversarial---and often realistic---task sequences; can be induced algorithmically via random sampling to actively mitigate forgetting; and are implicitly present in standard randomly generated continual learning benchmarks (\eg split or permuted datasets).
These orderings were found to have a remedying effect on forgetting in continual learning, both empirically \citep{lesort2022scaling,hemati2024continual} and theoretically \citep{evron2022catastrophic,evron23continualClassification,jung2025convergence,evron2025better}.
Under such orderings, the best known dimensionality-independent loss rate for linear regression with jointly realizable tasks is $\bigO(1/k^{1/4})$ \citep{evron2025better}, leaving a significant gap from the $\Omega(1/k)$ lower bound that holds for \emph{any} continual learning scheme.

In this work, we analytically reduce both the explicit and implicit regularization schemes to incremental gradient descent, which aligns with SGD under random orderings.
We prove that, under jointly realizable tasks, specific choices of fixed and increasing regularization strength schedules yield nearly-optimal and optimal rates of $\bigO(\log k / k)$ and $\bigO(1/k)$, respectively.

\vspace{-5pt}

\paragraph{Summary of contributions.}

Summarized more technically, our main contributions are:
\begin{itemize}[leftmargin=0.5cm, itemindent=0cm, labelsep=0.2cm, itemsep=4pt, topsep=2pt]
    \item We reduce continual linear regression with either \emph{explicit $\ell_2$ regularization} or \emph{finite-step budget} to incremental gradient descent on surrogate losses.
    These reductions enable unified analysis, even under \emph{arbitrary task orderings} and \emph{non-realizable} settings.
    \Cref{fig:reductions} visualizes our analytical flow.

    \item In the realizable case under \emph{random} task orderings, where the best known bound of $\bigO\prn{\hfrac{1}{k^{1/4}}}$ is obtained via an \emph{unregularized} continual scheme:
    \begin{itemize}[leftmargin=0.5cm, itemindent=0cm, labelsep=0.2cm, itemsep=4pt, topsep=3pt]
    \item
    We prove that a carefully set, \emph{fixed} regularization strength yields a \emph{near-optimal} worst-case expected loss of $\bigO(\log k / k)$.

    \item We introduce and analyze a generalized form of SGD for time-varying objectives and use it to propose an \emph{increasing} regularization schedule that achieves the \emph{optimal} rate of $\bigO(1/k)$, closing the existing gap between upper and lower bounds. See \cref{tab:comparison} for a summary.
    \end{itemize}
\end{itemize}

\vspace{-2em}

\begin{table*}[b!]
\centering
\caption{
\small
\textbf{Loss rates in realizable continual linear regression} (based on Table~1 of \citet{evron2022catastrophic}).
Upper bounds apply to any $M$ jointly realizable tasks.
Lower bounds indicate \emph{worst cases} attained by specific constructions.
Bounds for random orderings apply to the \emph{expected} loss.
We omit unavoidable scaling terms and constant multiplicative factors (which are mild).
\\
\textbf{Notation:} $k\!=\,$iterations; 
$d\!=\,$dimensions;
$\rankavg,r_{\max}\!=\,$average/maximum data matrix ranks;
${a\wedge b\triangleq\min(a,b)}$.
\label{tab:comparison}
}
\vspace{-0.1em}
{\small
\begin{tabular}{|c|c|c c c|}
\hline
\rule[-11pt]{0pt}{27pt}
\makecell{\textbf{Bound}} &
\makecell{\textbf{Regularization}} &
\makecell{\textbf{Paper / Ordering}} &
\makecell{\textbf{Random} \\ \textbf{with Replacement}} &
\makecell{\textbf{Cyclic}} 
\\
\cline{1-5}

\rule[-11pt]{0pt}{27pt}
&
&
\citet{evron2022catastrophic} &
$\displaystyle \frac{d-\rankavg}{k}$ &
$\displaystyle \frac{M^2}{\sqrt{k}} \wedge \frac{M^2 (d - r_{\max})}{k}$ \\

\rule[-11pt]{0pt}{27pt}
& 
Unregularized
&
\citet{swartworth2023nearly}
&
--- &
$\displaystyle \frac{M^3}{k}$ \\

\rule[-11pt]{0pt}{27pt}
Upper
& & 
\citet{evron2025better} &
$\displaystyle \frac{1}{\sqrt[4]{k}} 
\wedge \frac{\sqrt{d - \rankavg}}{k}
\wedge \frac{\sqrt{M\rankavg}}{k}$ &
--- \\

\cline{2-5}
\rule[-11pt]{0pt}{28pt}
& Fixed (explicit) &
C\&D~\citep{cai2025lastIterate} &
--- &
$\displaystyle \frac{M\sqrt{\log \prn{k/M}}}{k}$ \\

\rule[-11pt]{0pt}{27pt}
& Fixed &
\textbf{Ours} &
$\displaystyle \frac{\log k}{k}$ &
--- \\

\rule[-11pt]{0pt}{27pt}
& Increasing &
\textbf{Ours} &
$\displaystyle \frac{1}{k}$ &
--- \\
\hline\hline

\rule[-11pt]{0pt}{27pt}
\multirow[c]{2}{*}{\makecell{\vspace{0.2em}\\Lower}}
&
Unregularized &
\citet{evron2022catastrophic} &
$\displaystyle \frac{1}{k}$ \texttt{(*)} &
$\displaystyle \frac{M^2}{k}$ \\

\cline{2-5}
\rule[-11pt]{0pt}{27pt}
& Any &
\textbf{Ours} &
$\displaystyle \frac{1}{k}$ \texttt{(**)} &
--- \\
\hline
\end{tabular}
}

\vskip 0.2cm
\caption*{
\small
\texttt{(*)} 
They did not explicitly present such lower bounds, but
the $2$-task construction from their proof of {Theorem~10}, can yield a $\Theta(\hfrac{1}{k})$ random~behavior by cloning those $2$ tasks $\floor{M/2}$ times for any general $M$.
\\
\texttt{(**)}
While the proof is standard, we are not aware of an explicit 
statement in the literature.
}
\vspace{-1.0em}
\end{table*}
\newpage

\section{Setting: Continual linear regression with explicit or implicit regularization}
\label{sec:setting}

We focus on the widely studied continual linear regression setting, which, despite its simplicity, often reveals key phenomena and interactions in continual learning \citep[e.g.,][]{doan2021NTKoverlap,evron2022catastrophic,lin2023theory,goldfarb2023analysis,peng2023ideal,li2023fixed,zhao2024statistical,goldfarb2024theJointEffect}.

\vspace{-.5em}

\paragraph{Notation.} 
Bold symbols are reserved for matrices and vectors. 
Denote the Euclidean (vectors) or spectral (matrices) norm by $\norm{\cdot}$, and the Moore-Penrose inverse by $\X^{+}$.
Finally, denote $\cnt{n}=1,\smalldots,n$.

\vspace{.2em}

Throughout the paper, the learner is given access to a \emph{task collection} of $M$ linear regression tasks, 
that is, $(\X_1, \y_1), \dots, (\X_M, \y_M)$, where $\X_m \in \reals^{n_m \times d}$ and $\y_m \in \reals^{n_m}$.  
We define the data ``radius'' as \linebreak $R \eqdef \max_{m \in \cnt{M}} \norm{\X_m}_2$.  
Over $k$ iterations, tasks are presented sequentially according to a \emph{task ordering} $\tau\!: \cnt{k} \!\to\! \cnt{M}$.  
The learner aims to accumulate expertise, quantified by the objective below.

\vspace{.25em}

\begin{definition}[Average loss]
\label{def:train_loss}
The average---or population---loss is defined as the mean training loss across \emph{all} individual tasks $m\in M$.
%
That is,
$$\mathcal{L}(\w) 
\triangleq
\frac{1}{M}\sum_{m=1}^{M} 
\mathcal{L}(\w;m)
\triangleq
\frac{1}{2M}\sum_{m=1}^{M} 
\norm{\X_{m} \w - \y_{m}}^2
.$$
\end{definition}

\begin{remark}[Forgetting and seen-task loss]
Prior work analyzed not only the loss over \emph{all} tasks but also the forgetting, or loss on \emph{seen} tasks.  
Under the random orderings considered here, all of these quantities are typically close.  
We thus focus on average loss and discuss the others in \cref{sec:forgetting}.
\end{remark}

\vspace{-.3em}

\paragraph{Explicit regularization.}
A large body of practical continual learning research focuses on mitigating forgetting by \emph{explicitly} penalizing changes in parameter space  
\citep[e.g.,][]{kirkpatrick2017overcoming,zenke2017continual,aljundi2018memory,chaudhry2018riemannian}.  
Many employ regularization terms based on Fisher information \citep{benzing2021unifying_regularization},  
though others have found empirically that $\ell_2$ (isotropic) regularization often performs comparably well \citep{lubana2021regularization,smith2022closer}.  
Following recent theoretical work \citep[e.g.,][]{li2023fixed,evron23continualClassification,cai2025lastIterate},  
we also focus on isotropic regularizers but discuss alternatives in \secref{sec:forgetting}.

\vspace{-0.5em}
\begin{algorithm}[H]
   \caption{Regularized continual linear regression}
    \label{schm:regularized}
\begin{algorithmic}
\vspace{-0.15em}
   \STATE { 
   \algmargin\bfseries Input:}
   Regression tasks $\left\{\prn{\X_m,\y_m}\right\}_{m=1}^{M}$,
   task ordering $\tau$,
   regularization coefficients $\prn{\lambda_t}_{t=1}^{k}$.
   \vspace{0.35em}
   \STATE {
   \algmargin Initialize} 
   $\w_0 = \0_{\dim}$
   \STATE {
   \algmargin 
   For each iteration $t=1,\dots,k$:
   }
   \STATE {
   \algmargin \codeindent
   $\displaystyle
   \w_t \leftarrow 
   {\argmin}_{\w} 
   \big\{
   \tfrac{1}{2}  \norm{\X_{\tau_{t}}\w - \y_{\tau_{t}}}^2
   +
   \tfrac{\lambda_t}{2} 
   \norm{\w - \w_{t-1}}^2
   \big\}$
   }
   \STATE {
   \algmargin 
   Output $\w_k$
   } 
\end{algorithmic}
\end{algorithm}
\vspace{-.9em}
\begin{remark}[Unregularized first task]
Our analysis is also valid for the common choice $\lambda_1 \to 0$. 
\end{remark}

While the continual update step above admits a closed-form solution---useful for theoretical analysis \citep[e.g.,][]{li2023fixed}---our paper does not directly leverage it.
Instead, in \cref{sec:reductions}, we reduce this step---which solves an \emph{entire} task---to a \emph{single} gradient step, thus enabling last-iterate SGD analysis of the scheme.

\vspace{-.1em}

\paragraph{Implicit regularization.}
Practically, it is common to minimize the current task's \emph{unregularized} loss \linebreak
with a gradient algorithm for a \emph{finite} number of steps 
(\eg in \citep{kemker2018measuring}; in contrast to theoretically learning to convergence  \citep{evron2022catastrophic,evron2025better}).
This \emph{implicitly} regularizes the model, even in stationary settings \citep{ali2019continuous,soudry2018journal}. 
\linebreak
Recently, it has attracted theoretical interest in continual setups \citep{jung2025convergence,zhao2024statistical}.
\vspace{-0.5em}
\begin{algorithm}[H]
   \caption{Continual linear regression with finite step budgets}
   \label{schm:early_stopped}
\begin{algorithmic}
\vspace{-0.15em}
   \STATE { 
   \algmargin\bfseries Input:}
   Regression tasks $\left\{\prn{\X_m,\y_m}\right\}_{m=1}^{M}$,
   task ordering $\tau$,
   inner step counts and sizes $\prn{N_t, \gamma_t}_{t=1}^{k}$.
   \vspace{0.35em}
   \STATE {\algmargin Initialize} 
   $\w_0 = \0_{\dim}$
   \STATE {\algmargin For each task $t = 1,\dots,k$:}
   \STATE {\algmargin \codeindent Initialize $\w^{(0)} \leftarrow \w_{t-1}$}
   \STATE {\algmargin \codeindent For $s = 1,\dots,N_t$:
   ~
   \codecomment{\# Perform $N_t$ gradient steps on the current task's unregularized loss.}}
   \STATE {\algmargin \codeindent\codeindent
   $\w^{(s)} 
   \leftarrow 
   \w^{(s-1)} - \gamma_t 
   \nabla 
   {\frac{1}{2}} \left\| \X_{\tau_{t}} \w^{(s-1)} - \y_{\tau_{t}} \right\|^2
   $
   }
   \STATE {\algmargin \codeindent $\w_t \leftarrow \w^{(N_t)}$}
   \STATE {\algmargin Output $\w_k$}
\end{algorithmic}
\end{algorithm}
\vspace{-1.6em}

\paragraph{Regularization strength.}
The coefficients $\lambda_t$ and step counts $N_t$ in \cref{schm:regularized,schm:early_stopped} control the ``regularization strength'' and determine how well the current loss is minimized---often seen as tuning the \emph{stability-plasticity} trade-off \citep{grossberg1982processing,wang2024comprehensive}.  
Our paper identifies choices that yield improved bounds.


\section{Regularized continual linear regression reduces to Incremental GD}
\label{sec:reductions}

\citet{evron2025better} proved a reduction from \emph{unregularized} continual linear regression to a ``stepwise-optimal'' SGD scheme, where a \emph{single} SGD step corresponds to solving an \emph{entire} task.
This has allowed them to use last-iterate SGD analysis to study continual learning, as we do in \secref{sec:rates}.

We define the Incremental Gradient Descent (IGD) scheme to cast both \cref{schm:regularized,schm:early_stopped} within a unified framework, enabling a common analysis.
The reductions and the flow in which we employ them are illustrated in
\cref{fig:reductions}.
At each iteration $t$, the algorithm performs a gradient step on the time-varying smooth convex function $f^{(t)}(\cdot;\tau_t)$, selected by the ordering $\tau$, using step size $\eta_t$.

\vspace{-10pt}
{\centering
\begin{minipage}{\linewidth}
\begin{algorithm}[H]
\vspace{-1pt}
   \caption{
   Incremental Gradient Descent for smooth, convex, time-varying functions
   }
   \label{schm:igd}
\begin{algorithmic}
\vspace{-1pt}
   \STATE {
   \algmargin\bfseries Input:}
   Smooth, convex, time-varying functions
   $\set{
   f^{(t)}\fprn{\cdot; m}}_{m=1}^{M}$, ordering $\tau$, step sizes $\prn{\eta_t}_{t=1}^{k}$
   \vspace{0.2em}
   \STATE {
   \algmargin Initialize} 
   \( \w_0 \in \mathbb{R}^d \)
   \STATE {
   \algmargin 
   For each iteration \( t = 1, \dots, k \):}
   \STATE {
   \algmargin \codeindent
   \( \w_t 
   \leftarrow
   \w_{t-1} - 
   \eta_t\nabla f^{(t)}(\w_{t-1};\tau_t)
   \)
   ~
   \codecomment{\# Perform a single gradient step on the current objective.}
   }
   \STATE {
   \algmargin 
   Output \( \w_k \)
   }
\end{algorithmic}
\end{algorithm}
\end{minipage}
}
\vspace{-3pt}

We present two reductions that cast regularized and budgeted continual regression as special cases of incremental gradient descent. Proofs for this section are provided in \appref{app:reduction_proofs}.

\begin{reduction}[Regularized Continual Regression 
$\Rightarrow$ Incremental GD]
\label{reduc:regularized_to_sgd}
Consider $M$ regression tasks 
$\set{(\X_m, \y_m)}_{m=1}^{M}$ seen in any ordering $\tau$.
Then, each iteration $t\in\cnt{k}$ of regularized continual linear regression  
with a coefficient $\lambda_{t} >0$
is equivalent to an IGD step on ${ \freg\fprn{\cdot; \tau_t} }$ with step size $\eta_t >0$, 
where
$
\freg\fprn{\w;m} \eqdef \frac{1}{2}
\bignorm{ 
\sqrt{\scalebox{0.9}[0.9]{$\A^{(t)}_m$}}
\fprn{ \w - \X_m^+\y_m } }^2
$
for some $\A_{m}^{(t)}$ 
depending on $\lambda_t,\eta_t$.
That is, the iterates of \cref{schm:regularized,schm:igd} coincide.
\end{reduction}

\vspace{0.1em}
\begin{reduction}[Budgeted Continual Regression 
$\Rightarrow$ Incremental GD]
\label{reduc:early_to_sgd}
Consider $M$ regression tasks 
$\set{(\X_m, \y_m)}_{m=1}^{M}$ seen in any ordering $\tau$.
Then, each iteration $t\in\cnt{k}$ of budgeted continual linear regression  
with $N_t\in\naturals$ inner steps of size 
${\gamma_t}\in\fprn{0,\, 1 / R^2}$
is equivalent to an IGD step on ${ \fbud\fprn{\cdot; \tau_t} }$ with step size $\eta_t >0$, 
where
$
\fbud\fprn{\w;m} \eqdef \frac{1}{2}
\bignorm{ 
\sqrt{\scalebox{0.9}[0.9]{$\A^{(t)}_m$}}
\fprn{ \w - \X_m^+\y_m } }^2
$
for some $\A^{(t)}_m$ 
depending on $N_t,\gamma_t,\eta_t$.
That is, the iterates of \cref{schm:early_stopped,schm:igd} coincide.
\end{reduction}

\vspace{-0.5em}

\paragraph{Proof idea.}
The updates $\prn{\w_{t-1} \!-\! \w_t}$ in \Cref{schm:regularized,schm:early_stopped} are affine in $\w_{t-1}$, and thus correspond to gradients of quadratic functions. 
By setting $\A^{(t)}_m \!=\! \tfrac{1}{\eta_t}\bigprn{\I_d - \lambda_t\prn{\X_m^\top \X_m + \lambda_t \I_d}^{-1}}$ in \Cref{reduc:regularized_to_sgd}, and $\A^{(t)}_m = \tfrac{1}{\eta_t}\bigprn{\I_d - \prn{\I_d - \gamma_t \X_m^\top \X_m}^{N_t}}$ in \Cref{reduc:early_to_sgd},
each of those updates coincides with an IGD step on the quadratic surrogate 
$f^{(t)}(\w;m) = \tfrac{1}{2}
\bignorm{ 
\sqrt{\scalebox{0.9}[0.9]{$\A^{(t)}_m$}}
(\w - \X_m^+ \y_m)}^2$.

\vspace{.2em}

Next, we establish key properties of the surrogate objectives $\freg, \fbud$,
which hold \emph{regardless of task ordering or realizability}.
Importantly, they enable last-iterate GD analysis for continual regression.

\begin{lemma}[Properties of the IGD objectives]
\label{lem:reg_function_properties}
For $t\in \cnt{k}$,
define $\freg, \fbud$ as in \cref{reduc:regularized_to_sgd,reduc:early_to_sgd},
and recall the data radius $R \eqdef \max_{m \in \cnt{M}} \norm{\X_m}_2$.
\begin{enumerate}[label=(\roman*), leftmargin=.7cm, itemindent=0cm, labelsep=0.2cm, topsep=-2pt, itemsep=-3pt]
\item 
$\freg,\fbud$ are both convex and $\beta$-smooth\footnote{A function $h\colon \R^d\to \R$ is $\beta$-smooth when $\norm{\nabla h(y) - \nabla h(x)}\leq \beta \norm{y - x}$ for all $x,y\in \R^d$.}
for
$\beta^{(t)}_r\eqdef\frac{1}{\eta_t}\frac{R^2}{R^2+\lambda_t},\,
\beta^{(t)}_b\eqdef\frac{1}{\eta_t}\prn{1 - (1 - \gamma_t R^2)^{N_t}}$.

\item 
Both functions bound the ``excess'' loss from both sides,
\ie $\forall \w\in\reals^d, \forall t\in\cnt{k}, \forall m\in\cnt{m}$,
\vspace{-.1em}
\[
\begin{aligned}
\lambda_t \eta_t \cdot \freg(\w;m)
\;&\leq\;
\mathcal{L}(\w;m) - \min_{\w'} \mathcal{L}(\w';m)
\;\leq\;
\frac{R^2}{\beta^{(t)}_r} \cdot \freg(\w;m)
\,,
\\
\frac{\eta_t}{\gamma_t N_t} \cdot \fbud(\w;m)
\;&\leq\;
\mathcal{L}(\w;m) - \min_{\w'} \mathcal{L}(\w';m)
\;\leq\;
\frac{R^2}{\beta^{(t)}_b} \cdot \fbud(\w;m)
\,.
\end{aligned}
\]

\item
Finally, when the tasks are jointly realizable (see \cref{asm:realizability}), the same $\teacher$ minimizes all surrogate objectives 
simultaneously. That is,
$$\mathcal{L}\fprn{\teacher;m}=\freg\fprn{\teacher;m}=\fbud\fprn{\teacher;m}=0, 
\quad
\forall t\in\cnt{k}, \forall m\in\cnt{M}\,.$$

\end{enumerate}
\end{lemma}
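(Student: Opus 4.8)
The plan is to prove each of the three items by unpacking the explicit forms of $\A^{(t)}_m$ supplied in the proof idea, using the eigendecomposition of $\X_m^\top\X_m$ to reduce everything to a scalar computation on each eigenvalue.

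First, I would fix the task index $m$ and write $\X_m^\top\X_m = \mQ\,\mathrm{diag}(\sigma_1,\dots,\sigma_d)\,\mQ^\top$ with $0\le\sigma_i\le R^2$. In this basis, $\A^{(t)}_m = \tfrac{1}{\eta_t}\mQ\,\mathrm{diag}(a_1,\dots,a_d)\,\mQ^\top$ where, for the regularized case, $a_i = 1 - \lambda_t/(\sigma_i+\lambda_t) = \sigma_i/(\sigma_i+\lambda_t)$, and for the budgeted case, $a_i = 1 - (1-\gamma_t\sigma_i)^{N_t}$. For item (i): both $a_i$ are nonnegative (for the budgeted case because $0\le\gamma_t\sigma_i\le\gamma_t R^2<1$, so $1-\gamma_t\sigma_i\in[0,1)$ and its $N_t$-th power stays in $[0,1]$), hence each $\A^{(t)}_m\succeq0$, so $f^{(t)}(\cdot;m)=\tfrac12\|\sqrt{\A^{(t)}_m}(\w-\X_m^+\y_m)\|^2$ is a genuine convex quadratic; its smoothness constant is the top eigenvalue of $\A^{(t)}_m$, i.e. $\tfrac{1}{\eta_t}\max_i a_i$. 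Since $\sigma\mapsto\sigma/(\sigma+\lambda_t)$ and $\sigma\mapsto1-(1-\gamma_t\sigma)^{N_t}$ are both increasing on $[0,R^2]$, the max is attained at $\sigma=R^2$, giving exactly $\beta^{(t)}_r$ and $\beta^{(t)}_b$ as claimed.

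For item (ii): note $\mathcal{L}(\w;m)-\min_{\w'}\mathcal{L}(\w';m) = \tfrac12\|\X_m\w-\X_m\X_m^+\y_m\|^2 = \tfrac12\|\X_m(\w-\X_m^+\y_m)\|^2 = \tfrac12(\w-\X_m^+\y_m)^\top(\X_m^\top\X_m)(\w-\X_m^+\y_m)$, using that $\X_m^+\y_m$ is the minimum-norm least-squares solution so the residual at it is orthogonal to the column space. Writing $\vect{v}=\mQ^\top(\w-\X_m^+\y_m)$, the excess loss is $\tfrac12\sum_i\sigma_i v_i^2$ while $f^{(t)}(\w;m)=\tfrac1{2\eta_t}\sum_i a_i v_i^2$. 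So the claimed two-sided bounds reduce to the scalar inequalities $c_{\mathrm{lo}}\,a_i \le \eta_t\sigma_i \le c_{\mathrm{hi}}\,a_i$ with the stated constants. For the lower bound in the regularized case, $\eta_t\sigma_i\cdot\frac{1}{a_i}=\eta_t(\sigma_i+\lambda_t)\ge\eta_t\lambda_t$ — wait, I have it backwards; I want $\lambda_t\eta_t\,f^{(t)}\le$ excess loss, i.e. $\lambda_t\eta_t\cdot\tfrac1{\eta_t}a_i\le\sigma_i$, i.e. $\lambda_t a_i = \lambda_t\sigma_i/(\sigma_i+\lambda_t)\le\sigma_i$, which holds since $\lambda_t/(\sigma_i+\lambda_t)\le1$. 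The upper bound $\sigma_i\le\tfrac{R^2}{\beta^{(t)}_r}\cdot\tfrac1{\eta_t}a_i = \tfrac{R^2+\lambda_t}{\eta_t}\cdot\tfrac1{\eta_t}\cdot\eta_t a_i$... I will simply verify $\sigma_i\le(R^2+\lambda_t)\,a_i = (R^2+\lambda_t)\sigma_i/(\sigma_i+\lambda_t)$, equivalently $\sigma_i+\lambda_t\le R^2+\lambda_t$, true. The budgeted case is analogous: for the lower bound, $\tfrac{1}{\gamma_t N_t}a_i\le\sigma_i$ follows from $1-(1-x)^{N_t}\le N_t x$ for $x=\gamma_t\sigma_i\in[0,1]$ (Bernoulli/convexity); for the upper bound, $\sigma_i\le\tfrac{R^2}{\beta^{(t)}_b}\tfrac{1}{\eta_t}a_i$, equivalently $\sigma_i(1-(1-\gamma_tR^2)^{N_t})\le R^2(1-(1-\gamma_t\sigma_i)^{N_t})$, which follows because $\sigma\mapsto\sigma^{-1}(1-(1-\gamma_t\sigma)^{N_t})$ is decreasing on $(0,R^2]$ (its numerator is concave in $\sigma$ and vanishes at $0$), so evaluating at $\sigma_i\le R^2$ gives the inequality.

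For item (iii): under \cref{asm:realizability} there is a $\teacher$ with $\X_m\teacher=\y_m$ for all $m$; then $\X_m^+\y_m = \X_m^+\X_m\teacher$ is the projection of $\teacher$ onto the row space of $\X_m$, so $\teacher-\X_m^+\y_m\in\ker(\X_m)=\ker(\X_m^\top\X_m)$, which is also $\ker(\A^{(t)}_m)$ since $\A^{(t)}_m$ is a matrix function of $\X_m^\top\X_m$ that sends eigenvalue $0$ to $0$. Hence $\sqrt{\A^{(t)}_m}(\teacher-\X_m^+\y_m)=\0$ and $f^{(t)}(\teacher;m)=0$, and likewise $\mathcal{L}(\teacher;m)=\tfrac12\|\X_m\teacher-\y_m\|^2=0$. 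I do not anticipate a serious obstacle; the only mildly delicate point is the budgeted upper bound in (ii), where I must confirm the monotonicity of $\sigma\mapsto\sigma^{-1}(1-(1-\gamma_t\sigma)^{N_t})$ — this is where I would spend the most care, either via the concavity argument above or by checking the derivative sign directly. Everything else is bookkeeping once the problem is diagonalized.
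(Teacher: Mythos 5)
Your proposal is correct and follows essentially the same route as the paper: diagonalize $\X_m^\top\X_m$, observe the surrogate is a convex quadratic whose Hessian eigenvalues are an increasing concave function of the $\sigma_i$ (giving the smoothness constant at $\sigma=R^2$), and derive the two-sided bounds from the tangent-at-zero inequality (your $\lambda_t a_i\le\sigma_i$ and Bernoulli) together with the chord/monotonicity of $\sigma\mapsto a(\sigma)/\sigma$ — the paper merely packages these two scalar facts into a general lemma for an arbitrary concave, non-decreasing $g$ with $g(0)=0$. The only cosmetic difference is in (iii), where you argue directly via $\teacher-\X_m^+\y_m\in\ker(\X_m^\top\X_m)\subseteq\ker(\A_m^{(t)})$ while the paper deduces it from the lower bound in (ii); both are valid.
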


\newpage

\begin{figure}[H]
\label{fig:reduction}
\centering
\includegraphics[width=1\linewidth]{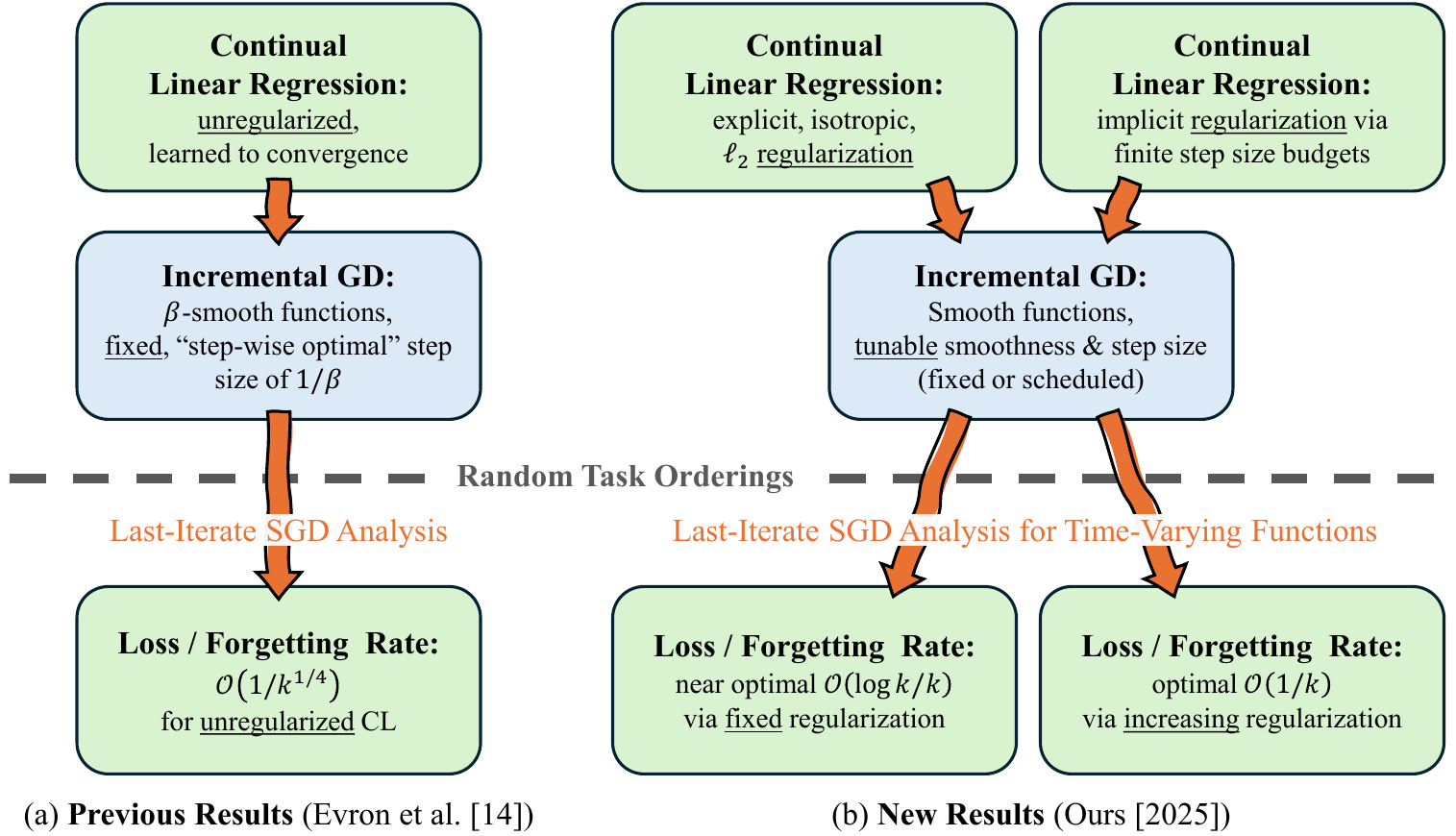}
\caption{
\textbf{Schematic overview of our contributions compared to prior results.}
\citet{evron2025better} reduce unregularized continual linear regression to incremental gradient descent on a surrogate objective with fixed smoothness. 
They then analyze the last iterate of SGD to derive a loss rate of $\bigO(\hfrac{1}{k^{1/4}})$ under random task orderings.
In contrast, we show that adding explicit or implicit regularization enables tuning the smoothness of the corresponding surrogate objective. 
Importantly, this added flexibility allows a more nuanced last-iterate analysis: a well-tuned fixed regularization strength yields a near-optimal $\bigO(\log k / k)$ rate, while a specific increasing schedule achieves the first $\bigO(1/k)$ rate for continual linear regression under random orderings.
}
\label{fig:reductions}
\end{figure}
\section{Rates for realizable continual linear regression in random orderings}
\label{sec:rates}

\paragraph{Jointly realizable regression.}
In this section, we focus on a setting in which  the training data of all tasks can be perfectly fit by a single predictor---a common assumption\footnote{\label{fn:setting}Other theoretical works similarly assume an underlying linear model, but allow additive label noise.
\linebreak
This, however, almost invariably requires assuming either i.i.d.~features \citep{goldfarb2023analysis,lin2023theory,banayeeanzade2025theoretical} or commutable covariance matrices across tasks \citep{li2023fixed,li2025memory,zhao2024statistical}---whereas we allow \emph{arbitrary} data matrices, enabling worst-case analysis.} in theoretical continual learning \citep[e.g.,][]{evron2022catastrophic,evron23continualClassification,swartworth2023nearly,goldfarb2024theJointEffect,jung2025convergence,evron2025better}.  
This assumption simplifies analysis by allowing all iterates to be compared to a fixed predictor, ruling out task collections with inherent contradictions.
Such realizability often holds in highly overparameterized deep networks, which can typically be optimized to arbitrarily low training loss.  
In the neural tangent kernel (NTK) regime \citep{jacot2018ntk,chizat2019lazy}, such networks exhibit effectively linear dynamics that closely align with our analysis.

\begin{assumption}[Joint realizability of training data]
\label{asm:realizability}
There exists an \emph{offline} solution $\teacher \in \reals^d$ s.t.,
$$
\X_m \teacher = \y_m, \quad \forall m \in \cnt{M}.
$$
\end{assumption}

We note in passing that even under this assumption, models still suffer from forgetting prior expertise---sometimes catastrophically---as thoroughly discussed in \citet{evron2022catastrophic}.

\paragraph{Random task orderings.}
We study random orderings as a natural model of non-adversarial task sequences \citep{roughgarden2019beyond}.
Such orderings avoid worst-case pathologies and allow reductions to standard stochastic tools. They are implicitly used when generating common random benchmarks (\eg permuted or split datasets), and can also be induced algorithmically by random sampling.
These settings have been studied empirically \citep{lesort2022scaling,hemati2024continual} and theoretically \citep{evron2022catastrophic,evron23continualClassification,jung2025convergence,evron2025better}. 
\Cref{tab:comparison} compares 
known rates under random and cyclic orderings.

\begin{definition}[Random task ordering]
\label{def:random_ordering}
A random task ordering samples tasks uniformly from the collection $\sqprn{M}$.
That is, $\tau_1, \dots, \tau_k \sim \Unif\prn{\sqprn{M}}$, 
with or without replacement.
\end{definition}

\pagebreak

\paragraph{An immediate lower bound.}
Under random ordering with replacement, no algorithm can achieve a worst-case expected loss convergence rate faster than $\Omega(1/k)$. This result, which stems from the uncertainty over unseen tasks, is formally established in \cref{thm:any_algorithm_lower_bound} and serves as a baseline for evaluating the tightness of our upper bounds.

Lastly, throughout the section, we use the data radius $R \eqdef \max_{m \in \cnt{M}} \norm{\X_m}_2$. 

\subsection{Near optimal rates via fixed, horizon-dependent regularization strength}
\label{sec:fixed}

We apply last-iterate convergence results for SGD to the surrogate losses used by IGD under random orderings. 
Our analysis first considers the fixed-regularization setting, which is common in practice and---unlike our result with increasing regularization---extends cleanly to random orderings \emph{without} replacement (see \cref{rem:ext_to_wor}).
Specifically, using the results of \citet{evron2025better} together with the smoothness and upper bound from our \cref{lem:reg_function_properties}, we establish the following.
\begin{lemma}[Rates for fixed regularization strength]
\label{lem:general_fixed_parameters}
Assume a random with-replacement ordering over jointly realizable tasks. Then, for each of \cref{schm:regularized,schm:early_stopped}, the expected loss after $k \geq 1$ iterations is upper bounded as:
\begin{enumerate}[label=(\roman*), leftmargin=.7cm, itemindent=0cm, labelsep=0.2cm]
\item 
\textbf{Fixed coefficient:}
For \cref{schm:regularized} with a regularization coefficient $\lambda > 0$,
\[
\mathbb{E}_{\tau}\mathcal{L}\left(\w_k\right) 
\leq 
\frac{
e \norm{ \w_0 - \w_\star }^2 R^2
}{
2 \cdot \frac{R^2}{R^2 + \lambda}
\cdot \left(2 - \frac{R^2}{R^2 + \lambda} \right)
\cdot
k^{1 - \frac{R^2}{R^2 + \lambda} \prn{1 - \frac{R^2}{4(R^2 + \lambda)} } }
}\,;
\]

\item 
\textbf{Fixed budget:}
For \cref{schm:early_stopped} with step size $\gamma \in (0, 1/R^2)$ and budget $N \in \mathbb{N}$,
\[
\mathbb{E}_{\tau}\mathcal{L}\left(\w_k\right) 
\leq 
\frac{
e \norm{ \w_0 - \w_\star }^2 R^2
}{
2 \cdot \prn{1 - (1 - \gamma R^2)^{2N}}
\cdot
k^{1 - \prn{1 - (1 - \gamma R^2)^N} \prn{1 - \frac{1 - (1 - \gamma R^2)^N}{4}} }
}\,.
\]
\end{enumerate}
\end{lemma}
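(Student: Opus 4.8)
The plan is to assemble three ingredients that are already in place: the reductions of \cref{reduc:regularized_to_sgd,reduc:early_to_sgd}, the structural facts of \cref{lem:reg_function_properties}, and the last-iterate SGD bound of \citet{evron2025better} in the interpolation regime.

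First, fix the regularization strength ($\lambda$ for \cref{schm:regularized}; the pair $\gamma, N$ for \cref{schm:early_stopped}) and pick an arbitrary analysis step size $\eta > 0$. By \cref{reduc:regularized_to_sgd} (resp.\ \cref{reduc:early_to_sgd}), the iterates $\w_1,\dots,\w_k$ of the scheme coincide with those of \cref{schm:igd} run on the family $\set{\freg(\cdot;m)}_{m=1}^{M}$ (resp.\ $\set{\fbud(\cdot;m)}_{m=1}^{M}$) with step size $\eta$. Since $\lambda,\gamma,N$ are held fixed across iterations, this family is time-invariant and every component shares the same smoothness constant $\beta = \tfrac1\eta\tfrac{R^2}{R^2+\lambda}$ (resp.\ $\beta = \tfrac1\eta(1-(1-\gamma R^2)^N)$), so \cref{schm:igd} here is plain constant-step SGD with step size $\eta$. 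Under a random with-replacement ordering the indices $\tau_1,\dots,\tau_k$ are \iid $\Unif(\cnt{M})$, hence this is SGD on the population surrogate $F(\w) \eqdef \tfrac1M\sum_{m}\freg(\w;m)$ (resp.\ $\fbud$). The crucial observation is that $\eta\beta$ equals $\rho \eqdef R^2/(R^2+\lambda)\in(0,1)$ (resp.\ $\rho \eqdef 1-(1-\gamma R^2)^N \in (0,1)$, using $\gamma < 1/R^2$), independently of the choice of $\eta$; thus $\rho$ is the only effective step-size quantity.

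Second, check the hypotheses of the interpolation-regime last-iterate bound. By \cref{lem:reg_function_properties}(i) each component is convex and $\beta$-smooth, and $\eta\beta = \rho < 1$ meets the standard step-size restriction. By \cref{lem:reg_function_properties}(iii), joint realizability (\cref{asm:realizability}) gives $\freg(\teacher;m) = \fbud(\teacher;m) = 0$ for every $m$, so $\teacher$ simultaneously minimizes all components and $F(\teacher)=0$ --- precisely the interpolation condition that lets the analysis be driven by the fixed distance $\norm{\w_0-\teacher}$. Plugging this into the last-iterate SGD bound of \citet{evron2025better} yields a bound of the form $\expect{\tau}{F(\w_k)} \le \tfrac{e}{2(2-\rho)}\cdot\tfrac{\norm{\w_0-\teacher}^2}{\eta\,k^{(1-\rho/2)^2}}$. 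Then, averaging the right-hand inequality of \cref{lem:reg_function_properties}(ii) over $m$ and using realizability gives $\loss(\w_k) \le \tfrac{R^2}{\beta}F(\w_k)$ with $\tfrac{R^2}{\beta} = \eta R^2/\rho$; multiplying, the $\eta$'s cancel and
\[
\expect{\tau}{\loss(\w_k)} \;\le\; \frac{e\,R^2\,\norm{\w_0-\teacher}^2}{2\rho(2-\rho)\,k^{(1-\rho/2)^2}}\,.
\]
It remains to unfold $\rho$ via the elementary identities $(1-\rho/2)^2 = 1-\rho(1-\rho/4)$ and, for the budgeted case, $\rho(2-\rho) = 1-(1-\gamma R^2)^{2N}$ (because $1+(1-\gamma R^2)^N = 2-\rho$); substituting $\rho = R^2/(R^2+\lambda)$ and $\rho = 1-(1-\gamma R^2)^N$ respectively turns the display into the stated bounds (i) and (ii).

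This argument is essentially an assembly, so there is no deep obstacle; the care lies in (a) confirming that $\eta$ is genuinely a free parameter of the reductions and that $\eta\beta$ collapses to a quantity depending only on the regularization strength --- which is why the final bound carries no $\eta$ --- and (b) matching the precise shape of the \citet{evron2025better} last-iterate inequality, in particular that its $k$-exponent is $(1-\eta\beta/2)^2$ and its prefactor degrades like $1/(2-\eta\beta)$, so that the constants in (i)--(ii) come out exactly rather than up to universal factors. It is also worth noting that realizability enters only through \cref{lem:reg_function_properties}(iii): the reductions and the two-sided bounds hold for arbitrary orderings and non-realizable tasks, but the interpolation-type last-iterate rate used here requires the common minimizer $\teacher$.
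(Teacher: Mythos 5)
Your proposal is correct and follows essentially the same route as the paper's proof: reduce to time-invariant SGD on the surrogates via \cref{reduc:regularized_to_sgd,reduc:early_to_sgd}, verify smoothness/convexity/realizability via \cref{lem:reg_function_properties}, apply the last-iterate bound of \citet{evron2025better} with $\eta\beta = \rho$, and transfer back to $\mathcal{L}$ via the right-hand inequality of \cref{lem:reg_function_properties}(ii), with the $\eta$'s cancelling exactly as in the paper. Your explicit observation that $\eta$ is a free analysis parameter and that $\rho$ is the only effective quantity is a slightly cleaner packaging of what the paper does implicitly, but the substance is identical.
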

All proofs for this subsection are provided in \cref{app:fixed_bounds_proofs}.

The rates established in \cref{lem:general_fixed_parameters} raise a natural question: 
\emph{What choice of the regularization strength---\ie the regularization coefficient $\lambda$ or step count $N$---achieves the tightest bound?} 

\begin{corollary}[Near-optimal rates via fixed regularization strength]
\label{cor:near_optimal_fixed_parameters}
Assume a random with-replacement ordering over jointly realizable tasks.
When the regularization strengths in \cref{lem:general_fixed_parameters} are set as follows:
\begin{enumerate}[label=(\roman*), leftmargin=.7cm, itemindent=0cm, labelsep=0.2cm, itemsep=2pt,topsep=0pt]
\item
\textbf{Fixed coefficient:}
For \cref{schm:regularized}, set regularization coefficient $\lambda \eqdef R^2(\ln k - 1)$;

\item
\textbf{Fixed budget:}
For \cref{schm:early_stopped}, choose step size $\gamma \in (0, 1/R^2)$ and set budget
$N \eqdef \frac{\ln\prn{1 - \frac{1}{\ln k}}}{\ln\prn{1 - \gamma R^2}}$;
\end{enumerate}
\vspace{0.5em}

Then, under either \cref{schm:regularized} or \cref{schm:early_stopped}, the expected loss after $k \geq 2$ iterations is bounded as:
\[
\mathbb{E}_{\tau}\mathcal{L}\left(\w_k\right)
\leq 
\frac{5 \norm{\w_0 -\teacher}^2 R^2 \ln k}{k} \,.
\]
\end{corollary}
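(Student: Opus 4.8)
The plan is to obtain both bounds by substituting the prescribed regularization strengths into \cref{lem:general_fixed_parameters} and simplifying; after a change of variables the two cases collapse to a single elementary inequality. First I would unify the two bounds of \cref{lem:general_fixed_parameters}. In case (i) write $\rho \triangleq R^2/(R^2+\lambda)$, and in case (ii) write $q \triangleq 1 - (1-\gamma R^2)^N$; using the identity $1 - (1-\gamma R^2)^{2N} = q(2-q)$, both bounds take the common form
\[
\mathbb{E}_\tau\mathcal{L}(\w_k) \;\le\; \frac{e\,\norm{\w_0 - \teacher}^2 R^2}{2\,\theta(2-\theta)\,k^{\,1-\theta(1-\theta/4)}},
\qquad \theta \in \{\rho, q\},
\]
with $\theta=\rho$ in case (i) and $\theta=q$ in case (ii). So it suffices to show the right-hand side is at most $5\norm{\w_0-\teacher}^2 R^2(\ln k)/k$ once $\theta = 1/\ln k$.

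Second, I would check that the two prescribed choices yield exactly $\theta = 1/\ln k$. For \cref{schm:regularized}, $\lambda = R^2(\ln k - 1)$ gives $R^2+\lambda = R^2\ln k$, hence $\rho = 1/\ln k$. For \cref{schm:early_stopped}, $N = \ln(1-1/\ln k)/\ln(1-\gamma R^2)$ gives $(1-\gamma R^2)^N = \exp\!\big(N\ln(1-\gamma R^2)\big) = 1 - 1/\ln k$, hence $q = 1/\ln k$. (One should round $N$ to the nearest admissible positive integer; the slack in the final constant, noted below, absorbs the resulting bounded change in $\theta$.)

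Third, I would finish by plugging $\theta = 1/\ln k$ into the common form, using two facts: $k^{\theta} = k^{1/\ln k} = e$, so $k/k^{1-\theta(1-\theta/4)} = k^{\theta(1-\theta/4)} = e^{\,1-\theta/4} \le e$; and, for $k \ge 3$ (so that $\ln k \ge 1$), $2\theta(2-\theta) = \tfrac{2}{\ln k}\big(2 - \tfrac{1}{\ln k}\big) \ge \tfrac{2}{\ln k}$. Combining,
\[
\mathbb{E}_\tau\mathcal{L}(\w_k) \;\le\; \frac{e\,\norm{\w_0-\teacher}^2 R^2}{2\theta(2-\theta)}\cdot\frac{k^{\theta(1-\theta/4)}}{k} \;\le\; \frac{e\,\norm{\w_0-\teacher}^2 R^2\ln k}{2}\cdot\frac{e}{k} \;=\; \frac{e^2}{2}\cdot\frac{\norm{\w_0-\teacher}^2 R^2\ln k}{k},
\]
and $e^2/2 < 5$ gives the claim.

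The only delicate point — more bookkeeping than obstacle — is admissibility of the prescribed strengths: the substitution needs $\lambda > 0$ and $\ln k \ge 1$, i.e. $k \ge 3$, and in case (ii) it needs $N$ to be a positive integer. The finitely many remaining small cases (here $k = 2$) I would dispatch by a crude direct estimate: by \cref{lem:reg_function_properties}(i) each IGD step is taken on a $\beta^{(t)}$-smooth convex function with a step size $\eta_t \le 1/\beta^{(t)}$, and by \cref{lem:reg_function_properties}(iii) $\teacher$ minimizes every surrogate, so $\norm{\w_t - \teacher}$ is nonincreasing; hence $\mathcal{L}(\w_k) = \tfrac1{2M}\sum_m \norm{\X_m(\w_k-\teacher)}^2 \le \tfrac12 R^2\norm{\w_0-\teacher}^2$, which is within a constant factor of the target bound for any fixed small $k$. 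I expect essentially all of the write-up effort to sit in these boundary checks and in the rounding discussion for $N$; the main estimate is the one-line substitution above.
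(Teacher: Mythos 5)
Your proposal is correct and follows essentially the same route as the paper's proof: substitute the prescribed strengths into \cref{lem:general_fixed_parameters}, observe that both choices reduce the key quantity to $1/\ln k$, and simplify $k^{1/\ln k}=e$ to extract the constant. You are in fact somewhat more careful than the paper, which silently glosses over the $k=2$ boundary (where $\lambda=R^2(\ln k-1)<0$ and the prescribed $N$ is undefined) and the integrality of $N$, and whose intermediate claim ``$e^{2-\frac{1}{4\ln k}}/(2-\frac{1}{\ln k})\le 5$'' only holds after including the extra factor of $\tfrac12$ in the denominator.
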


\begin{remark}[Extension to without replacement orderings]\label{rem:ext_to_wor}
The rates in \cref{lem:general_fixed_parameters,cor:near_optimal_fixed_parameters} extend to random orderings without replacement; see \cref{app:fixed_bounds_proofs} for details.
\end{remark}
This marks a significant improvement over the $\bigO(1/k^{1/4})$ rate established by \citet{evron2025better} for the unregularized scheme. 
By tuning the regularization strength, we gain control over the smoothness of the surrogate losses $\freg$ and $\fbud$ in \cref{reduc:regularized_to_sgd,reduc:early_to_sgd}, allowing us to attain the $\bigO(\log k / k)$ rate that is optimal within the SGD framework used in their analysis.
In contrast, their unregularized scheme lacked this flexibility, which made achieving such rates considerably more difficult and potentially out of reach.
A similar rate can also be derived from the last-iterate bounds of \citet{varre2021last}, as the smoothness induced by our choice of regularization falls within the applicable regime of their results.

While the rate we obtained in the corollary is closer to the lower bound of $\Omega(1/k)$, a gap remains. 
This leaves an open question: \emph{can regularization be used to match the known lower bound}? In the next section, we develop techniques to answer this question.

\pagebreak

\subsection{Optimal rates via increasing regularization}
\label{sec:optimal-rates}

We present the first result in continual linear regression that achieves the optimal rate for the last iterate, matching the known lower bound. 
This is obtained by employing a \emph{schedule} in which the regularization strength increases over time. 
We discuss these findings and their connections to prior work in \cref{sec:discussion}. All proofs for this subsection are provided in \cref{app:scheduled_bounds_proofs}.
\begin{theorem}[Optimal rates for increasing regularization]
\label{thm:optimal_scheduled_parameters}
Assume a random with-replacement ordering over jointly realizable tasks.
Consider either \cref{schm:regularized} or \cref{schm:early_stopped} with the following time-dependent schedules:
\begin{enumerate}[label=(\roman*), leftmargin=.7cm, itemindent=0cm, labelsep=0.2cm, topsep=-2pt]
\item
\textbf{Scheduled coefficient:}
For \cref{schm:regularized}, set regularization coefficient 
$\displaystyle
\lambda_t = \frac{13 R^2}{3} \cdot \frac{k+1}{k-t+2}$;

\item
\textbf{Scheduled budget:}\\
For \cref{schm:early_stopped}, choose step sizes $\gamma_t \in (0, 1/R^2)$ and set budget 
$\displaystyle
N_t = \frac{3}{13 \gamma_t R^2} \cdot \frac{k - t+2}{k+1}$;
\end{enumerate}
Then, under either \cref{schm:regularized} or \cref{schm:early_stopped}, the expected loss after $k \geq 2$ iterations is bounded as:
\[
\mathbb{E}_{\tau}\mathcal{L}(\w_k) 
\leq 
\frac{20 \norm{\w_0 - \teacher}^2 R^2}{k+1}.
\]
\end{theorem}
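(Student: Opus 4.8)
I would begin by passing through \cref{reduc:regularized_to_sgd,reduc:early_to_sgd}: under either \cref{schm:regularized} or \cref{schm:early_stopped}, the iterates coincide with those of \cref{schm:igd} run on the surrogates $\freg(\cdot;m)$ or $\fbud(\cdot;m)$, and \cref{lem:reg_function_properties} tells us these are convex, $\beta^{(t)}$-smooth, share the minimizer $\teacher$ with value $0$, and bound the per-task excess loss from both sides. Under a random with-replacement ordering each IGD step is an unbiased stochastic gradient step (over the uniform draw of $\tau_t$) on the population surrogate $\bar f^{(t)}\triangleq\frac1M\sum_m f^{(t)}(\cdot;m)$, so the task reduces to a last-iterate analysis of SGD in the interpolation regime, but with a \emph{time-varying} smoothness $\beta^{(t)}$ that is a design parameter tied to the regularization strength. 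Since the step size $\eta_t$ in the reduction is free, I would fix $\eta_t\equiv1$; then $\beta^{(t)}$ equals the scalar ``contraction level'' $c_t\triangleq R^2/(R^2+\lambda_t)$ for \cref{schm:regularized} (respectively $1-(1-\gamma_t R^2)^{N_t}$ for \cref{schm:early_stopped}), and \cref{lem:reg_function_properties}(ii) reads $\lambda_t\,f^{(t)}(\w;m)\le\mathcal{L}(\w;m)\le\tfrac{R^2}{c_t}\,f^{(t)}(\w;m)$. Thus the analysis depends on the schedule only through the scalar profile $(c_t)_t$, and both schemes are treated at once.

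\paragraph{One-step inequalities.}
The standard expansion of $\norm{\w_t-\teacher}^2$, combined with convexity and $f^{(t)}(\teacher;\cdot)=0$ (lower-bounding the inner product by $f^{(t)}(\w_{t-1};\tau_t)$) and $\beta^{(t)}$-smoothness with interpolation (upper-bounding the squared gradient by $2\beta^{(t)}f^{(t)}(\w_{t-1};\tau_t)$), followed by the conditional expectation over $\tau_t$ and the lower sandwich, gives $\mathbb{E}\norm{\w_t-\teacher}^2\le\mathbb{E}\norm{\w_{t-1}-\teacher}^2-\tfrac{2c_t(1-c_t)}{R^2}\,\mathbb{E}\mathcal{L}(\w_{t-1})$. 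Separately, since each $\mathcal{L}(\cdot;m)$ is $R^2$-smooth with minimum $0$, the quadratic upper bound from smoothness together with the upper sandwich controls the one-step change of the expected loss, $\mathbb{E}\mathcal{L}(\w_t)\le(1+\bigO(c_t))\,\mathbb{E}\mathcal{L}(\w_{t-1})$, the cross-term being handled by Cauchy--Schwarz using that one surrogate step contracts the error by at most $c_t$. So a step may transiently increase the loss, but only by a relative $\bigO(c_t)$, vanishing as the regularization grows.

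\paragraph{Last iterate: the core step.}
The crux is that $\mathbb{E}\mathcal{L}(\w_t)$ need not be monotone (a step on $f^{(t)}$ can rotate the error into higher-curvature directions), so summing the first inequality only controls a $c_t$-weighted average of the losses, which---as $c_t$ decreases in $t$---is dominated by early iterates rather than by the last one. I would overcome this with a Lyapunov potential $V_t=\mathbb{E}\norm{\w_t-\teacher}^2+a_t\,\mathbb{E}[g(\w_t)]$, where $g$ is the \emph{fixed} reference quadratic $\bar f^{(1)}$, whose Hessian has spectral norm $\le c_1=\bigO(1)$, so that $\mathcal{L}(\w_k)\le\tfrac{R^2}{c_1}\,g(\w_k)$ with $1/c_1=\bigO(1)$ by \cref{lem:reg_function_properties}(ii)---using $\bar f^{(1)}$ rather than $\bar f^{(k)}$ here is essential, as the latter's sandwich factor $\tfrac{R^2}{c_k}=\Theta(R^2 k)$ would blow up at the final, heavily-regularized step. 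The nonnegative weights $(a_t)$ are picked so that the guaranteed decrease of $\mathbb{E}\norm{\w_t-\teacher}^2$ dominates both the ($\bigO(c_t)$-relative) transient change of $a_t\,\mathbb{E}[g(\w_t)]$ and the drift $(a_t-a_{t-1})$, keeping $V_t$ non-increasing; telescoping then yields $a_k\,\mathbb{E}[g(\w_k)]\le V_0=\bigO(\norm{\w_0-\teacher}^2)$ (since $a_0=\bigO(1)$), hence $\mathbb{E}\mathcal{L}(\w_k)=\bigO(R^2\norm{\w_0-\teacher}^2/a_k)$. Requiring this weight recursion to close with $a_0=\bigO(1)$ while reaching $a_k=\Omega(k)$ is precisely what forces $c_t\asymp\tfrac{k-t+2}{k+1}$, i.e.\ $\lambda_t\propto\tfrac{k+1}{k-t+2}$ (equivalently $N_t\propto\tfrac{k-t+2}{k+1}$, via $1-(1-\gamma_t R^2)^{N_t}\approx N_t\gamma_t R^2$)---the schedule whose effective step decays linearly toward $1/k$---and gives the $\bigO(1/k)$ rate. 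This ``generalized SGD for time-varying objectives'' step is, I expect, the main obstacle; the hardest piece within it is bounding the cross-terms between the current task's update operator and the fixed reference geometry, since operators from distinct tasks need not commute, which is exactly where the explicit quadratic structure of the surrogates (rather than black-box smoothness) must be exploited.

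\paragraph{Finishing.}
It remains to substitute the exact schedules of the theorem, verify the conditions of the potential argument, and track absolute constants; the numerical factor $\tfrac{13}{3}$ in $\lambda_t$ (resp.\ $N_t$) is the one that keeps $c_t$ small enough for the $1+\bigO(c_t)$ loss-increase bound, keeps the weight recursion solvable with $\bigO(1)$ initial weight, and minimizes the prefactor, so chasing the constants produces $\mathbb{E}_\tau\mathcal{L}(\w_k)\le\tfrac{20\norm{\w_0-\teacher}^2 R^2}{k+1}$ for both schemes. The without-replacement case follows along the same lines.
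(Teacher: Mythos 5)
Your setup matches the paper's: you correctly route both schemes through \cref{reduc:regularized_to_sgd,reduc:early_to_sgd} and \cref{lem:reg_function_properties}, your one-step descent inequality $\E\norm{\w_t-\teacher}^2 \le \E\norm{\w_{t-1}-\teacher}^2 - \tfrac{2c_t(1-c_t)}{R^2}\E\mathcal{L}(\w_{t-1})$ is sound, and you correctly identify that the whole difficulty is a last-iterate statement for SGD with a linearly decaying effective step in the interpolation regime. But your core step --- the Lyapunov potential $V_t=\E\norm{\w_t-\teacher}^2+a_t\,\E[g(\w_t)]$ with $g=\bar f^{(1)}$ and $a_k=\Omega(k)$ --- has a gap that I do not believe can be closed. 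The per-step change of $g$ along the update $\w_t-\teacher=(\I-\A^{(t)}_{\tau_t})(\w_{t-1}-\teacher)$ contains the sign-indefinite cross term $-v^\top\bigl(\bar\A^{(1)}\A^{(t)}_{\tau_t}+\A^{(t)}_{\tau_t}\bar\A^{(1)}\bigr)v$, which for non-commuting task matrices can only be bounded via Cauchy--Schwarz by a quantity of order $c_t\,\mathcal{L}(\w_{t-1})/R^2$ (using $\smallnorm{\A^{(t)}_{\tau_t}v}^2\le 2c_t f^{(t)}\le 2c_t^2\mathcal{L}/R^2$ and $g\le \mathcal{L}/\lambda_1$). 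This is the \emph{same order} as the guaranteed decrease $\tfrac{2c_t(1-c_t)}{R^2}\E\mathcal{L}(\w_{t-1})$ of the distance term. Hence keeping $V_t$ non-increasing forces $a_t\,c_t\,\mathcal{L}/R^2\lesssim c_t\,\mathcal{L}/R^2$, i.e.\ $a_t=\bigO(1)$, and the telescoped conclusion $\E[g(\w_k)]\le V_0/a_k$ degenerates to the trivial bound $\E\mathcal{L}(\w_k)=\bigO(R^2\norm{\w_0-\teacher}^2)$. Splitting the cross term differently (Young's inequality, absorbing part into $g$ itself) trades the additive $\mathcal{L}$-term for a multiplicative growth factor $(1+\Theta(a_tc_t))$ on the $g$-part, and the recursion $a_t-a_{t-1}\lesssim c_t(C - a_t^2)$ again caps $a_t$ at a constant. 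You flag the non-commuting cross terms as ``the hardest piece'' to be handled by ``exploiting the quadratic structure,'' but the data matrices are arbitrary, so there is no structure beyond what Cauchy--Schwarz already uses; this is precisely where the argument breaks.

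The paper circumvents this obstruction with a different mechanism (\cref{thm:relaxed_sgd}, proved via \cref{thm:timevarying_frenchmen,lem:c_t}): a Zamani--Glineur-type shifted-comparator argument. Instead of adding a function-value term to the potential, it tracks $v_t^2\norm{\w_t-\z_t}^2$ for a \emph{moving} reference $\z_t$ that drifts from $\teacher$ toward a $v$-weighted average of the iterates, telescopes, and then uses Jensen plus the two-sided sandwich of \cref{lem:reg_function_properties}(ii) to compare surrogate gaps across different times $t$. The weight on the final excess loss is the coefficient $c_{k+1}=\eta_{k+1}v_{k+1}^2-(1-\eta_{k+1}\beta)(v_{k+1}-v_k)\eta_{k+1}v_{k+1}\ge \eta/(k+1)$, whose positivity (together with $c_t\ge 0$ for all $t$ and $v_0=\bigO(1/k)$) is the content of \cref{lem:c_t}; the last surrogate is defined to equal the true loss so that this coefficient multiplies $\E\mathcal{L}(\w_k)-\mathcal{L}(\teacher)$ directly. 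No function-value term ever enters the potential, so the non-commuting cross terms you worry about never arise. If you want to salvage a proof along your lines, you would need to replace the fixed-reference quadratic $g$ in your potential by this moving comparator; as written, your plan does not yield the claimed $\bigO(1/k)$ rate.
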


\paragraph{Proof technique: Last-iterate analysis for time-varying objectives.}
Establishing the theorem requires a novel last-iterate bound in stochastic optimization, as no existing analysis yields a $\bigO(1/k)$ guarantee for last-iterate convergence in the realizable setting.
A standard path to such rates is to use a decreasing step-size schedule. 
However, our setting is more nuanced: the quantities we control are the regularization strengths---\ie 
the regularization coefficient or step budget in \cref{schm:regularized}~or~\ref{schm:early_stopped}---which inherently modify the surrogate objectives $\freg$ and $\fbud$ in \cref{reduc:regularized_to_sgd,reduc:early_to_sgd}.

To handle this, we analyze SGD applied to \emph{time-varying objectives}—a generalization of standard SGD. 
For this analysis to yield meaningful guarantees, the evolving surrogates must closely approximate the original loss. 
Indeed, this condition holds, as verified by \cref{lem:reg_function_properties}, thus enabling the application of the next lemma.\footnote{While \cref{thm:relaxed_sgd} serves \cref{thm:optimal_scheduled_parameters} in our least-squares setting, it is stated more generally to apply to a broader family of objectives.
}
\begin{lemma}[SGD bound for time-varying distributions]
 \label{thm:relaxed_sgd}
 Assume $\tau$ is a random with-replacement ordering over $M$ jointly realizable convex and $\beta$-smooth loss functions $f(\cdot; m)\colon \R^d \to \R$. Define the average loss $\F(\w) \eqq \E_{m\sim \tau} f(\w; m)$.
Let $k\geq 2$, 
and suppose $\cbr{\tildft (\cdot; m) \mid t\in[k], m\in [M]}$ are time-varying surrogate losses that satisfy:
\begin{enumerate}[label=(\roman*), leftmargin=.7cm, itemindent=0cm, labelsep=0.2cm]
    \item Smoothness and convexity: $\tildft (\cdot; m)$ are $\beta$-smooth and convex for all $m\in [M],t\in [k]$\,;

    \item There exists a weight sequence $\nu_1, \ldots, \nu_k$ such that for all $m\in [M],t\in [k], \w\in \R^d$:
    \begin{align*}
    \tildft(\w; m) - \tildft(\teacher; m) 
    \leq f (\w; m) - f (\teacher; m) 
    \leq (1 \!+\! \nu_t\beta)(\tildft(\w; m) - \tildft(\teacher; m))\,;
    \end{align*}

    \item Joint realizability:
    \begin{align*}
        \teacher \in \cap_{t\in [k]}\cap_{m\in [M]} \argmin_\w \tildft(\w; m);
        \quad 
        \forall m\in [M], t\in[k], \tildft(\teacher; m) = f(\teacher; m)\,.
    \end{align*}
\end{enumerate}
Then, IGD (\cref{schm:igd}) with a diminishing step size that satisfies 
$\nu_t \leq \eta_t=\eta\br{\frac{k-t+2}{k+1}},~\forall t\in[k]$ for some $\eta\leq 3/(13 \beta)$,
guarantees the following expected loss bound:
    \begin{align*}
    \E \F(\w_k) - \F(\teacher) \leq \frac{9}{ 2\eta (k+1)}\norm{\w_0 - \teacher}^2\,.
    \end{align*}
    In particular, for $\eta = \frac{3}{13 \sm}$ we obtain,
    \begin{align*}
    \E \F(\w_k) - \F(\teacher)
    \leq
    \frac{20 \sm \norm{\w_0 - \teacher}^2}{k+1}
    \,.
    \end{align*}
\end{lemma}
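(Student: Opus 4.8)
\textbf{Proof proposal for \cref{thm:relaxed_sgd}.}

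The plan is to run a one-step potential argument on the iterates of IGD applied to the time-varying surrogates $\tildft$, tracking the potential $\norm{\w_t - \teacher}^2$ together with a weighted sum of function-value gaps, and then telescoping with the specific schedule $\eta_t = \eta\br{(k-t+2)/(k+1)}$. First I would fix the ordering $\tau$ and condition on $\w_{t-1}$; since $\tildft(\cdot;\tau_t)$ is convex and $\beta$-smooth with $\teacher$ as a common minimizer (assumption (iii)), the standard smooth-convex one-step inequality gives
\[
\norm{\w_t - \teacher}^2 \le \norm{\w_{t-1} - \teacher}^2 - 2\eta_t\br{\tildft(\w_{t-1};\tau_t) - \tildft(\teacher;\tau_t)} + \eta_t^2 \norm{\nabla \tildft(\w_{t-1};\tau_t)}^2,
\]
and $\beta$-smoothness plus $\tildft(\teacher;\tau_t)=\min$ yields $\norm{\nabla\tildft(\w_{t-1};\tau_t)}^2 \le 2\beta\br{\tildft(\w_{t-1};\tau_t)-\tildft(\teacher;\tau_t)}$. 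Hence the per-step contraction reads
\[
\norm{\w_t - \teacher}^2 \le \norm{\w_{t-1} - \teacher}^2 - 2\eta_t(1-\eta_t\beta)\br{\tildft(\w_{t-1};\tau_t) - \tildft(\teacher;\tau_t)}.
\]
Because $\eta \le 3/(13\beta)$ and $\eta_t \le \eta$, the factor $1-\eta_t\beta$ is bounded below by a positive constant, so each step makes definite progress proportional to the surrogate gap.

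Next I would take expectation over $\tau_t \sim \Unif([M])$ to replace the instantaneous surrogate gap by $\tildFt(\w_{t-1}) - \tildFt(\teacher)$ where $\tildFt(\w) \eqq \E_{m}\tildft(\w;m)$, and then invoke assumption (ii): the two-sided sandwich $\tildft(\w;m)-\tildft(\teacher;m) \le f(\w;m)-f(\teacher;m) \le (1+\nu_t\beta)(\tildft(\w;m)-\tildft(\teacher;m))$ lets me trade the surrogate gap for the true gap $\F(\w_{t-1})-\F(\teacher)$ up to the factor $(1+\nu_t\beta)^{-1}$. Using $\nu_t \le \eta_t$ and absorbing constants, I get a recursion of the schematic form
\[
\E\norm{\w_t - \teacher}^2 \le \E\norm{\w_{t-1}-\teacher}^2 - c\,\eta_t\,\E\br{\F(\w_{t-1}) - \F(\teacher)}
\]
for an explicit constant $c$ (coming from $1-\eta_t\beta$ divided by $1+\nu_t\beta$). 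This is now a classical SGD-in-the-realizable-regime recursion, but with the nonstandard twist that I want a \emph{last-iterate} rather than averaged bound.

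The main obstacle — and the step I expect to require the most care — is squeezing a last-iterate $\bigO(1/k)$ guarantee out of this recursion using only the linearly-diminishing schedule $\eta_t \propto (k-t+2)/(k+1)$. The natural route is the weighting technique behind last-iterate SGD bounds (in the spirit of Zamani–Glineur / Liu–Zhou or the analysis cited from \citet{evron2025better,varre2021last}): introduce weights $w_t$ that increase toward the end of the horizon, form the telescoped inequality $\sum_t w_t \eta_t \E[\F(\w_{t-1})-\F(\teacher)] \le w_1\norm{\w_0-\teacher}^2 + (\text{boundary terms})$, and then use convexity/monotonicity of the function-value sequence — specifically that $\E[\F(\w_t)-\F(\teacher)]$ is \emph{nonincreasing} along the path, which itself follows from the contraction inequality above since each step only decreases the distance-to-optimum and (via smoothness) the suboptimality — to lower-bound the weighted sum by $\E[\F(\w_k)-\F(\teacher)]\sum_t w_t\eta_t$. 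Choosing $w_t$ so that $w_t\eta_t$ is roughly constant (i.e.\ $w_t \propto (k+1)/(k-t+2)$, which is summable-with-a-log but the $\eta_t$ factor kills the log) makes $\sum_t w_t\eta_t = \Theta(k)$ while keeping the boundary terms $\bigO(1)\cdot\norm{\w_0-\teacher}^2$, yielding $\E[\F(\w_k)-\F(\teacher)] \le \frac{9}{2\eta(k+1)}\norm{\w_0-\teacher}^2$. The final displayed bound with constant $20\sm$ then follows by plugging $\eta = 3/(13\beta)$ and checking $\tfrac{9}{2}\cdot\tfrac{13}{3\cdot 2} \le 20$. The delicate accounting is in matching the schedule coefficients ($13/3$ vs.\ $3/13$) so that the constant $1-\eta_t\beta$ and the weight telescoping produce exactly the claimed $9/2$; I would set those constants symbolically first and only fix numerics at the end.
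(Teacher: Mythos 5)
Your opening moves (the one-step contraction, the self-bounding gradient inequality $\norm{\nabla\tildft(\w_{t-1};\tau_t)}^2\le 2\beta(\tildft(\w_{t-1};\tau_t)-\tildft(\teacher;\tau_t))$, and trading surrogate gaps for true gaps via condition (ii)) match the paper's setup. But the step that actually delivers the last-iterate rate contains a genuine gap: you lower-bound the weighted sum $\sum_t w_t\eta_t\,\E[\F(\w_{t-1})-\F(\teacher)]$ by $\E[\F(\w_k)-\F(\teacher)]\sum_t w_t\eta_t$ by asserting that $\E[\F(\w_t)-\F(\teacher)]$ is nonincreasing, "since each step only decreases the distance-to-optimum and (via smoothness) the suboptimality." This is false. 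Contraction of $\norm{\w_t-\teacher}^2$ does not imply decrease of $\F(\w_t)$: a gradient step on $f(\cdot;\tau_t)$ can increase the loss on every other component $f(\cdot;m)$, and for a general convex $\F$ moving closer to a minimizer can still increase the function value. If expected suboptimality were monotone, every average-iterate bound would immediately yield a last-iterate bound and the entire last-iterate SGD literature (including the $\Omega(\log)$ separations it addresses) would collapse. So the argument as written does not go through.

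The paper closes this hole differently (Theorem~\ref{thm:timevarying_frenchmen}): it runs the Zamani--Glineur construction, introducing auxiliary reference points $\z_t=\tfrac{v_{t-1}}{v_t}\z_{t-1}+(1-\tfrac{v_{t-1}}{v_t})\w_t$ with $\z_0=\teacher$, tracking $\norm{\w_t-\z_t}^2$, and expanding $\z_t$ as a convex combination of $\teacher,\w_1,\dots,\w_t$ via Jensen. After reordering the double sum, the last iterate's gap appears with an explicitly positive coefficient $c_{k+1}\ge\eta/(k+1)$ while all other coefficients $c_t$ are shown nonnegative by a separate technical computation (Lemma~\ref{lem:c_t}) tied to the specific weights $v_t=\tfrac{2}{k-t+1}+\tfrac1k$ and the $3/(13\beta)$ step-size cap --- no monotonicity of function values is ever used. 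A second subtlety your sketch skips: because the surrogates are time-varying, the telescoped expression mixes $\tildFt$ evaluated at \emph{earlier} iterates $\w_s$ ($s<t$), and one must convert $\tildFt(\w_s)-\tildFt(\teacher)$ into $\tildF{s}(\w_s)-\tildF{s}(\teacher)$; this is precisely where the two-sided sandwich of condition (ii) together with $\nu_s\le\eta_s$ enters, contributing the $(1-\eta_s\beta)$ factors inside $c_t$. To repair your proof you would need to replace the monotonicity claim with this (or an equivalent) weighting construction and carry out the cross-time surrogate comparison.
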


\pagebreak
\subsection{\emph{Do not forget forgetting:} Extension to seen-task loss}
\label{sec:forgetting}
We now take the opportunity to briefly revisit our results through the lens of other quantities of interest beyond the average (training) loss defined in \cref{def:train_loss}.  

Continual (or lifelong) learning aims to develop systems that accumulate expertise over time---learning from new experiences without \emph{forgetting} previous ones \citep{mccloskey1989catastrophic,french1999catastrophic}.
While mitigating forgetting has long been a central goal in continual learning, practitioners often monitor it indirectly using “positive” metrics, such as average accuracy or performance \citep{Rebuffi2017iCaRL,kirkpatrick2017overcoming,li2017lwf}.

In theoretical work, however, it is essential to define such quantities explicitly.
\citet{doan2021NTKoverlap} defined \linebreak
forgetting at time $k$ as the drift in model \emph{outputs}, 
\eg
$\frac{1}{k}\sum_{t=1}^{k}
\norm{\X_{\tau_{t}}(\w_{k}-\w_{t})}^2$.
Nevertheless, this can be large even if the model \emph{improves} between times $t$ and $k$---that is, in the presence of positive \emph{backward transfer}.

An alternative forgetting definition, used, 
\eg by \citet{evron2022catastrophic,evron2025better,lin2023theory}, 
is \emph{loss degradation}:
$
\frac{1}{k}\sum_{t=1}^{k}
\mathcal{L}\fprn{\w_k; \tau_t} - \mathcal{L}\fprn{\w_t; \tau_t}
=
\frac{1}{2k}\sum_{t=1}^{k}
\norm{\X_{\tau_{t}}\w_{k}-\y_{\tau_{t}}}^2
-\norm{\X_{\tau_{t}}\w_{t}-\y_{\tau_{t}}}^2
$.
Commonly, such works \citep{evron2022catastrophic,goldfarb2024theJointEffect} assume joint realizability (as we do), and also that the model is trained \emph{to convergence} at each step, achieving zero loss on the current task.  
In that case, forgetting reduces to:
$
\frac{1}{2k}\sum_{t=1}^{k}
\norm{\X_{\tau_{t}}\w_{k}-\y_{\tau_{t}}}^2
$,
which is always non-negative and can be meaningfully upper bounded.

However, in schemes like our regularized approaches (\cref{schm:regularized,schm:early_stopped}), where convergence is not achieved despite realizability, 
loss degradation can be \emph{negative} due to backward transfer.  
As a result, it is sensitive to worst-case analytical ``manipulations'' and difficult to analyze theoretically.

\medskip 

We introduce a more suitable alternative: the \emph{seen-task loss}, which quantifies performance on previously encountered tasks.  
Importantly, this quantity is always non-negative and decreases in the presence of desirable backward transfer.
\begin{definition}[Seen-task loss]
\label{def:seen_loss}
Let $\tau : \cnt{k} \to \cnt{M}$ be the task ordering, and let $\w_k$ be the iterate (parameters) after $k$ steps.
The \emph{seen-task loss} at step $k$ is defined as
$$
\mathcal{L}_{1{:}k}(\w_{k}) 
\eqdef \frac{1}{k} \sum_{t=1}^{k} \mathcal{L}(\w_k; \tau_t) 
\,.
$$
\end{definition}
In \cref{app:scheduled_bounds_proofs}, we extend \cref{thm:optimal_scheduled_parameters} from the \emph{average} loss to the \emph{seen-task} loss.  
Specifically, we show that increasing regularization also achieves an $\bigO(1/k)$ rate for the expected seen-task loss.
\linebreak
But, \emph{is this the optimal rate for seen-task loss?}

The next lemma shows that, at least under explicit isotropic regularization (\cref{schm:regularized}), it \emph{is} optimal. Proof in \cref{app:lower_bounds}.
More precisely, under random task orderings, no regularization schedule yields a rate faster than $\bigO(1/k)$ for the expected seen-task loss.
In \cref{sec:discussion}, we discuss how non-isotropic regularization---at the cost of additional space complexity---can ensure a seen-task loss of zero.

\begin{lemma}[Lower bound for seen-task loss of the isotropic \cref{schm:regularized} under any coefficient sequence]\label{lem:regularized_scheme_seen_lb}
    For any $d \geq 2$, initialization $\w_0 \in \reals^d$, and regularization coefficient sequence $\lambda_1, \ldots, \lambda_k \geq 0$, there exists a set of jointly realizable linear regression tasks $\left\{\prn{\X_m,\y_m}\right\}_{m=1}^{M}$ such that, under a with-replacement random task ordering, \cref{schm:regularized} incurs seen-task loss $\mathcal{L}_{1:k}(\w_k) = \Omega(1/k)$ with probability 
    at least $1/10$.
\end{lemma}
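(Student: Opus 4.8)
The plan is to exhibit an explicit hard instance --- the simplest nontrivial one --- and show that for \emph{every} choice of the regularization coefficients $\lambda_1,\dots,\lambda_k\ge 0$, a random ordering leaves the learner unable to drive the seen-task loss below $\Omega(1/k)$ with constant probability. The natural construction is $M=k$ (or $M$ proportional to $k$) rank-one tasks in $d=2$: take orthonormal directions and let each task $m$ be $\X_m = \e_1^\top$ or $\e_2^\top$ appropriately, with a common teacher $\teacher$ chosen so that realizability holds. Concretely, one can use two ``types'' of task, say task $A$ constraining the first coordinate to $\langle\w,\e_1\rangle = \langle\teacher,\e_1\rangle$ and task $B$ constraining the second coordinate, and clone each type many times so that $M\to\infty$ but only two distinct constraints exist --- this is exactly the $\Theta(1/k)$-style construction referenced in footnote (*) of \cref{tab:comparison}. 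The point of random \emph{with-replacement} sampling is then that with constant probability one of the two constraint \emph{directions} is never sampled in the first, say, half of the iterations, or more precisely that some coordinate is touched only early and then never refreshed.

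The key steps, in order. (1) Fix the construction and invoke \cref{reduc:regularized_to_sgd} to replace \cref{schm:regularized} by IGD on the surrogates $\freg(\cdot;m) = \tfrac12\|\sqrt{\A_m^{(t)}}(\w - \X_m^+\y_m)\|^2$; since the tasks are rank-one and axis-aligned, each $\A_m^{(t)}$ is diagonal and the IGD update on iteration $t$ only shrinks the coordinate hit by $\tau_t$ toward the teacher's value in that coordinate, leaving the other coordinate \emph{untouched}. Thus the iterate's error in each coordinate evolves by a simple product of per-step contraction factors $\rho_s = 1 - \eta_s\beta_s$ (with $\beta_s$ the surrogate smoothness at step $s$), taken only over the steps $s$ at which that coordinate's task was sampled. (2) The seen-task loss at step $k$ is, up to constants, $\tfrac1k\sum_{t}\|\X_{\tau_t}\w_k - \y_{\tau_t}\|^2 = \tfrac1k\sum_t (\text{error of }\w_k\text{ in the coordinate of }\tau_t)^2$. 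If some coordinate, say coordinate $2$, is sampled in a set $T_2\subseteq[k]$ of iterations, then for every $t\in T_2$ the summand is the \emph{squared coordinate-2 error of the final iterate} $\w_k$, which is the initial error times $\prod_{s\in T_2}\rho_s$ --- and crucially this factor is the \emph{same} for all $t\in T_2$, so the contribution to the seen-task loss is $\tfrac{|T_2|}{k}\cdot(\text{that squared error})$. (3) Now run the probabilistic argument: with probability bounded below by a constant (e.g. $1/10$), the random ordering is such that coordinate $2$ is sampled a constant fraction of the time \emph{but all of its samples fall in the first half of $[k]$}, or more simply: there is a constant-probability event on which $|T_2| = \Theta(k)$ and yet, no matter how the $\lambda_s$ (hence $\eta_s,\beta_s$, hence $\rho_s\in[0,1)$) are chosen, the product $\prod_{s\in T_2}\rho_s$ cannot be made small unless \emph{some other} coordinate pays --- by a symmetric/averaging argument over the two coordinates, at least one coordinate's residual stays $\Omega(1/|T|)$-ish, giving $\Omega(1/k)$ overall. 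The cleanest version: sum the seen-task loss contributions of both coordinates, lower bound each by $\tfrac{|T_i|}{k}\cdot e_{0,i}^2\prod_{s\in T_i}\rho_s^2$, and use $|T_1|+|T_2|=k$ together with a convexity/AM–GM argument on the product-of-contractions to show the total is $\Omega(1/k)$ regardless of the $\rho_s$; the adversary controlling $\{\lambda_s\}$ faces a zero-sum tradeoff and cannot win on both coordinates.

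I expect the main obstacle to be step (3): making the ``the adversary cannot shrink both coordinates'' argument rigorous and uniform over \emph{all} schedules $\lambda_1,\dots,\lambda_k$. One has to rule out adaptive-looking schedules (even though $\lambda_s$ cannot depend on $\tau$, a fixed schedule can still be cleverly front-loaded or back-loaded). The route I would take is to condition on the (random) index sets $T_1,T_2$, observe that the per-coordinate final residual is a deterministic function $g_i(\{\lambda_s\}_{s\in T_i})$ that is monotone, and then exploit that $\prod_{s\in T_i}\rho_s \ge \exp(-\sum_{s\in T_i}\eta_s\beta_s)$ with $\eta_s\beta_s \le 1$, so driving coordinate $i$'s residual to $o(1/|T_i|)$ forces $\sum_{s\in T_i}(\cdot) = \omega(\log|T_i|)$ worth of ``contraction budget'' spent on that coordinate. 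Then pick the constant-probability event on which, say, $|T_1|\ge k/3$ and the first $k/3$ iterations contain only coordinate-$2$ tasks (so no contraction budget is spent on coordinate $1$ during a $\Theta(k)$-length window) --- on this event coordinate $1$'s residual at the \emph{end} of that window is still the full initial error, and since only $\le k/3$ further steps remain total, $\prod_{s\in T_1}\rho_s$ over those remaining steps is bounded below by a constant for \emph{any} schedule (a product of $\le k/3$ factors, each $\ge$ some fixed bound, can be made small, so I instead argue via the $|T_1|/k$ multiplicity: even if coordinate $1$'s residual is tiny, $|T_1|=\Theta(k)$ copies of a residual of size $\asymp 1/|T_1| = \Theta(1/k)$ already sum to $\Theta(1/k)$; and the residual cannot be driven below $\Theta(1/|T_1|)$ because $\prod(1-\eta_s\beta_s)\ge \prod e^{-\eta_s\beta_s}$ and $\sum \eta_s\beta_s$ is controlled by the number of steps, not arbitrarily large). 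Assembling this, $\mathcal{L}_{1:k}(\w_k)\ge c/k$ on an event of probability $\ge 1/10$ for a universal constant $c$, which is the claim. The remaining pieces --- choosing the construction so that \cref{asm:realizability} holds, verifying the probability of the coordinate-partition event via a standard balls-in-bins / coupon-collector estimate, and tracking the constants --- are routine.
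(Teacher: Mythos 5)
There is a genuine gap, and it lies exactly where you predicted: step (3). Your construction uses two \emph{orthogonal}, axis-aligned constraint directions, and your lower bound on the final residual rests on the claim that the per-step contraction factors $\rho_s$ cannot all be driven to zero because $\sum_s \eta_s\beta_s$ is ``controlled by the number of steps.'' This fails: the lemma allows $\lambda_t = 0$ (and even for small positive $\lambda_t$ the contraction factor of a regularized step on a unit-norm rank-one task is $\lambda_t/(\lambda_t+1)$, which is arbitrarily close to $0$). With $\lambda_t \equiv 0$ each visit to an axis-aligned task \emph{exactly} solves its coordinate, so after both task types have been sampled once --- which happens with probability $1-2\cdot 2^{-k}$ in your two-type construction --- the iterate equals the teacher and the seen-task loss is exactly $0$. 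There is no zero-sum tradeoff between the two coordinates to exploit; the adversary wins on both simultaneously. (Also, the inequality $1-x\ge e^{-x}$ you invoke goes the wrong way.)

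The paper's proof avoids this by making the two directions \emph{non-orthogonal}: $M-1$ copies of the task with direction $\e_2$ and a single rare task with direction $\x=(\sqrt{1-\alpha^2},\alpha)^\top$, $\alpha=1/\sqrt2$, with teacher $\0$ and $\w_0=(1,0)^\top$. The bad event is that the rare task is sampled \emph{exactly once} (probability $\ge 1/4$). A single regularized step on the rare task --- for \emph{any} $\lambda\ge 0$ --- only removes the component of the error along $\x$, so the first coordinate of $\w_k$ remains $\ge \alpha^2 = 1/2$ forever after, since the frequent task never touches it. The conclusion is then a dichotomy on $\zeta=\w_k[2]$: if $|\zeta|\ge 1/\sqrt{k}$, the $k-1$ copies of the frequent task each contribute $\frac12\zeta^2$, giving $\Omega(1/k)$; otherwise the rare task's loss $\frac14(1-\tfrac{1}{2(\lambda_{t_0}+1)}+\zeta)^2$ is a universal constant, and its single $1/k$-weighted appearance in the seen-task average gives $\Omega(1/k)$. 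To repair your argument you would need both of these ingredients --- non-orthogonality (so that one visit cannot resolve the rare direction's whole contribution) and the ``seen exactly once'' conditioning --- neither of which is present in your outline.
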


\section{Related work}
\label{sec:related}

\paragraph{Explicit regularization.}

Recent theoretical work on continual learning has studied the explicitly regularized scheme (\cref{schm:regularized}) in continual linear regression settings \citep{li2023fixed,zhao2024statistical,li2025memory}, with several key differences from our work.
Like we do, these papers focused on settings where labels stem from an underlying linear model.
However, they analyzed the \emph{generalization} loss given \emph{noisy} data, while we analyze the  \emph{training} loss given \emph{noiseless} data. 
Theirs may sound like a ``stronger'', more permissive setup, but comes at the price of a very restrictive assumption: the expected task covariances $\E \X_1^\top \X_1, \dots, \E \X_M^\top \X_M$ are assumed to commute.
This commutativity removes forgetting due to misaligned feature subspaces across tasks, leaving noise as the sole culprit behind any degradation. 

To minimize the expected risk under this assumption, \citet{zhao2024statistical} proposed a regularization weight matrix proportional to the sum of observed task covariances, which---like our proposed schedule---increases over time.
\emph{However, their approach is conceptually distinct from ours:} the mechanism driving their schedule exploits the commutativity assumption, which eliminates task misalignment, whereas our schedule explicitly mitigates degradation caused by such misalignment. As a result, the motivations---and guarantees---behind the two schedules are fundamentally different.

\citet{li2023fixed,li2025memory} focused exclusively on sequences of $M = 2$ tasks.
\citet{li2023fixed} derived risk bounds for isotropic regularization (\cref{schm:regularized}) and highlight a trade-off between forgetting and intransigence. 
\citet{li2025memory} demonstrated that, under additional restrictions on the data matrices, there is a trade-off between increased memory usage and the performance of regularized continual linear regression.
In all these works, performance degradation is attributed solely to label noise. In contrast, we analyzed interference that arises even in the absence of noise. Accordingly, their focus lies in a complementary regime that does not capture the challenges we address.

\paragraph{Finite step budgets.}
Two main theoretical works studied the finite budget setting (\cref{schm:early_stopped}). 
\linebreak
\citet{jung2025convergence} analyzed continual linear \emph{classification} under cyclic and random orderings. 
For cyclic orderings, they provided convergence rate for the loss;
and, for random orderings, they only proved asymptotic convergence.
Moreover, classification settings can yield different results and conclusions compared to regression settings
\citep[see ][]{evron23continualClassification}.
\citet{zhao2024statistical} analyzed both regularized and budgeted continual linear regression schemes under restrictive assumptions, showing that a carefully constructed, task-dependent regularization matrix can force the iterates of the regularized scheme to match those of the budgeted one. 
This alignment, however, requires precise knowledge of task covariances and breaks under standard isotropic $\ell_2$ regularization. In contrast, our unified reduction of both schemes to IGD (\cref{sec:reductions}) avoids this limitation entirely.

\paragraph{Proximal method.}
\citet{cai2025lastIterate} analyzed the Incremental Proximal Method (IPM), corresponding to isotropic $\ell_2$ regularization, under \emph{cyclic} orderings. 
They provided convergence rates for convex smooth or convex Lipschitz losses with bounded noise, but their guarantees only become meaningful after multiple full sweeps (or epochs) over the task sequence. 
In contrast, we analyze the \emph{random} orderings and establish nontrivial—and even \emph{optimal}—guarantees without requiring repeated passes. 
See \cref{sec:discussion} for a comparison with our regularization schedules.
\section{Discussion}
\label{sec:discussion}

Our work reduces regularized continual linear regression---whether using explicit or implicit regularization---to incremental gradient descent on smooth surrogate losses.
This reduction enables last-iterate SGD analysis under random task orderings and yields significantly improved convergence rates. 
Specifically, we show that a suitable fixed regularization strength achieves a near-optimal rate of $\bigO(\hfrac{\log k}{k})$, while a carefully chosen increasing strength schedule achieves, for the first time, an $\bigO(\hfrac{1}{k})$ rate, matching the known lower bound.
See \cref{fig:reductions,tab:comparison} for a summary of the reductions and improved rates.

\paragraph{Regularization strength scheduling.}
In \cref{sec:optimal-rates}, we derived an optimal regularization schedule in which the regularization strength increases with each task.  
This implies that the parameters change progressively less over time.  
Interestingly, such an attenuation in ``synaptic plasticity'' is also observed in biological systems: the rate at which synapses grow or shrink in response to sensory stimulation \citep{Voglewede2019Reduced} or motor learning \citep{Huang2020Learning} significantly decreases over time as the brain matures \citep{Navakkode2024Neural}.

In continual learning, many papers practically set a fixed regularization coefficient $\lambda$ through simple hyperparameter tuning.  
However, non-isotropic weighting schemes often encode an implicit scale in the weighting matrices they compute.  
Methods such as EWC \citep{kirkpatrick2017overcoming} and Path Integral \citep{zenke2017continual} are particularly sensitive to $\lambda$, as their weighting matrices tend to have low magnitude early in training and may increase over time \citep[see][]{delange2021continual}.  
This initially low regularization strength was considered problematic by some \citep[e.g.,][]{chaudhry2018riemannian} and was even canceled algorithmically, as it allows excessive plasticity in early tasks.  
Yet, one may argue that high plasticity is desirable in the beginning of \emph{long} task sequences, where substantial expertise remains to be acquired.  
Our analysis in \cref{thm:optimal_scheduled_parameters} supports this intuition, showing that in such cases, an increasing regularization schedule yields optimal upper bounds under random task orderings.
See also the findings and discussion in \citet{Mirzadeh2020regimes} on the effects of a decaying step size, which---as noted in our \cref{sec:setting}---corresponds to an increasing regularization strength.

Analytically, \citet{evron23continualClassification} showed that in continual linear models for binary classification, increasing the regularization coefficient can be \emph{harmful} to convergence guarantees (see their Example~3).  
However, their analysis applies only to \emph{weakly} regularized schemes (where $\lambda_t \to 0$ for all $t$), and the problematic schedule they presented increases the coefficient at a doubly-exponential rate---in contrast to our \cref{thm:optimal_scheduled_parameters} which utilizes finite, and relatively large, coefficients that increase \emph{linearly}.
\linebreak
Under \emph{cyclic} orderings over linear regression tasks, solved with explicit regularization (\cref{schm:regularized}), the analysis of \citet{cai2025lastIterate} dictates a \emph{fixed} coefficient 
$\lambda = 2MR^2\sqrt{\ln(\hfrac{k}{M})}$. 
In contrast, under \emph{random} orderings, our \emph{fixed} variant in \cref{sec:fixed} sets $\lambda = R^2(\ln k - 1)$.
While both choices grow at most logarithmically with $k$, theirs grows with the number of tasks $M$, making it less suitable for ``single-epoch'' settings---though effective in the multi-epoch regime that they studied.

To complement our analysis, \cref{app:emperical_results} empirically examines a simple two-task setup trained under the explicit-regularization scheme with random task ordering.
Interestingly, in this non-worst-case setting, the optimal fixed regularization, obtained by minimizing the expected loss after $k$ steps, exhibits an approximately \emph{linear} dependence on~$k$, in contrast to the \emph{logarithmic} scaling we predicted for the \emph{worst case}.

\paragraph{Non-isotropic explicit regularization.}
Throughout the paper, we assumed \cref{schm:regularized} uses isotropic $\ell_2$ regularization.  
Such regularization often performs competitively with weighted schemes in practice \citep{lubana2021regularization,smith2022closer}.  
The latter, widely used in the literature, typically rely on weight matrices derived from Fisher information, often approximated by their diagonal \citep{kirkpatrick2017overcoming,zenke2017continual,aljundi2018memory,benzing2021unifying_regularization}.
Theoretically, using the full Fisher matrix from previous tasks requires $\bigO(d^2)$ memory in the worst case, but guarantees \emph{zero} seen-task loss (\cref{def:seen_loss})---that is, complete retention of \emph{past} expertise  
(see Proposition~5.5 of \citet{evron23continualClassification} and Proposition~5 of \citet{peng2023ideal}).
Closed-form alternatives include Recursive Least Squares (\citealp[Chapter~12.2 in][]{chong2004introduction}) and its block variant BRMP \citep{zhuang2021blockwise}, which likewise maintain an explicit second-order memory to prevent forgetting.
More generally, the rank of the regularization weight matrix can offer a trade-off between memory and forgetting \citep{li2025memory}.

\paragraph{Last-iterate convergence of SGD in the realizable smooth setting.}
Our \cref{thm:relaxed_sgd}, originally proved to leverage the reductions from continual learning to the incremental gradient descent method, also establishes a last-iterate convergence guarantee for a variant of SGD that may be of independent interest. By setting the surrogate functions equal to the original functions, this result yields an 
$\bigO(1/k)$ convergence guarantee for convex smooth optimization in the realizable regime, using a linear decay schedule \citep{defazio2024optimallineardecaylearning}.
To our knowledge, this is the first fast-rate guarantee for the last-iterate convergence of SGD in the realizable setting. It not only generalizes prior results specific to least-squares problems \citep{varre2021last}, but also improves the convergence rate from $O(\log T / T)$ to the optimal $O(1/T)$.

\paragraph{Future work.}
While we establish \emph{optimal} rates for realizable continual linear regression with regularization under random task orderings, several directions remain open. First, empirical validation on standard continual learning benchmarks would test the practical impact of our regularization strength schedules. 
Second, extending our reduction-based analysis to simple nonlinear models could reveal whether similar schedules yield optimal convergence in more expressive settings.

Perhaps more fundamentally, it remains an open question whether comparable bounds can be achieved \emph{without} regularization---that is, whether the gap between the upper bound of \citet{evron2025better} and the lower bound---both shown in \cref{tab:comparison}---can be closed. Clarifying this would shed light on whether regularization is essential for achieving optimal rates, or if it simply serves a role as a convenient analytical device. Addressing these questions may help close the gap between theoretical guarantees and practical continual learning systems.
Finally, fully understanding the role of regularization in typical (non-worst-case) settings remains an open question for future work.

\begin{ack}
We thank Edward Moroshko (University of Edinburgh) for insightful discussions.

The research of TK has received funding from the European Research Council (ERC) under the European Union’s Horizon 2020 research and innovation program {\small (grant agreement No.\ 101078075)}.
Views and opinions expressed are however those of the author(s) only and do not necessarily reflect those of the European Union or the European Research Council. Neither the European Union nor the granting authority can be held responsible for them.
This work received additional support from the Israel Science Foundation ({\small ISF, grant number 3174/23)},  from the Israeli Council for Higher Education, and a grant from the Tel Aviv University Center for AI and Data Science (TAD).

The research of DS was funded by the European Union 
{\small (ERC, A-B-C-Deep, 101039436)}. Views and opinions expressed are however those of the author only and do not necessarily reflect those of the European Union or the European Research Council Executive Agency (ERCEA). Neither the European Union nor the granting authority can be held responsible for them. DS also acknowledges the support of the Schmidt Career Advancement Chair in AI.
\end{ack}


\bibliography{99_biblio}
\bibliographystyle{abbrvnat}
\newpage


\appendix
\onecolumn

\section{Empirical Illustration}
\label{app:emperical_results}
In this section, we illustrate the dependence of the optimal \emph{fixed} regularization coefficient $\lambda$ on the horizon~$k$ and task angle~$\theta$ in a simple two-task setup. 
Two single sample tasks are defined as $\x_1=(1,0)$ and $\x_2=(\cos\theta,\sin\theta)$, trained under the explicit–regularization scheme (\cref{schm:regularized}) with a random with–replacement ordering (\cref{def:random_ordering}). 
For each $(k,\theta)$ we numerically determine the fixed coefficient $\lambda^\star(k;\theta)$ that minimizes the expected loss after $k$ steps. 
As shown in \cref{fig:emp_fixed_schedule}, the optimal regularization grows roughly linearly with $k$, and its magnitude increases with~$\theta$--indicating that longer horizons and more misaligned tasks benefit from stronger regularization.

\begin{figure}[H]
\centering
\includegraphics[width=\linewidth]{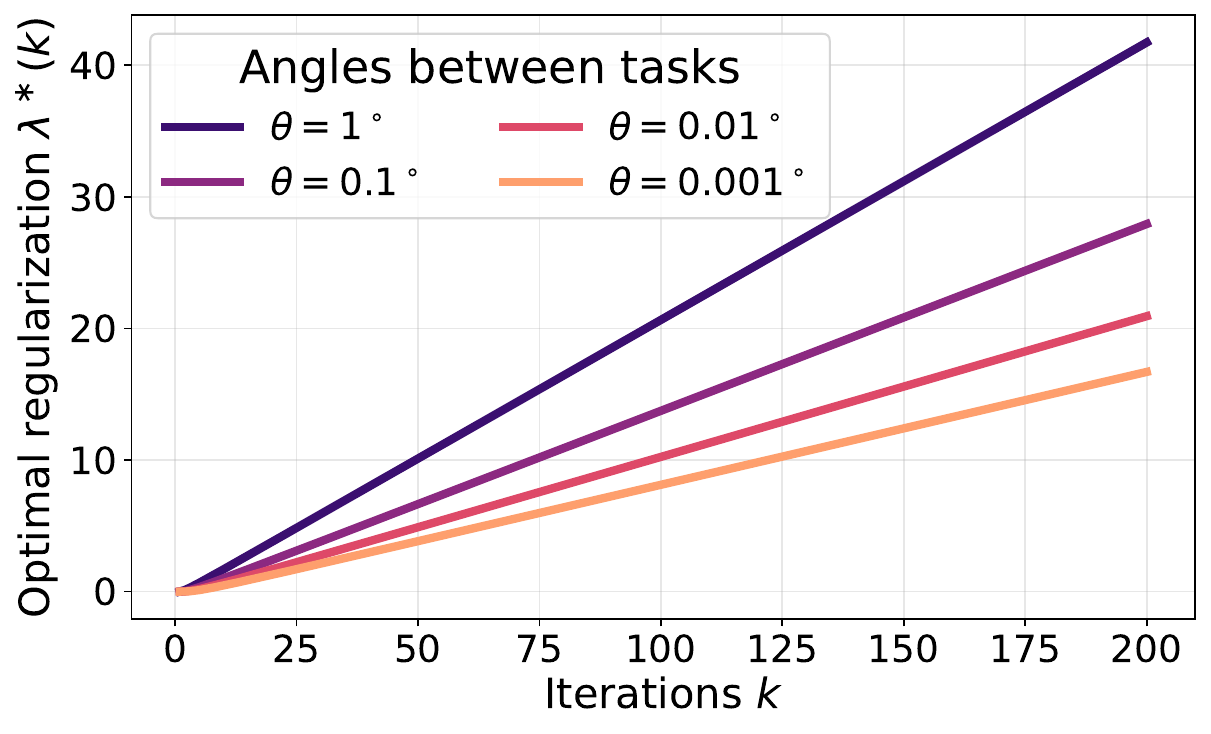}
\caption{
\textbf{Optimal fixed regularization grows with horizon and task angle.}
Each curve shows $\lambda^\star(k;\theta)$ obtained by minimizing the expected loss after $k$ steps of the explicit-regularization scheme with a constant~$\lambda$. 
We observe an approximately linear growth in $k$ and higher optimal regularization for larger $\theta$.
}
\label{fig:emp_fixed_schedule}
\end{figure}

\paragraph{Discussion.}
The qualitative behavior in \cref{fig:emp_fixed_schedule} aligns with the general principles established in the paper, but the quantitative trend differs. 
In \cref{sec:rates}, we prove an optimal \emph{time–varying} regularization schedule and a near-optimal \emph{fixed} coefficient for the worst-case analysis over arbitrary task distributions. 
Here, by contrast, we examine a simple two-task setting, where the empirically optimal fixed regularization $\lambda^\star(k;\theta)$ grows approximately linearly with~$k$, rather than logarithmically as predicted by \cref{cor:near_optimal_fixed_parameters}. 
A plausible explanation is that the worst-case distribution inducing the $\log(k)$ scaling is more complex than the two-task setup considered here, so optimizing for this simple distribution yields an inherently different solution from the true worst-case behavior across all distributions.

\paragraph{Computation.}
We compute the expected loss in closed form using an operator formulation similar to that of \citet{evron2025better}, originally developed to establish their fast dimensionality-dependent rate.
At each step of \cref{schm:regularized},
\[
\w_t = (\X_{\tau_t}^\top \X_{\tau_t} + \lambda \I)^{-1}
(\X_{\tau_t}^\top \y_{\tau_t} + \lambda \w_{t-1}),
\quad \lambda>0.
\]
Under realizability ($\y_m=\X_m\teacher$), defining $\vz_t=\w_t-\teacher$ gives
\[
\begin{aligned}
\vz_t
&=(\X_{\tau_t}^\top \X_{\tau_t} + \lambda \I)^{-1}
(\X_{\tau_t}^\top \y_{\tau_t} + \lambda \w_{t-1})-\teacher\\
&=(\X_{\tau_t}^\top \X_{\tau_t} + \lambda \I)^{-1}
(\X_{\tau_t}^\top \X_{\tau_t}\teacher+\lambda \w_{t-1})-\teacher\\
&=(\X_{\tau_t}^\top \X_{\tau_t} + \lambda \I)^{-1}
\big((\X_{\tau_t}^\top \X_{\tau_t}+\lambda \I)\teacher
+\lambda(\w_{t-1}-\teacher)\big)-\teacher\\
&=\teacher+\lambda(\X_{\tau_t}^\top \X_{\tau_t} + \lambda \I)^{-1}(\w_{t-1}-\teacher)
-\teacher\\
&=\lambda(\X_{\tau_t}^\top \X_{\tau_t}+\lambda \I)^{-1}\vz_{t-1}.
\end{aligned}
\]
Since $\tau_t$ is sampled independently at each step, taking expectations over the randomness of $\tau$ gives
\[
\E[\vz_t\vz_t^\top]
=\E\!\left[
\lambda(\X_{\tau_t}^\top \X_{\tau_t}+\lambda \I)^{-1}
\,\E[\vz_{t-1}\vz_{t-1}^\top]\,
\lambda(\X_{\tau_t}^\top \X_{\tau_t}+\lambda \I)^{-1}
\right].
\]
Starting from $\w_0=\mathbf{0}$, the initial error is $\vz_0=-\teacher$, so $\E[\vz_0\vz_0^\top]=\teacher\teachertop$.
Define the linear operator
\[
\mathcal{Q}_\lambda:\ \R^{d\times d}\to\R^{d\times d},\qquad
\mathcal{Q}_\lambda[\A]
=\E_{\tau}\!\left[
\lambda(\X_{\tau}^\top\X_{\tau}+\lambda \I)^{-1}\,\A\,
\lambda(\X_{\tau}^\top\X_{\tau}+\lambda \I)^{-1}
\right].
\]
By independence of $\tau_t$ and $\vz_{t-1}$,
\[
\E[\vz_t\vz_t^\top]
=\mathcal{Q}_\lambda\!\left[\E[\vz_{t-1}\vz_{t-1}^\top]\right],
\qquad
\E[\vz_k\vz_k^\top]
=\mathcal{Q}_\lambda^{\,k}\!\left[\teacher\teachertop\right].
\]
By \cref{def:train_loss}, the population loss after $k$ steps is the expected training loss across tasks:
\[
\E[\cL(\w_k)]
=\frac{1}{2}\E\!\big[\|\X_{\tau}\w_k-\y_{\tau}\|^2\big].
\]
Under realizability ($\y_{\tau}=\X_{\tau}\teacher$), this equals
\[
\E[\cL(\w_k)]
=\frac{1}{2}\E\!\big[\|\X_{\tau}(\w_k-\teacher)\|^2\big]
=\frac{1}{2}\E\!\big[\vz_k^\top\X_{\tau}^\top\X_{\tau}\vz_k\big].
\]
Taking the expectation inside the trace gives
\[
\E[\cL(\w_k)]
=\tfrac{1}{2}\Tr\!\Big(\E[\X_{\tau}^\top\X_{\tau}]\;\E[\vz_k\vz_k^\top]\Big)
=\tfrac{1}{2}\Tr\!\Big(\E[\X_{\tau}^\top\X_{\tau}]\;\mathcal{Q}_\lambda^{\,k}[\teacher\teachertop]\Big).
\]
To move $\mathcal{Q}_\lambda^{\,k}$ from $\teacher\teachertop$ onto $\E[\X_{\tau}^\top\X_{\tau}]$,
expand $\mathcal{Q}_\lambda^{\,k}$ along an i.i.d.\ sequence $\tau_1,\ldots,\tau_k$
and apply linearity of expectation and the cyclic property of the trace:
\[
\begin{aligned}
\Tr\!\Big(\E[\X_{\tau}^\top\X_{\tau}]\;\mathcal{Q}_\lambda^{\,k}[\teacher\teachertop]\Big)
&=\E_{\tau_1,\ldots,\tau_k}\!\Big[\Tr\!\big(
\E[\X_{\tau}^\top\X_{\tau}]\,
\bm{\Lambda}_{\tau_k}\cdots\bm{\Lambda}_{\tau_1}\,\teacher\teachertop\,
\bm{\Lambda}_{\tau_1}\cdots\bm{\Lambda}_{\tau_k}\big)\Big]\\
&=\E_{\tau_1,\ldots,\tau_k}\!\Big[\Tr\!\big(
\bm{\Lambda}_{\tau_1}\cdots\bm{\Lambda}_{\tau_k}\,
\E[\X_{\tau}^\top\X_{\tau}]\,
\bm{\Lambda}_{\tau_k}\cdots\bm{\Lambda}_{\tau_1}\,\teacher\teachertop\big)\Big]\\
&=\Tr\!\Big(\mathcal{Q}_\lambda^{\,k}\!\big[\E[\X_{\tau}^\top\X_{\tau}]\big]\;\teacher\teachertop\Big),
\end{aligned}
\]
where $\bm{\Lambda}_{\tau}:=\lambda(\X_{\tau}^\top\X_{\tau}+\lambda \I)^{-1}$.
Hence,
\[
\E[\cL(\w_k)]
=\tfrac{1}{2}\Tr\!\Big(\mathcal{Q}_\lambda^{\,k}\!\big[\E[\X_{\tau}^\top\X_{\tau}]\big]\;\teacher\teachertop\Big)
=\tfrac{1}{2}\,\teacher^\top
\mathcal{Q}_\lambda^{\,k}\!\big[\E[\X_{\tau}^\top\X_{\tau}]\big]
\teacher.
\]
Taking the maximum over all unit teachers $\|\teacher\|=1$ gives the worst–case expected loss \[
L_k^{\mathrm{worst}}
=\tfrac{1}{2}\,\lambda_{\max}\!\Big(\mathcal{Q}_\lambda^{\,k}\!\big[\E[\X_{\tau}^\top\X_{\tau}]\big]\Big).
\]
This formulation allows us to compute the expected worst–case loss for \cref{schm:regularized}
entirely in closed form.
All stochastic effects of task ordering are captured analytically through expectations,
eliminating the need for Monte-Carlo sampling or repeated randomized experiments.
\newpage
\section{Proofs of lower bounds}
\label{app:lower_bounds}

\begin{theorem}
\label{thm:any_algorithm_lower_bound}
    Let $d \geq 2$ and $k \geq 2$. Then for any algorithm $\mathcal{A}$ which receives $k$ functions $f_1,f_2,\ldots,f_k: \reals^d \to \reals$ and outputs a point in $\reals^d$, there exists a point $\teacher \in \reals^d$ such that $\norm{\teacher} \leq 1$ and a set of $k$ $1$-smooth convex quadratic functions which are minimized at $\teacher$, $h_1,\ldots,h_k : \reals^d \to \reals$ such that
    \[
    \E_{\tau(1),\ldots,\tau(k) \sim \Unif([k]),\mathcal{A}}[F(\mathcal{A}(h_{\tau(1)},\ldots,h_{\tau(k)})) - F(\teacher)]
    =
    \Omega (1/k)
    ,
    \]
    where $F(\w) \eqdef \E_{i \sim \Unif([k])} h_i(\w)$.
\end{theorem}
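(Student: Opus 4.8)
The plan is to prove this via an information-theoretic (coupon-collector style) obstruction, formalizing the ``uncertainty over unseen tasks'' mentioned before the statement: I will build an instance with a single ``secret'' task that encodes one hidden bit together with $k-1$ identical ``dummy'' tasks, and show that with constant probability the secret task is never sampled, in which case no algorithm can recover the bit and must pay $\Omega(1/k)$ on the $F$-average.

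\emph{Construction.} I would work in $\mathrm{span}\{\e_1,\e_2\}$ (the remaining coordinates play no role). For a parameter $b\in\{-\tfrac12,+\tfrac12\}$, set $h_1=\dots=h_{k-1}\colon \w\mapsto \tfrac12\langle \e_1,\w\rangle^2$ and $h_k\colon \w\mapsto \tfrac12(\langle \e_2,\w\rangle - b)^2$. Each $h_i$ is convex and $1$-smooth (its Hessian is $\e_1\e_1^{\top}$ or $\e_2\e_2^{\top}$, both $\preceq\I$), and all $h_i$ are minimized at $\teacher = b\,\e_2$, which has $\norm{\teacher}=\tfrac12\le 1$. Hence
\[
F(\w)=\tfrac{k-1}{2k}\langle \e_1,\w\rangle^2+\tfrac1{2k}\bigl(\langle \e_2,\w\rangle-b\bigr)^2 ,
\]
so $F(\teacher)=0$ and $F(\w)-F(\teacher)\ge \tfrac1{2k}(\langle \e_2,\w\rangle-b)^2$.

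\emph{Key steps.} (i) Let $E$ be the event $\{\tau(j)\ne k\text{ for all }j\in\cnt{k}\}$ that $h_k$ is never drawn; since the $\tau(j)$ are i.i.d.\ uniform on $\cnt{k}$ (sampling with replacement), $\Pr[E]=(1-1/k)^k\ge 1/4$ for every $k\ge 2$. (ii) Conditioned on $E$, the input sequence $(h_{\tau(1)},\dots,h_{\tau(k)})$ is exactly $k$ copies of $\w\mapsto\tfrac12\langle \e_1,\w\rangle^2$ and does not depend on $b$; putting $b\sim\Unif\{\pm\tfrac12\}$ independent of $\tau$ and of $\mathcal{A}$'s internal randomness, the output $\w=\mathcal{A}(\dots)$ is therefore conditionally independent of $b$ given $E$ (it is essential that the $k-1$ dummy tasks are \emph{identical}, so $\mathcal{A}$ cannot even infer which indices were drawn). (iii) By this conditional independence, $\mathbb{E}[(\langle\e_2,\w\rangle-b)^2\mid E]=\mathbb{E}[\langle\e_2,\w\rangle^2\mid E]+\tfrac14\ge\tfrac14$, so
\[
\mathbb{E}_{b}\,\mathbb{E}_{\tau,\mathcal{A}}\bigl[F(\w)-F(\teacher)\bigr]\ \ge\ \Pr[E]\cdot\tfrac1{2k}\cdot\mathbb{E}\bigl[(\langle\e_2,\w\rangle-b)^2\mid E\bigr]\ \ge\ \tfrac14\cdot\tfrac1{2k}\cdot\tfrac14\ =\ \tfrac1{32k}.
\]
(iv) Averaging over $b$, there is a fixed $b^\star\in\{\pm\tfrac12\}$ for which the instance with $\teacher=b^\star\e_2$ forces $\mathbb{E}_{\tau,\mathcal{A}}[F(\w)-F(\teacher)]\ge \tfrac1{32k}=\Omega(1/k)$, which is the claim.

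\emph{Where the difficulty lies.} Every step is routine; the one point requiring care is (ii) — formalizing that on the miss event the algorithm's entire view is a deterministic object independent of the hidden bit, which hinges on making the dummy tasks exactly equal. One should also note the statement is inherently about sampling \emph{with} replacement, matching the i.i.d.\ reading ``$\tau(1),\dots,\tau(k)\sim\Unif(\cnt{k})$'' of the theorem: with $k$ without-replacement draws from $k$ tasks every task is seen, and no such lower bound can hold.
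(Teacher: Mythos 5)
Your proposal is correct and follows essentially the same route as the paper's proof: a single hidden task alongside $k-1$ identical dummies, a miss event of probability $(1-1/k)^k\ge 1/4$, and the observation that on that event the algorithm's view is independent of the hidden sign. The only (immaterial) difference is that you fix the sign by averaging over a random bit $b$, whereas the paper chooses the sign $a$ explicitly from the median of the algorithm's output on the all-dummy input; both yield the same $\Omega(1/k)$ bound.
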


\begin{proof}
    In the following proof we denote with $\w[i]$ the $i$'th coordinate of a vector $\w$.
    Given an algorithm $\mathcal{A}$, let $h_1(\w)=\frac12 \w[1]^2$, $h_i=h_1$ for $i = 2,\ldots,k-1$, and $E_B=\{\forall i \in [k] : \tau(i) \neq k\}$ be the bad event where last index is not sampled.
    Note that as $1-x \geq 4^{-x}$ for all $x \in [0,\frac12]$,
    \begin{align*}
        \Pr(E_B) = \br{1-\frac{1}{k}}^k \geq \frac14.
    \end{align*}
    Let $\tilde{\w}$ be the (stochastic) output of $\mathcal{A}(h_1,h_1,\ldots,h_1)$ (when $\mathcal{A}$ is presented with $k$ copies of $h_1$), and let
    \begin{align*}
        a
        =
        \begin{cases}
            1 & \qquad \text{if $\Pr(\tilde{\w}[2] \leq 0) \geq \frac12$};\\
            -1 & \qquad \text{if $\Pr(\tilde{\w}[2] \leq 0) < \frac12$}.
        \end{cases}
    \end{align*}
    Let $h_k(\w)=\frac12 (\w[2]-a)^2$.
    Note that all functions are $1$-smooth, convex, quadratic, and minimized at $\teacher=(0,a,0,\ldots,0)$, where $\norm{\teacher} \leq 1$.
    Hence, as $\teacher$ is a minimizer of $F(\w)$,
    \begin{align*}
        \E&[F(\mathcal{A}(h_{\tau(1)},\ldots,h_{\tau(k)})) - F(\teacher)]
        \geq \Pr(E_B) \E[F(\mathcal{A}(h_{\tau(1)},\ldots,h_{\tau(k)})) - F(\teacher) \mid E_B]
        \\
        &= \Pr(E_B) \E[F(\mathcal{A}(h_1,h_1,\ldots,h_1)) - F(\teacher) \mid E_B]
        \tag{Conditioned on $E_B$, $h_\tau(i)=h(1)$ for all $i$}
        \\
        &= \Pr(E_B) \E[F(\mathcal{A}(h_1,h_1,\ldots,h_1)) \mid E_B]
        \tag{$h_i(\teacher)=0$ for all $i$}
        \\
        &\geq \frac1k \Pr(E_B) \E[h_k(\mathcal{A}(h_1,h_1,\ldots,h_1)) \mid E_B].
        \tag{$F(\w) \geq \frac1k h_i(\w)$ for any $i,\w$}
    \end{align*}
    Conditioned on $E_B$, with probability at least $\frac12$, $\w=\mathcal{A}(h_1,h_1,\ldots,h_1)$ satisfies $( \w[2]-a)^2 \geq 1$. Thus,
    \begin{align*}
        \E[F(\mathcal{A}(h_{\tau(1)},\ldots,h_{\tau(k)})) - F(\teacher)]
        &\geq \frac{\Pr(E_B)}{4k}
        \geq \frac{1}{16k}
        = \Omega(1/k).
    \end{align*}
\end{proof}

\bigskip

Our next lemma makes use of the Sherman-Morison formula, stated below for completeness.
\begin{lemma}[Sherman-Morison]
\label{lem:sherman_morrison}
    Suppose $\X \in \R^{d \times d}$ is  invertible, and $\vu, \vv\in \R^d$.
    Then $\X+\vu\vv\T$ is invertible iff
    $1 + \vv\T\X^{-1}\vu \neq 0$, in which case is holds that:
    \begin{align*}
        \br{\X + \vu\vv\T}^{-1}
        = \X^{-1} - \frac{\X^{-1}\vu\vv\T \X^{-1}}{1 + \vv\T \X^{-1} \vu}.
    \end{align*}
\end{lemma}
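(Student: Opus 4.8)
The plan is to split into the two directions of the stated equivalence, using the scalar $\alpha \eqdef \vv\T\X^{-1}\vu$ and the shorthand $\M \eqdef \X + \vu\vv\T$. First I would treat the degenerate case $1+\alpha = 0$ and show $\M$ is singular by exhibiting a nonzero null vector. Note that $\vu \neq \0$ (otherwise $\alpha = 0$, contradicting $1+\alpha=0$), hence $\X^{-1}\vu \neq \0$ because $\X$ is invertible. Then the one-line computation
\[
\M\,(\X^{-1}\vu) = \X\X^{-1}\vu + \vu\,(\vv\T\X^{-1}\vu) = (1+\alpha)\,\vu = \0
\]
shows $\X^{-1}\vu$ is a nonzero vector annihilated by $\M$, so $\M$ is not invertible; this is the ``only if'' direction of the equivalence.

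For the converse, assume $1+\alpha \neq 0$ and set $\mat{N} \eqdef \X^{-1} - (1+\alpha)^{-1}\,\X^{-1}\vu\vv\T\X^{-1}$, the candidate inverse. I would verify $\M\mat{N} = \I$ directly, expanding the product and using that $\vv\T\X^{-1}\vu = \alpha$ is a scalar which factors out of the iterated rank-one term, i.e. $\vu\vv\T\X^{-1}\vu\vv\T\X^{-1} = \alpha\,\vu\vv\T\X^{-1}$:
\[
\M\mat{N} = \I + \vu\vv\T\X^{-1} - \frac{\vu\vv\T\X^{-1} + \alpha\,\vu\vv\T\X^{-1}}{1+\alpha} = \I + \vu\vv\T\X^{-1} - \vu\vv\T\X^{-1} = \I.
\]
The symmetric computation gives $\mat{N}\M = \I$ as well (and since $\M$ is square, either identity alone already suffices). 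Hence $\M$ is invertible with $\M^{-1}=\mat{N}$, which is exactly the asserted formula; this is the ``if'' direction and simultaneously pins down the inverse. Combining the two cases yields the full statement.

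I do not expect a genuine obstacle: the identity is classical and reduces to a single algebraic expansion once one observes that $\vv\T\X^{-1}\vu$ is a scalar. The only point needing a moment's care is the singularity direction --- to conclude from $1+\alpha=0$ that $\M$ is \emph{genuinely} singular, one must confirm that the candidate null vector $\X^{-1}\vu$ is nonzero, which is precisely why the preliminary observation $\vu\neq\0$ (and hence $\X^{-1}\vu\neq\0$, as $\X$ is invertible) is needed.
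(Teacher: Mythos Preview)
The paper does not actually prove this lemma; it merely states the Sherman--Morrison formula ``for completeness'' as a classical result and then uses it. Your proof is correct and complete: the null-vector argument for the singular direction and the direct verification $\M\mat{N}=\I$ for the invertible direction are exactly the standard elementary proof, and you correctly handle the one subtle point (ensuring $\X^{-1}\vu\neq\0$ when $1+\alpha=0$).
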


\newpage

\begin{recall}[\cref{lem:regularized_scheme_seen_lb} --- lower bound for seen-task loss under \cref{schm:regularized}]
    For any $d \geq 2$, initialization $\w_0 \in \reals^d$, and regularization coefficient sequence $\lambda_1, \ldots, \lambda_k \geq 0$, there exists a set of jointly realizable linear regression tasks $\left\{\prn{\X_m,\y_m}\right\}_{m=1}^{M}$ such that, under a with-replacement random task ordering, \cref{schm:regularized} incurs seen-task loss $\mathcal{L}_{1:k}(\w_k) = \Omega(1/k)$ with probability at least $1/10$.
\end{recall}

\begin{proof}
Let $k \geq 9$, and let $\lambda_1,\ldots,\lambda_k \geq 0$ be any regularization sequence. For simplicity, we set $M = k$, but the proof can be easily extended to $M > k$.
Let $f_1(\w)=\frac12 (\e_2\T \w)^2$, where $\e_2=(0,1,0,\ldots,0)\T$, and $f_2(\w)=\frac12 (\x\T \w)^2$, where $\x=(\sqrt{1-\alpha^2},\alpha,0,\ldots,0)\T$ for some $\alpha \in [0,1]$. Note that these can be represented as tasks $\{(\e_2,0),(\x,0)\}$ with $R=1$.

Consider the uniform distribution over the set $\{f_1,\ldots,f_1,f_2\}$ of size $k$, such that $f_1$ is sampled with probability $1-\frac{1}{k}$ and $f_2$ is sampled with probability $\frac{1}{k}$. Let $E_B$ be the ``bad'' event where $f_2$ is sampled exactly once, and note that using the inequality $1-x \geq 4^{-x}$ which holds for all $x \in [0,\frac12]$,
\[
\Pr(E_B) = k \cdot \frac{1}{k} \cdot \br{1-\frac{1}{k}}^{k-1}
= \br{1-\frac{1}{k}}^k \frac{k}{k-1}\geq \frac14.
\]
The rest of the analysis will be conditioned on the ``bad'' event.
Let $\lambda > 0$, and note that for any $\w$,
\begin{align*}
    \argmin_{\w'} \{ f_1(\w') + \tfrac{\lambda}{2} \norm{\w'-\w}^2\} = \br{ \e_2 \e_2\T + \lambda \I }^{-1}
\prn{\lambda \w } = \w - \frac{(\w\T \e_2)}{\lambda + 1} \e_2,
\end{align*}
where the second equality follows from \cref{lem:sherman_morrison}.
The case of $\lambda=0$ is treated as the update above with $\lambda=0$, and similarly,
\begin{align*}
    \argmin_{\w'} \{ f_2(\w') + \tfrac{\lambda}{2} \norm{\w'-\w}^2\} = \w - \frac{(\w\T \x)}{\lambda+1}\x.
\end{align*}
Starting at $\w_0=(1,0)\T$, the iterates will not move until encountered with $f_2$. Denote with $t_0$ this step. Thus,
\[
\w_{t_0}
= \br{1 - \frac{1-\alpha^2}{\lambda_{t_0}+1},
~
-\frac{\alpha\sqrt{1-\alpha^2}}{\lambda_{t_0}+1}}\T
.
\]
From now on, we only observe $f_1$, so the first coordinate of $\w_k$ for $k > t_0$, which we denote as $\w_t[1]$, is
\[
\w_{k}[1] = \w_{k-1}[1] - \frac{(\w_{k-1}\T \e_2)}{\lambda + 1} \e_2[1] = \w_{k-1}[1] = \ldots = \w_{t_0}[1].
\]
If $k=t_0$ then $\w_{k}[1]=\w_{t_0}[1]$ trivially holds.
Thus,
$$
\w_{k} 
=
\begin{pmatrix}
	1 - \frac{1-\alpha^2}{\lambda_{t_0}+1}
	\\
	\zeta
	\end{pmatrix}
$$
for some $\zeta \in \R$.
Hence, setting $\alpha=\sqrt{1/2}$,
\begin{align*}
	f_2(\w_k)
	&=
	\frac1{2} \br{
		\Bigprn{1 - \frac{1-\alpha^2}{\lambda_{t_0}+1}}\sqrt{1-\alpha^2}
		+ \alpha \zeta
	}^2
    =
	\frac14 \br{
		1-\frac{1}{2(\lambda_{t_0}+1)}
		+ \zeta
	}^2
	,
\end{align*}
and 
\(
	f_1(\w_k) = \frac12 \zeta^2.
\)
If $|\zeta| \geq \frac{1}{\sqrt{k}}$, we are done as $f_1$ is observed $k-1$ times (conditioned on $E_B$) and 
\[
	\mathcal{L}_{1{:}k}(\w_{k}) \geq \frac{k-1}k f_1(w_k)
	= \frac{k-1}{2k} \zeta^2 = \Omega(\ifrac{1}{k}).
\]
Otherwise, as $k \geq 9$, $\zeta > -1/3$, and (conditioned on $E_B$)
\begin{align*}
    f_2(\w_k)
    \geq \frac14 (1/6)^2 = \frac{1}{144}.
\end{align*}
Therefore, in this case,
$$
	\mathcal{L}_{1{:}k}(\w_{k}) \geq \frac1k f_2(\w_k)
	=\Omega(\ifrac{1}{k}).
$$
So with probability at least $\Pr(E_B) \geq 1/4 \geq 1/10$, it holds that
\[
\mathcal{L}_{1{:}k}(\w_{k}) = \Omega(1/k).
\]
\end{proof}
\newpage
\section{Proofs of the reductions and their properties}
\label{app:reduction_proofs}

\begin{recall}[\cref{reduc:regularized_to_sgd} --- Regularized Continual Regression 
$\Rightarrow$ Incremental GD]
Consider $M$ regression tasks 
$\set{(\X_m, \y_m)}_{m=1}^{M}$ seen in any ordering $\tau$.
Then, each iteration $t\in\cnt{k}$ of regularized continual linear regression  
with a coefficient $\lambda_{t} >0$
is equivalent to an IGD step on ${ \freg\fprn{\cdot; \tau_t} }$ with step size $\eta_t >0$, 
where
$
\freg\fprn{\w;m} \eqdef \frac{1}{2}
\bignorm{ 
\sqrt{\scalebox{0.9}[0.9]{$\A^{(t)}_m$}}
\fprn{ \w - \X_m^+\y_m } }^2
$
for some $\A_{m}^{(t)}$ 
depending on $\lambda_t,\eta_t$.
That is, the iterates of \cref{schm:regularized,schm:igd} coincide.
\end{recall}

\begin{proof}[Proof of \Cref{reduc:regularized_to_sgd}]
Each iterate of regularized continual regression is defined as
\[
\w_t = \argmin_{\w} \prn{
\frac{1}{2} \norm{ \X_{\tau_t} \w - \y_{\tau_t} }^2 +
\frac{\lambda_t}{2} \norm{ \w - \w_{t-1} }^2
},
\]
which admits the closed-form update:
\[
\w_t = \prn{ \X_{\tau_t}^\top \X_{\tau_t} + \lambda_t \I }^{-1}
\prn{ \X_{\tau_t}^\top \y_{\tau_t} + \lambda_t \w_{t-1} }.
\]

We define:
\[
\A_m \eqdef \frac{1}{\eta_t} \prn{ \I - \lambda_t \prn{ \X_m^\top \X_m + \lambda_t \I }^{-1} },
\qquad
\freg\prn{\w;m} \eqdef \frac{1}{2} \norm{ \sqrt{\A_m} \prn{ \w - \X_m^+ \y_m } }^2.
\]

(For notational simplicity, we write $\A_m$ in place of the more precise $\A_m^{(t)}$.) 
Observe that:
\begin{align*}
\eta_t \A_m 
&= \I - \lambda_t \prn{ \X_m^\top \X_m + \lambda_t \I }^{-1} \\
&= \prn{ \X_m^\top \X_m + \lambda_t \I } \prn{ \X_m^\top \X_m + \lambda_t \I }^{-1}
- \lambda_t \prn{ \X_m^\top \X_m + \lambda_t \I }^{-1} \\
&= \X_m^\top \X_m \prn{ \X_m^\top \X_m + \lambda_t \I }^{-1}.
\end{align*}

When we run IGD on $\freg$ with learning rate $\eta_t$, we get:
\begin{align*}
\w_{t-1} - \eta_t \nabla \freg\prn{\w_{t-1};\tau_t}
&= \w_{t-1} - \eta_t \A_{\tau_t} \prn{ \w_{t-1} - \X_{\tau_t}^+ \y_{\tau_t} } \\
&= \lambda_t \prn{ \X_{\tau_t}^\top \X_{\tau_t} + \lambda_t \I }^{-1} \w_{t-1} 
+ \X_{\tau_t}^\top \X_{\tau_t} \prn{ \X_{\tau_t}^\top \X_{\tau_t} + \lambda_t \I }^{-1} \X_{\tau_t}^+ \y_{\tau_t} \\
&= \lambda_t \prn{ \X_{\tau_t}^\top \X_{\tau_t} + \lambda_t \I }^{-1} \w_{t-1} 
+ \prn{ \X_{\tau_t}^\top \X_{\tau_t} + \lambda_t \I }^{-1} \X_{\tau_t}^\top \y_{\tau_t} \\
&= \prn{ \X_{\tau_t}^\top \X_{\tau_t} + \lambda_t \I }^{-1}
\prn{ \lambda_t \w_{t-1} + \X_{\tau_t}^\top \y_{\tau_t} } = \w_t.
\end{align*}
\end{proof}

\begin{recall}[\cref{reduc:early_to_sgd} --- Budgeted Continual Regression 
$\Rightarrow$ Incremental GD]
Consider $M$ regression tasks 
$\set{(\X_m, \y_m)}_{m=1}^{M}$ seen in any ordering $\tau$.
Then, each iteration $t\in\cnt{k}$ of budgeted continual linear regression  
with $N_t\in\naturals$ inner steps of size 
${\gamma_t}\in\fprn{0,\, 1 / R^2}$
is equivalent to an IGD step on ${ \fbud\fprn{\cdot; \tau_t} }$ with step size $\eta_t >0$, 
where
$
\fbud\fprn{\w;m} \eqdef \frac{1}{2}
\bignorm{ 
\sqrt{\scalebox{0.9}[0.9]{$\A^{(t)}_m$}}
\fprn{ \w - \X_m^+\y_m } }^2
$
for some $\A^{(t)}_m$ 
depending on $N_t,\gamma_t,\eta_t$.
That is, the iterates of \cref{schm:early_stopped,schm:igd} coincide.
\end{recall}
\begin{proof}[Proof of \Cref{reduc:early_to_sgd}]
In budgeted continual regression, we apply $N_t$ steps of gradient descent with step size $\gamma_t$ to the loss $\frac{1}{2} \norm{ \X_{\tau_t} \w - \y_{\tau_t} }^2$.
Let $\w^{(0)} \eqdef \w_{t-1}$. The inner iterates evolve as:
\begin{align*}
\w^{(s)} &= \prn{ \I - \gamma_t \X_{\tau_t}^\top \X_{\tau_t} } \w^{(s-1)} + \gamma_t \X_{\tau_t}^\top \y_{\tau_t}, \\
\w_t = \w^{(N_t)} &= \prn{ \I - \gamma_t \X_{\tau_t}^\top \X_{\tau_t} }^{N_t} \w_{t-1}
+ \gamma_t \sum_{s=0}^{N_t - 1} \prn{ \I - \gamma_t \X_{\tau_t}^\top \X_{\tau_t} }^s \X_{\tau_t}^\top \y_{\tau_t}.
\end{align*}

We define:
\[
\A_m \eqdef \frac{1}{\eta_t} \prn{ \I - \prn{ \I - \gamma_t \X_m^\top \X_m }^{N_t} },
\qquad
\fbud\prn{\w;m} \eqdef \frac{1}{2} \norm{ \sqrt{\A_m} \prn{ \w - \X_m^+ \y_m } }^2.
\]

(For notational simplicity, we write $\A_m$ in place of the more precise $\A_m^{(t)}$.) 
To simplify the expression for the sum, consider the SVD $\X_{\tau_t} = \U \mSigma \V^\top$ and observe:
\begin{align*}
& \gamma_t \sum_{s=0}^{N_t - 1} \prn{ \I - \gamma_t \X_{\tau_t}^\top \X_{\tau_t} }^s \X_{\tau_t}^\top \y_{\tau_t} = \V \sum_{s=0}^{N_t - 1} \gamma_t \prn{ \I - \gamma_t \mSigma^2 }^s \mSigma \U^\top \y_{\tau_t}\\
\explain{\text{Geometric sum}}&= \V \prn{ \I - \prn{ \I - \gamma_t \mSigma^2 }^{N_t} } \mSigma^+ \U^\top \y_{\tau_t} =\prn{ \I - \prn{ \I - \gamma_t \X_{\tau_t}^\top \X_{\tau_t} }^{N_t} } \X_{\tau_t}^+ \y_{\tau_t} \\
&= \eta_t \A_{\tau_t} \X_{\tau_t}^+ \y_{\tau_t}.
\end{align*}

When we run IGD on $\fbud$ with learning rate $\eta_t$. We have:
\begin{align*}
\w_{t-1} - \eta_t \nabla \fbud\prn{ \w_{t-1}; \tau_t }
&= \w_{t-1} - \eta_t \A_{\tau_t} \prn{ \w_{t-1} - \X_{\tau_t}^+ \y_{\tau_t} } \\
&= \prn{ \I - \eta_t \A_{\tau_t} } \w_{t-1} + \eta_t \A_{\tau_t} \X_{\tau_t}^+ \y_{\tau_t} \\
&= \prn{ \I - \gamma_t \X_{\tau_t}^\top \X_{\tau_t} }^{N_t} \w_{t-1}
+ \gamma_t \sum_{s=0}^{N_t - 1} \prn{ \I - \gamma_t \X_{\tau_t}^\top \X_{\tau_t} }^s \X_{\tau_t}^\top \y_{\tau_t} = \w_t.
\end{align*}
\end{proof}

\begin{lemma}[General reduction properties]
\label{lem:general_reduction_properties}
Recall $\cL\prn{\w; m} \eqdef \frac{1}{2} \norm{ \X_m \w - \y_m }^2$ and $R^2 \eqdef \max_{m'} \norm{ \X_{m'} }_2^2$. Let
\[
f^{(t)}\prn{\w; m} \eqdef \frac{1}{2} \norm{ \sqrt{\A_m} \prn{ \w - \X_m^+ \y_m } }^2
\quad \text{with} \quad
\A_m = g\prn{ \X_m^\top \X_m },
\]
    where $g : \R \to \R$ is applied spectrally (i.e., to each eigenvalue of $\X_m^\top \X_m$). Assume that $g$ is concave, non-decreasing on $[0, R^2]$, with $g(0) = 0$ and $g'(0) > 0$. Then:
\begin{enumerate}[label=(\roman*), leftmargin=.7cm, itemindent=0cm, labelsep=0.2cm]
  \item $f^{(t)}\prn{\w; m}$ is $g\prn{R^2}$-smooth,
  \item and the following inequality holds:
  \[
  \frac{1}{g'(0)} f^{(t)}\prn{\w; m}
  \leq \cL\prn{\w; m} - \min_{\w'} \cL\prn{\w'; m}
  \leq \frac{R^2}{g\prn{R^2}} f^{(t)}\prn{\w; m}.
  \]
\end{enumerate}
\end{lemma}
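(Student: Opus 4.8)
The plan is to simultaneously diagonalize $\X_m^\top\X_m$ and $\A_m$, which turns both claims into elementary one-dimensional facts about the concave function $g$. First I would fix a task $m$ and take a full eigendecomposition $\X_m^\top\X_m=\V\mLambda\V^\top$, with $\V\in\R^{d\times d}$ orthogonal and $\mLambda=\operatorname{diag}(\sigma_1^2,\dots,\sigma_d^2)$, where $\sigma_1^2,\dots,\sigma_d^2\in[0,R^2]$ are the squared singular values of $\X_m$. By the definition of the spectral map, $\A_m=g(\X_m^\top\X_m)=\V\operatorname{diag}\!\big(g(\sigma_1^2),\dots,g(\sigma_d^2)\big)\V^\top$, which is positive semidefinite because $g$ is non-decreasing with $g(0)=0$, hence $g\ge 0$ on $[0,R^2]$.

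Next I would record the identity
\[
\cL(\w;m)-\min_{\w'}\cL(\w';m)=\tfrac12\bignorm{\X_m\big(\w-\X_m^+\y_m\big)}^2,
\]
obtained by splitting $\y_m=\X_m\X_m^+\y_m+(\I-\X_m\X_m^+)\y_m$ into its orthogonal projection onto $\operatorname{col}(\X_m)$ and the residual: the minimum of $\cL(\cdot;m)$ equals $\tfrac12\norm{(\I-\X_m\X_m^+)\y_m}^2$ and is attained at $\X_m^+\y_m$, and since $\X_m(\w-\X_m^+\y_m)\in\operatorname{col}(\X_m)$ is orthogonal to the residual, Pythagoras gives the claim. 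Writing $\vu\eqdef\w-\X_m^+\y_m$ and $v_i\eqdef(\V^\top\vu)_i$, this yields the representations $\cL(\w;m)-\min_{\w'}\cL(\w';m)=\tfrac12\sum_i\sigma_i^2 v_i^2$ and $f^{(t)}(\w;m)=\tfrac12\,\vu^\top\A_m\vu=\tfrac12\sum_i g(\sigma_i^2)v_i^2$.

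Part (i) is then immediate: $f^{(t)}(\cdot;m)$ is a quadratic with constant Hessian $\A_m\succeq 0$ whose largest eigenvalue is $\max_i g(\sigma_i^2)\le g(R^2)$, by monotonicity of $g$ and $\sigma_i^2\le R^2$; hence it is convex and $g(R^2)$-smooth. For part (ii), in view of the two representations it suffices to prove the scalar sandwich $g(s)/g'(0)\le s\le \big(R^2/g(R^2)\big)g(s)$ for every $s\in[0,R^2]$ and then take the nonnegative combination with weights $v_i^2/2$. The left inequality is the tangent bound of the concave $g$ at $0$: $g(s)\le g(0)+g'(0)s=g'(0)s$. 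The right inequality is the chord bound: writing $s=\tfrac{s}{R^2}\cdot R^2+\big(1-\tfrac{s}{R^2}\big)\cdot 0$ and using concavity with $g(0)=0$ gives $g(s)\ge\tfrac{s}{R^2}g(R^2)$; the division is legitimate because $g(R^2)>0$, which holds since $g$ is non-decreasing with $g'(0)>0$, forcing $g>0$ on $(0,R^2]$.

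The only points needing care are the orthogonality identity for $\cL(\w;m)-\min\cL$ in the possibly rank-deficient case and the remark that $g(R^2)>0$; I do not expect a genuine obstacle, as everything else is a termwise concavity estimate. As a sanity check, this lemma recovers \cref{lem:reg_function_properties} by instantiating $g(x)=\tfrac1{\eta_t}\cdot\tfrac{x}{x+\lambda_t}$ for $\freg$ and $g(x)=\tfrac1{\eta_t}\big(1-(1-\gamma_t x)^{N_t}\big)$ for $\fbud$ --- both concave and non-decreasing on $[0,R^2]$, vanishing at $0$, with $g'(0)>0$ --- so that substituting $g'(0)$ and $g(R^2)$ reproduces the constants $\beta^{(t)}_r,\beta^{(t)}_b$ and the two-sided bounds, while the realizability claim there is immediate because $\teacher-\X_m^+\y_m\in\ker(\X_m)=\ker(\X_m^\top\X_m)$ together with $g(0)=0$ forces $f^{(t)}(\teacher;m)=0$.
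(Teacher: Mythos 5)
Your proposal is correct and follows essentially the same route as the paper's proof: diagonalize $\X_m^\top\X_m$, apply the tangent bound $g(s)\le g'(0)s$ and the chord bound $g(s)\ge \tfrac{g(R^2)}{R^2}s$ eigenvalue-wise to get the matrix sandwich, and use the orthogonal decomposition of $\y_m$ to identify the excess loss with the quadratic form in $\X_m^\top\X_m$. Your additional remarks on $g(R^2)>0$ and the rank-deficient case are fine and only make the argument slightly more explicit than the paper's.
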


\begin{proof}
Let $\xi_i$ denote the $i$-th eigenvalue of $\X_m^\top \X_m$, and let $\xi_i' \eqdef g(\xi_i)$ be the corresponding eigenvalue of $\A_m$. By the concavity of $g$, for every $\xi_i \in [0, R^2]$,
\[
g(\xi_i) \leq g'(0) \cdot \xi_i
\quad \Rightarrow \quad
\frac{1}{g'(0)} \xi_i' \leq \xi_i.
\]
Hence,
$
\frac{1}{g'(0)} \A_m \preccurlyeq \X_m^\top \X_m.
$
By concavity and $g(0) = 0$, the chord from $0$ to $R^2$ lies below $g$:
\[
g(\xi_i) \geq \frac{g(R^2)}{R^2} \cdot \xi_i
\quad \Rightarrow \quad
\xi_i \leq \frac{R^2}{g(R^2)} \cdot \xi_i',
\]
so we obtain the matrix inequality:
$
\X_m^\top \X_m \preccurlyeq \frac{R^2}{g(R^2)} \A_m.
$

Moreover, since $g$ is non-decreasing, $\xi_i' \leq g(R^2)$, and therefore all eigenvalues of $\A_m$ are upper bounded by $g(R^2)$,
$
\A_m \preccurlyeq g(R^2) \I,
$
implying that the Hessian $\nabla^2 f^{(t)}(\w; m) = \A_m$ satisfies smoothness with parameter $g(R^2)$.

Next, decompose the squared loss:
\begin{align*}
\cL(\w; m) &= \frac{1}{2} \norm{ \X_m \w - \y_m }^2 
= \frac{1}{2} \norm{ \X_m \prn{ \w - \X_m^+ \y_m } + \prn{ \X_m \X_m^+ - \I } \y_m }^2 \\
\explain{\text{Orthogonality}}&= \frac{1}{2} \prn{ \norm{ \X_m \prn{ \w - \X_m^+ \y_m } }^2 + \norm{ \prn{ \X_m \X_m^+ - \I } \y_m }^2 }.
\end{align*}
where the two terms are orthogonal since $\X_m \prn{ \w - \X_m^+ \y_m } \in \text{range}(\X_m)$ and $\prn{ \X_m \X_m^+ - \I } \y_m \in \ker(\X_m^\top)$.

The minimum loss is attained at $\X_m^+ \y_m$, yielding:
$
\min_{\w'} \cL(\w'; m) = \frac{1}{2} \norm{ \prn{ \X_m \X_m^+ - \I } \y_m }^2.
$
Thus, the excess loss becomes:
\[
\cL(\w; m) - \min_{\w'} \cL(\w'; m) = \frac{1}{2} \norm{ \X_m \prn{ \w - \X_m^+ \y_m } }^2
= \frac{1}{2} \prn{ \w - \X_m^+ \y_m }^\top \X_m^\top \X_m \prn{ \w - \X_m^+ \y_m }.
\]

Meanwhile,
$
f^{(t)}(\w; m) = \frac{1}{2} \prn{ \w - \X_m^+ \y_m }^\top \A_m \prn{ \w - \X_m^+ \y_m }.
$

By the sandwich inequality
$
\frac{1}{g'(0)} \A_m \preccurlyeq \X_m^\top \X_m \preccurlyeq \frac{R^2}{g(R^2)} \A_m,
$
we conclude:
\[
\frac{1}{g'(0)} f^{(t)}(\w; m)
\leq \cL(\w; m) - \min_{\w'} \cL(\w'; m)
\leq \frac{R^2}{g(R^2)} f^{(t)}(\w; m). \qedhere
\]
\end{proof}

\begin{recall}[\cref{lem:reg_function_properties} --- properties of the IGD objectives]
For $t\in \cnt{k}$,
define $\freg, \fbud$ as in \cref{reduc:regularized_to_sgd,reduc:early_to_sgd},
and recall the data radius $R \eqdef \max_{m \in \cnt{M}} \norm{\X_m}_2$.
\begin{enumerate}[label=(\roman*), leftmargin=.7cm, itemindent=0cm, labelsep=0.2cm, topsep=-1pt, itemsep=-2pt]
\item 
$\freg,\fbud$ are both convex and $\beta$-smooth for
$\beta^{(t)}_r\eqdef\frac{1}{\eta_t}\frac{R^2}{R^2+\lambda_t},\,
\beta^{(t)}_b\eqdef\frac{1}{\eta_t}\prn{1 - (1 - \gamma_t R^2)^{N_t}}$.

\item 
Both functions bound the ``excess'' loss from both sides,
\ie $\forall \w\in\reals^d, \forall t\in\cnt{k}, \forall m\in\cnt{m}$,
\vspace{-.1em}
\[
\begin{aligned}
\lambda_t \eta_t \cdot \freg(\w;m)
\;&\leq\;
\mathcal{L}(\w;m) - \min_{\w'} \mathcal{L}(\w';m)
\;\leq\;
\frac{R^2}{\beta^{(t)}_r} \cdot \freg(\w;m)
\,,
\\
\frac{\eta_t}{\gamma_t N_t} \cdot \fbud(\w;m)
\;&\leq\;
\mathcal{L}(\w;m) - \min_{\w'} \mathcal{L}(\w';m)
\;\leq\;
\frac{R^2}{\beta^{(t)}_b} \cdot \fbud(\w;m)
\,.
\end{aligned}
\]

\item
Finally, when the tasks are jointly realizable (see \cref{asm:realizability}), the same $\teacher$ minimizes all surrogate objectives 
simultaneously. That is,
$$\mathcal{L}\fprn{\teacher;m}=\freg\fprn{\teacher;m}=\fbud\fprn{\teacher;m}=0, 
\quad
\forall t\in\cnt{k}, \forall m\in\cnt{M}\,.$$

\end{enumerate}
\end{recall}
\begin{proof}[Proof of \Cref{lem:reg_function_properties}]
Recall the definitions of the IGD objectives:
\begin{align*}
\freg(\w; m) 
&\eqdef \frac{1}{2} \norm{ \sqrt{g_r(\X_m^\top \X_m)} \prn{ \w - \X_m^+ \y_m } }^2,
\\
\fbud(\w; m) 
&\eqdef \frac{1}{2} \norm{ \sqrt{g_b(\X_m^\top \X_m)} \prn{ \w - \X_m^+ \y_m } }^2
\,,
\end{align*}
where the functions $g_r, g_b : \R \to \R$ are applied spectrally (i.e., to the eigenvalues of $\X_m^\top \X_m$), and are defined as:
\[
g_r(\xi) \eqdef \frac{1}{\eta_t} \prn{ 1 - \frac{\lambda_t}{\xi + \lambda_t} },
\qquad
g_b(\xi) \eqdef \frac{1}{\eta_t} \prn{ 1 - \prn{1 - \gamma_t \xi}^{N_t} }.
\]

Note that both $\freg$ and $\fbud$ are standard quadratic forms and hence convex in $\w$.

We verify that $g_r$ and $g_b$ satisfy the assumptions of \Cref{lem:general_reduction_properties} on the domain $\xi \in [0, R^2]$:

\begin{itemize}[leftmargin=.7cm, itemindent=0cm, labelsep=0.2cm]
\item $g_r$ is differentiable with
\[
g_r'(\xi) = \frac{\lambda_t}{\eta_t (\xi + \lambda_t)^2} \geq 0,
\qquad
g_r''(\xi) = -\frac{2 \lambda_t}{\eta_t (\xi + \lambda_t)^3} \leq 0,
\]
so $g_r$ is non-decreasing and concave.

\item $g_b$ is differentiable with
\[
g_b'(\xi) = \frac{N_t \gamma_t}{\eta_t} (1 - \gamma_t \xi)^{N_t - 1} \geq 0,
\qquad
g_b''(\xi) = -\frac{N_t (N_t - 1) \gamma_t^2}{\eta_t} (1 - \gamma_t \xi)^{N_t - 2} \leq 0,
\]
Thus, $g_b$ is also non-decreasing and concave.
\end{itemize}

In addition, we note:
\[
g_r(0) = 0, \quad g_r'(0) = \frac{1}{\eta_t \lambda_t} > 0,
\qquad
g_b(0) = 0, \quad g_b'(0) = \frac{N_t \gamma_t}{\eta_t} > 0,
\]
and we compute the smoothness constants:
\[
g_r(R^2) = \frac{R^2}{\eta_t (R^2 + \lambda_t)} = \frac{1}{\beta_r^{(t)}},
\qquad
g_b(R^2) = \frac{1}{\eta_t} \prn{1 - (1 - \gamma_t R^2)^{N_t}} = \frac{1}{\beta_b^{(t)}}.
\]

Hence, by \Cref{lem:general_reduction_properties}, both $\freg$ and $\fbud$ are $\beta^{(t)}$-smooth with the claimed parameters $\beta_r^{(t)}, \beta_b^{(t)}$, and they satisfy the two-sided bounds:
\[
\frac{1}{g_r'(0)} \freg(\w; m)
\leq \cL(\w; m) - \min_{\w'} \cL(\w'; m)
\leq \frac{R^2}{g_r(R^2)} \freg(\w; m),
\]
\[
\frac{1}{g_b'(0)} \fbud(\w; m)
\leq \cL(\w; m) - \min_{\w'} \cL(\w'; m)
\leq \frac{R^2}{g_b(R^2)} \fbud(\w; m).
\]

Substituting in $g_r'(0)$ and $g_b'(0)$ yields the bounds stated in part (ii).

Finally, for part (iii), assume the tasks satisfy joint realizability (\Cref{asm:realizability}), meaning that for some common minimizer $\teacher$,
\[
\cL(\teacher; m) = \min_{\w'} \cL(\w'; m), \quad \forall m.
\]
Then by the lower bounds in part (ii), both $\freg(\teacher; m) = 0$ and $\fbud(\teacher; m) = 0$ for all $t, m$, completing the proof.
\end{proof}
\newpage
\section{Proofs for fixed regularization strength}
\label{app:fixed_bounds_proofs}

\begin{recall}[\cref{lem:general_fixed_parameters} --- rates for fixed regularization strength]
Assume a random with-replacement ordering over jointly realizable tasks. Then, for each of \cref{schm:regularized,schm:early_stopped}, the expected loss after $k \geq 1$ iterations is upper bounded as:
\begin{enumerate}[label=(\roman*), leftmargin=.7cm, itemindent=0cm, labelsep=0.2cm]
\item 
\textbf{Fixed coefficient:}
For \cref{schm:regularized} with a regularization coefficient $\lambda > 0$,
\[
\mathbb{E}_{\tau}\mathcal{L}\left(\w_k\right) 
\leq 
\frac{
e \norm{ \w_0 - \w_\star }^2 R^2
}{
2 \cdot \frac{R^2}{R^2 + \lambda}
\cdot \left(2 - \frac{R^2}{R^2 + \lambda} \right)
\cdot
k^{1 - \frac{R^2}{R^2 + \lambda} \prn{1 - \frac{R^2}{4(R^2 + \lambda)} } }
}\,;
\]

\item 
\textbf{Fixed budget:}
For \cref{schm:early_stopped} with step size $\gamma \in (0, 1/R^2)$ and budget $N \in \mathbb{N}$,
\[
\mathbb{E}_{\tau}\mathcal{L}\left(\w_k\right) 
\leq 
\frac{
e \norm{ \w_0 - \w_\star }^2 R^2
}{
2 \cdot \prn{1 - (1 - \gamma R^2)^{2N}}
\cdot
k^{1 - \prn{1 - (1 - \gamma R^2)^N} \prn{1 - \frac{1 - (1 - \gamma R^2)^N}{4}} }
}\,.
\]
\end{enumerate}
\end{recall}

\begin{proof}[Proof of \Cref{lem:general_fixed_parameters}]
From \Cref{reduc:regularized_to_sgd,reduc:early_to_sgd}, the iterates of \Cref{schm:regularized,schm:early_stopped} are equivalent to those of IGD (\Cref{schm:igd}) applied to the respective surrogate objectives $\freg$ and $\fbud$. When $\eta, \lambda, \gamma, N$ are fixed, the functions $\freg, \fbud$ do not depend on $t$, and under a random ordering with replacement, the update rule becomes standard SGD.

By \Cref{lem:reg_function_properties}, the surrogates $\freg$ and $\fbud$ are jointly realizable whenever the original losses are, and hence satisfy the assumptions of the following result from \citet{evron2025better}.

\begin{quote}
\textit{Rephrased Theorem 5.1 of \citet{evron2025better}}:
Let $\bar{f}(\w) \eqdef \frac{1}{M} \sum_{m=1}^M f(\w; m)$, where each $f(\w; m) \eqdef \frac{1}{2} \norm{ \tilde{\A}_m \w - \tilde{\vb}_m }^2$ is $\beta$-smooth, and assume realizability: $\bar{f}(\w_\star) = 0$ for some $\w_\star$.
Then for any initialization $\w_0$ and step size $\eta \in \prn{0, \frac{2}{\beta}}$, SGD with replacement satisfies:
\[
\E_{\tau} \bar{f}(\w_k)
\leq 
\frac{e \norm{\w_0 - \w_\star}^2}{2\eta(2 - \eta\beta) \cdot k^{1 - \eta\beta \prn{1 - \eta\beta / 4}}}.
\]
\end{quote}

We now instantiate this result for each setting:

\textit{(i) Fixed Regularization.}  
For \Cref{schm:regularized}, the surrogate $\freg$ is $\beta_r$-smooth with
\[
\beta_r \eqdef \frac{1}{\eta} \cdot \frac{R^2}{R^2 + \lambda},
\quad\text{which implies}\quad
\eta = \frac{1}{\beta_r} \cdot \frac{R^2}{R^2 + \lambda} < \frac{2}{\beta_r}.
\]
The loss is upper bounded by the surrogate:
\[
\cL(\w_k) \leq \frac{R^2}{\beta_r} \cdot \bar{f}_r(\w_k),
\]
which gives:
\[
\E_{\tau} \cL(\w_k)
\leq \frac{R^2}{\beta_r} \cdot \E_{\tau} \bar{f}_r(\w_k)
\leq 
\frac{e \norm{ \w_0 - \w_\star }^2 R^2}{
2\eta\beta_r (2 - \eta\beta_r)
\cdot
k^{1 - \eta\beta_r \prn{1 - \eta\beta_r / 4}} }.
\]

Substituting $\beta_r = \frac{1}{\eta} \cdot \frac{R^2}{R^2 + \lambda}$ gives:
\[
\E_{\tau} \cL(\w_k)
\leq 
\frac{
e \norm{ \w_0 - \w_\star }^2 R^2
}{
2 \cdot \frac{R^2}{R^2 + \lambda}
\cdot \left(2 - \frac{R^2}{R^2 + \lambda} \right)
\cdot
k^{1 - \frac{R^2}{R^2 + \lambda} \prn{1 - \frac{R^2}{4(R^2 + \lambda)} } }
}.
\]
\textit{(ii) Fixed Budget.}  
For \Cref{schm:early_stopped}, the surrogate $\fbud$ is $\beta_b$-smooth with
\[
\beta_b \eqdef \frac{1}{\eta} \cdot \prn{1 - (1 - \gamma R^2)^N},
\quad\text{so that}\quad
\eta = \frac{1}{\beta_b} \cdot \prn{1 - (1 - \gamma R^2)^N}
< \frac{2}{\beta_b}.
\]
As before, we have:
\[
\E_{\tau} \cL(\w_k)
\leq \frac{R^2}{\beta_b} \cdot \E_{\tau} \bar{f}_b(\w_k)
\leq 
\frac{
e \norm{ \w_0 - \w_\star }^2 R^2
}{
2\eta\beta_b (2 - \eta\beta_b)
\cdot
k^{1 - \eta\beta_b \prn{1 - \eta\beta_b / 4}} }.
\]

Substituting $\beta_b = \frac{1}{\eta} \cdot \prn{1 - (1 - \gamma R^2)^N}$ yields:
\[
\E_{\tau} \cL(\w_k)
\leq
\frac{
e \norm{ \w_0 - \w_\star }^2 R^2
}{
2 \cdot \prn{1 - (1 - \gamma R^2)^{2N}}
\cdot
k^{1 - \prn{1 - (1 - \gamma R^2)^N} \prn{1 - \frac{1 - (1 - \gamma R^2)^N}{4}} }
}.
\]

This completes the proof. 

To extend this result to the without-replacement case (see \cref{rem:ext_to_wor}), we can simply invoke the without-replacement extension of Theorem~5.1 in \citet{evron2025better}.
\end{proof}

\begin{recall}[\cref{cor:near_optimal_fixed_parameters} --- near-optimal rates via fixed regularization strength]
Assume a random with-replacement ordering over jointly realizable tasks.
When the regularization strengths in \cref{lem:general_fixed_parameters} are set as follows:
\begin{enumerate}[label=(\roman*), leftmargin=.7cm, itemindent=0cm, labelsep=0.2cm, itemsep=2pt,topsep=0pt]
\item
\textbf{Fixed coefficient:}
For \cref{schm:regularized}, set regularization coefficient $\lambda \eqdef R^2(\ln k - 1)$;

\item
\textbf{Fixed budget:}
For \cref{schm:early_stopped}, choose step size $\gamma \in (0, 1/R^2)$ and set budget
$N \eqdef \frac{\ln\prn{1 - \frac{1}{\ln k}}}{\ln\prn{1 - \gamma R^2}}$;
\end{enumerate}
\vspace{0.5em}

Then, under either \cref{schm:regularized} or \cref{schm:early_stopped}, the expected loss after $k \geq 2$ iterations is bounded as:
\[
\mathbb{E}_{\tau}\mathcal{L}\left(\w_k\right)
\leq 
\frac{5 \norm{\w_0 -\teacher}^2 R^2 \ln k}{k} \,.
\]
\end{recall}
\begin{proof}[Proof of \Cref{cor:near_optimal_fixed_parameters}]
We apply the general loss bound from \Cref{lem:general_fixed_parameters}, which holds for both fixed-regularization and fixed-budget variants:
\[
\E_{\tau} \cL(\w_k)
\leq 
\frac{
e \norm{ \w_0 - \w_\star }^2 R^2
}{
2 \eta \beta \prn{2 - \eta\beta}
\cdot
k^{1 - \eta\beta \prn{1 - \eta\beta / 4}} }.
\]

Now plug in the parameter settings from the statement of the lemma.

\textit{(i) Fixed Regularization.}  
Set $\lambda \eqdef R^2(\ln k - 1)$. Then:
\[
\beta_r = \frac{1}{\eta} \cdot \frac{R^2}{R^2 + \lambda}
= \frac{1}{\eta} \cdot \frac{R^2}{R^2 + R^2(\ln k - 1)} = \frac{1}{\eta} \cdot \frac{1}{\ln k}
\quad \Rightarrow \quad
\eta\beta_r = \frac{1}{\ln k}.
\]
\textit{(ii) Fixed Budget.}  
Set $N \eqdef \frac{\ln\prn{1 - \frac{1}{\ln k}}}{\ln\prn{1 - \gamma R^2}}$. Then:
\[
(1 - \gamma R^2)^N = 1 - \frac{1}{\ln k}
\quad \Rightarrow \quad
\beta_b = \frac{1}{\eta} \cdot \prn{1 - (1 - \gamma R^2)^N} = \frac{1}{\eta} \cdot \frac{1}{\ln k}
\quad \Rightarrow \quad
\eta\beta_b = \frac{1}{\ln k}.
\]

In both cases, we have $\eta\beta = \frac{1}{\ln k}$. Substituting into the loss bound:
\begin{align*}
\E_{\tau} \cL(\w_k)
&\leq
\frac{
e \norm{ \w_0 - \w_\star }^2 R^2
}{
\frac{2}{\ln k} \cdot \prn{2 - \frac{1}{\ln k}}
\cdot
k^{1 - \frac{1}{\ln k} \prn{1 - \frac{1}{4 \ln k}}}
} \\
&=
\norm{ \w_0 - \w_\star }^2 R^2
\cdot
\frac{e \ln k}{2 \prn{2 - \frac{1}{\ln k}}}
\cdot
\frac{1}{k}
\cdot
k^{\frac{1}{\ln k} - \frac{1}{4 (\ln k)^2}} \\
&=
\frac{
\norm{ \w_0 - \w_\star }^2 R^2 \ln k
}{
k}
\cdot
\frac{e^{2 - \frac{1}{4 \ln k}}}{2 \prn{2 - \frac{1}{\ln k}}}.
\end{align*}

Since $e^{2 - \frac{1}{4 \ln k}} / \prn{2 - \frac{1}{\ln k}} \leq 5$ for all $k \geq 2$, we conclude:
\[
\E_{\tau} \cL(\w_k)
\leq
\frac{5 \norm{ \w_0 - \w_\star }^2 R^2 \ln k}{k}.
\]
\end{proof}
\newpage
\section{Proofs for scheduled regularization strength}
\label{app:scheduled_bounds_proofs}

\begin{recall}[\cref{thm:optimal_scheduled_parameters} --- optimal rates for increasing regularization]
Assume a random with-replacement ordering over jointly realizable tasks.
Consider either \cref{schm:regularized} or \cref{schm:early_stopped} with the following time-dependent schedules:
\begin{enumerate}[label=(\roman*), leftmargin=.7cm, itemindent=0cm, labelsep=0.2cm, topsep=-2pt]
\item
\textbf{Scheduled coefficient:}
For \cref{schm:regularized}, set regularization coefficient 
$\displaystyle
\lambda_t = \frac{13 R^2}{3} \cdot \frac{k+1}{k-t+2}$;

\item
\textbf{Scheduled budget:}\\
For \cref{schm:early_stopped}, choose step sizes $\gamma_t \in (0, 1/R^2)$ and set budget 
$\displaystyle
N_t = \frac{3}{13 \gamma_t R^2} \cdot \frac{k - t+2}{k+1}$;
\end{enumerate}
Then, under either \cref{schm:regularized} or \cref{schm:early_stopped}, the expected loss after $k \geq 2$ iterations is bounded as:
\[
\mathbb{E}_{\tau}\mathcal{L}(\w_k) 
\leq 
\frac{20 \norm{\w_0 - \teacher}^2 R^2}{k+1}.
\]
\end{recall}

\begin{proof}[Proof of \Cref{thm:optimal_scheduled_parameters}]
We apply \Cref{thm:relaxed_sgd} with the original loss $f(\w;m) = \cL(\w;m)$ and surrogates $\tildft(\w;m) = \freg(\w;m)$ or $\fbud(\w;m)$, defined in \Cref{reduc:regularized_to_sgd,reduc:early_to_sgd}.

\textit{Smoothness and convexity.}  
From \Cref{lem:reg_function_properties}, both surrogates are convex. Their smoothness constants are:
\[
\beta_r^{(t)} = \frac{1}{\eta_t} \cdot \frac{R^2}{R^2 + \lambda_t},
\qquad
\beta_b^{(t)} = \frac{1}{\eta_t} \left(1 - \left(1 - \gamma_tR^2 \right)^{N_t} \right).
\]

\textbf{Regularized:} Setting $\lambda_t = 1/\eta_t$ gives
\[
\beta_r^{(t)} = \frac{R^2}{\eta_t R^2 + 1} \leq R^2.
\]

\textbf{Budgeted:} With $N_t = \eta_t/\gamma_t$, we get $\tfrac{\eta_t R^2}{N_t} = \gamma_t R^2 \in (0,1)$. Using $(1 - x)^n \geq 1 - nx$, we obtain:
\[
\beta_b^{(t)} \leq \frac{1}{\eta_t} (1 - (1 - \eta_t R^2)) = R^2.
\]

Thus, both surrogates are $R^2$-smooth, matching the smoothness of the loss $\cL(\cdot;m)$ and satisfying condition (i) of \Cref{thm:relaxed_sgd}.

\textit{Joint realizability.}  
From \Cref{lem:reg_function_properties}, if the original tasks are jointly realizable, then so are the surrogates:
\[
\freg(\teacher;m) = \fbud(\teacher;m) = \cL(\teacher;m) = 0,
\quad\forall t \in [k],\, m \in [M],
\]
so condition (iii) of \Cref{thm:relaxed_sgd} is satisfied.

\textit{Two-sided bounds.}  
We verify condition (ii) of \Cref{thm:relaxed_sgd} using the two-sided inequalities from \Cref{lem:reg_function_properties}:
\begin{align*}
\lambda_t \eta_t \cdot \freg(\w;m)
\;\leq\;
\cL(\w;m) - \min_{\w'} \cL(\w';m)
\;\leq\;
\frac{R^2}{\beta_r^{(t)}} \cdot \freg(\w;m),
\\
\frac{\eta_t}{\gamma_t N_t} \cdot \fbud(\w;m)
\;\leq\;
\cL(\w;m) - \min_{\w'} \cL(\w';m)
\;\leq\;
\frac{R^2}{\beta_b^{(t)}} \cdot \fbud(\w;m).
\end{align*}

By our choice of $\lambda_t = 1/\eta_t$ and $N_t = \eta_t / \gamma_t$, we have $\lambda_t \eta_t = \frac{\eta_t}{\gamma_t N_t} = 1$, so the lower bounds reduce to
\[
\freg(\w;m) \leq \cL(\w;m),
\qquad
\fbud(\w;m) \leq \cL(\w;m).
\]

Now set $\nu_t \eqdef \eta_t$. To satisfy the upper bound $\cL(\w;m) \leq (1 + \nu_t \beta) \cdot \tildft(\w;m)$, it suffices to show
\[
\frac{R^2}{\beta^{(t)}} \leq 1 + \eta_t R^2.
\]

\textbf{Regularized:}
$\beta_r^{(t)} = \frac{1}{\eta_t} \cdot \frac{R^2}{R^2 + \lambda_t} = \frac{R^2}{\eta_t R^2 + 1}
\Rightarrow \frac{R^2}{\beta_r^{(t)}} = 1 + \eta_t R^2$.

\textbf{Budgeted:}
With $\gamma_t R^2 = \tfrac{\eta_t R^2}{N_t} \in (0,1)$, and using
$
\prn{1 - \tfrac{x}{n}}^n \leq e^{-x} \leq \tfrac{1}{1 + x} \quad \text{for } x \in (0,1),
$
we get:
\[
\frac{R^2}{\beta_b^{(t)}} 
= \frac{R^2}{\frac{1}{\eta_t} \prn{1 - \prn{1 - \frac{\eta_t R^2}{N_t}}^{N_t}}}
\leq \frac{R^2}{\frac{1}{\eta_t} \prn{1 - \frac{1}{1 + \eta_t R^2}}}
= 1 + \eta_t R^2.
\]

Hence, both the lower and upper bounds hold, and condition (ii) is satisfied.

Setting the learning rate schedule to:
\[
\eta = \frac{3}{13 R^2}, 
\quad \text{and} \quad 
\eta_t = \eta \cdot \frac{k - t + 2}{k}.
\]
Applying \Cref{thm:relaxed_sgd} yields:
\[
\E_\tau \cL(\w_k)
= \E_\tau \F(\w_k)
\leq
\frac{20 \norm{ \w_0 - \teacher }^2 R^2}{k+1}.
\]
\end{proof}

\newpage
\subsection{Proof of \texorpdfstring{\cref{thm:relaxed_sgd}}{}}
In this section, we provide the proof of our main lemma establishing the guarantees of time varying SGD.
In  order to better align with conventions in the optimization literature from which our techniques draw upon, we adopt different indexing for the SGD iterates throughout this section. Below, we restate the lemma with the alternative indexing scheme; the original \cref{thm:relaxed_sgd} follows immediately by a simple shift of $k+1 \to k$ and $1\to 0$ in the indexes of the iterates $\w_t$.
\begin{lemma}[Restatement of \cref{thm:relaxed_sgd} with alternative indexing]\label{thm:relaxed_sgd_indexing}
 Assume $\tau$ is a random with-replacement ordering over $M$ jointly realizable convex and $\beta$-smooth loss functions $f(\cdot; m)\colon \R^d \to \R$. Define the average loss $\F(\w) \eqq \E_{m\sim \tau} f(\w; m)$.
Let $k\geq 2$, 
and suppose $\cbr{\tildft (\cdot; m) \mid t\in[k], m\in [M]}$ for $t\in [k]$ are time-varying surrogate losses that satisfy:
\begin{enumerate}[label=(\roman*), leftmargin=.7cm, itemindent=0cm, labelsep=0.2cm]
    \item Smoothness and convexity: $\tildft (\cdot; m)$ are $\beta$-smooth and convex for all $m\in [M],t\in [k]$\,;

    \item There exists a weight sequence $\nu_1, \ldots, \nu_k$ such that for all $m\in [M],t\in [k], \w\in \R^d$:
    \begin{align*}
    \tildft(\w; m) - \tildft(\teacher; m) 
    \leq f (\w; m) - f (\teacher; m) 
    \leq (1 \!+\! \nu_t\beta)(\tildft(\w; m) - \tildft(\teacher; m))\,;
    \end{align*}

    \item Joint realizability:
    \begin{align*}
        \teacher \in \cap_{t\in [k]}\cap_{m\in [M]} \argmin_\w \tildft(\w; m);
        \quad 
        \forall m\in [M], t\in[k], \tildft(\teacher; m) = f(\teacher; m)\,.
    \end{align*}
\end{enumerate}
Then, for any initialization $\w_1\in \R^d$, the SGD updates:
\begin{align*}
    t=1, \ldots, k:\quad 
    \w_{t+1}
    = \w_{t} - \eta_t \nabla \tildft(\w_t; \tau_t)
\end{align*}
with a step size schedule that satisfies 
$\nu_t \leq \eta_t=\eta\br{\frac{(k+1)-t+1}{k+1}}\forall t\in[k]$ for some $\eta\leq 3/(13 \beta)$,
guarantees the following \textbf{expected loss bound}:
    \begin{align*}
    \E \F(\w_{k+1}) - \F(\teacher) \leq \frac{9}{ 2 \eta (k+1)}\norm{\w_1 - \teacher}^2\,.
    \end{align*}
    In particular, for $\eta = \frac{3}{13 \sm}$ we obtain
    \begin{align*}
    \E \F(\w_{k+1}) - \F(\teacher)
    \leq
    \frac{20 \sm \norm{\w_1 - \teacher}^2}{k+1}
    \,.
    \end{align*}
    Furthermore, we also obtain the following \textbf{seen-task loss bound}:
    \begin{align*}
    \E \sbr{\frac1k\sum_{t=1}^k f(\w_{k+1}; \tau_t)
    - f(\teacher; \tau_t)
    }\leq \frac{20}{ \eta (k+1)}\norm{\w_1 - \teacher}^2.
    \end{align*}
    In particular, for $\eta =\frac{3}{13 \sm}$ we obtain
    \begin{align*}
    \E \sbr{\frac1k\sum_{t=1}^k f(\w_{k+1}; \tau_t)
    - f(\teacher; \tau_t)
    }
    \leq
    \frac{87 \sm \norm{\w_1 - \teacher}^2}{k+1}
    .
    \end{align*}
\end{lemma}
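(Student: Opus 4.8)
Write $D_t \eqdef \norm{\w_t-\teacher}^2$, $g_t \eqdef \nabla\tildft(\w_t;\tau_t)$, and $\delta_t \eqdef \tildft(\w_t;\tau_t)-\tildft(\teacher;\tau_t)\ge 0$; note that conditions (ii)--(iii) force $\teacher$ to minimize every $f(\cdot;m)$ as well, since $0\le \tildft(\w;m)-\tildft(\teacher;m)\le f(\w;m)-f(\teacher;m)$. The plan is to run the analysis on the surrogate iterates and transfer to $\F$ only at the end. \emph{Step 1 (one-step inequality):} expanding the update gives $D_{t+1}=D_t-2\eta_t\langle g_t,\w_t-\teacher\rangle+\eta_t^2\norm{g_t}^2$; convexity of $\tildft(\cdot;\tau_t)$ yields $\langle g_t,\w_t-\teacher\rangle\ge\delta_t$, while realizability makes $\nabla\tildft(\teacher;\tau_t)=0$ so $\beta$-smoothness gives $\norm{g_t}^2\le 2\beta\delta_t$; hence $D_{t+1}\le D_t-2\eta_t(1-\beta\eta_t)\delta_t$, and taking the conditional expectation over $\tau_t$ turns $\E[\delta_t]$ into $\E_{m\sim\tau}\tildft(\w_t;m)-\F(\teacher)$ (using $\tildft(\teacher;m)=f(\teacher;m)$ from (iii)).

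\emph{Step 2 (surrogate-to-loss transfer).} Averaging the two-sided bound (ii) over $m\sim\tau$ gives $(1+\nu_t\beta)^{-1}\prn{\F(\w)-\F(\teacher)}\le \E_{m\sim\tau}\tildft(\w;m)-\F(\teacher)\le \F(\w)-\F(\teacher)$. Combining with $\nu_t\le\eta_t$ and the step cap $\beta\eta_t\le\beta\eta\le 3/13$ (so $1-\beta\eta_t\ge 10/13$ and $1+\nu_t\beta\le 16/13$) converts Step 1 into $\E[D_{t+1}]\le\E[D_t]-c\,\eta_t\prn{\E\F(\w_t)-\F(\teacher)}$ for an explicit constant $c>0$; in particular $\E[D_t]$ is non-increasing. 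Summing this already bounds the \emph{step-size-weighted average} of the gaps by $O(\norm{\w_1-\teacher}^2/(\eta k))$, but not the last iterate.

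\emph{Step 3 (main obstacle: extracting the last iterate under linear decay).} To reach $\E\F(\w_{k+1})-\F(\teacher)$ rather than an average, the plan is to adapt the linear-decay last-iterate argument of \citet{defazio2024optimallineardecaylearning}: apply convexity of each $\tildft(\cdot;\tau_t)$ not only against the comparator $\teacher$ but also against the later iterates $\w_{j+1}$ ($j\ge t$, with $\w_t-\w_{j+1}=\sum_{s=t}^{j}\eta_s g_s$), and aggregate these inequalities with a second layer of weights chosen so that the resulting potential telescopes onto $\F(\w_{k+1})-\F(\teacher)$. The schedule $\eta_t=\eta\br{(k+2-t)/(k+1)}$ is exactly what makes the successive weight-differences uniform so that the telescope closes, while $\nu_t\le\eta_t$ guarantees the distortion between $f$ and the time-varying $\tildft$ at step $t$ is absorbed into that step's budget. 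Carrying this through produces $\bigl(\sum_{t=1}^{k}\eta_t\bigr)\prn{\E\F(\w_{k+1})-\F(\teacher)}\lesssim\norm{\w_1-\teacher}^2$; since $\sum_{t=1}^{k}\eta_t=\tfrac{\eta k(k+3)}{2(k+1)}=\Theta(\eta k)$, tracking constants yields the $\tfrac{9}{2\eta(k+1)}\norm{\w_1-\teacher}^2$ bound, and $\eta=\tfrac{3}{13\beta}$ gives the $\tfrac{20\beta}{k+1}\norm{\w_1-\teacher}^2$ form. I expect this aggregation—choosing the weights and comparators so everything collapses to the final iterate while keeping the constants sharp and the time-varying surrogates under control—to be the technically delicate part; Steps 1--2 and 4 are bookkeeping.

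\emph{Step 4 (seen-task loss).} Write $G\eqdef\tfrac1k\sum_{t=1}^{k}\prn{f(\w_{k+1};\tau_t)-f(\teacher;\tau_t)}\ge 0$ and split each summand as $\prn{f(\w_{k+1};\tau_t)-f(\w_{t+1};\tau_t)}+\prn{f(\w_{t+1};\tau_t)-f(\teacher;\tau_t)}$. For the second bracket, condition (ii) together with the fact that the $\eta_t$-step on the $\beta$-smooth $\tildft(\cdot;\tau_t)$ does not increase it (since $\eta_t\le\eta\le 1/\beta$) gives $f(\w_{t+1};\tau_t)-f(\teacher;\tau_t)\le(1+\nu_t\beta)\delta_t\le\tfrac{16}{13}\delta_t$, and $\sum_t\delta_t$ is controlled by the same potential used in Step 3. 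The first bracket is a drift (forgetting) term: bound it via $\beta$-smoothness and convexity of $f(\cdot;\tau_t)$ in terms of $\w_{k+1}-\w_{t+1}=-\sum_{s=t+1}^{k}\eta_s g_s$, then control the cross-terms $\langle g_t,g_s\rangle$ and $\norm{\sum_s\eta_s g_s}^2$ using $\norm{g_s}^2\le 2\beta\delta_s$ and the decreasing step sizes. Each piece contributes $O\prn{\norm{\w_1-\teacher}^2/(\eta k)}$, and chasing constants yields $\E[G]\le\tfrac{20}{\eta(k+1)}\norm{\w_1-\teacher}^2$, i.e. $\tfrac{87\beta}{k+1}\norm{\w_1-\teacher}^2$ at $\eta=\tfrac{3}{13\beta}$.
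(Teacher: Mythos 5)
Your Steps 1--2 are correct and match the paper's setup. The substantive issues are in Steps 3 and 4.

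Step 3 correctly identifies the right family of arguments (a Zamani--Glineur-type weighted aggregation against auxiliary comparators, which is exactly what the paper does in its Theorem on time-varying SGD), but it defers precisely the part that constitutes the proof. The paper must (a) exhibit an explicit second weight sequence $v_t$ (it uses $v_t=\tfrac{2}{k-t+1}+\tfrac1k$, whose successive differences are \emph{not} uniform, contrary to your heuristic that the linear-decay schedule ``makes the weight-differences uniform''), (b) show that the resulting coefficients $c_t=\eta_t v_t^2-\beta\eta_t^2v_t^2-(1+\eta_t\beta)(v_t-v_{t-1})\sum_{s\ge t}\eta_sv_s$ are all nonnegative with $c_{k+1}\gtrsim \eta/(k+1)$ --- a page-long computation that is where the constant $3/(13\beta)$ actually comes from --- and (c) handle the fact that the aggregation evaluates the time-$t$ surrogate at earlier iterates $\w_s$, which requires invoking the two-sided condition (ii) with $\nu_s\le\eta_s$ to replace $\tildF{t}(\w_s)-\tildF{t}(\teacher)$ by $(1+\eta_s\beta)(\tildF{s}(\w_s)-\tildF{s}(\teacher))$. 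You gesture at (c) but never at (a) or (b); without them there is no proof of the $9/(2\eta(k+1))$ bound, only a statement that one should exist.

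Step 4 is a genuinely different route from the paper's, and as described it fails. The paper proves the seen-task bound by an algorithmic-stability argument: it compares $\w_{k+1}$ to the output $\w_{k+1}^{(i)}$ of a run in which the $i$-th sample is resampled, uses non-expansiveness of the gradient steps so that only step $i$ contributes $O(\beta\eta_i^2 f(\w_i;\tau_i))$ to $\norm{\w_{k+1}^{(i)}-\w_{k+1}}^2$, and then bounds $\sum_i\eta_i\,\E f(\w_i;\tau_i)\le\norm{\w_1-\teacher}^2$. Your direct decomposition instead requires controlling the drift term $\E\bigl[f(\w_{k+1};\tau_t)-f(\w_{t+1};\tau_t)\bigr]$ via $\tfrac\beta2\norm{\w_{k+1}-\w_{t+1}}^2$ with $\w_{k+1}-\w_{t+1}=-\sum_{s>t}\eta_s g_s$. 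Using only $\norm{g_s}^2\le2\beta\delta_s$ and Cauchy--Schwarz on the cross terms gives
\begin{align*}
\Bignorm{\sum_{s>t}\eta_s g_s}^2
\le \Bigprn{\sum_{s>t}\eta_s}\sum_{s>t}\eta_s\norm{g_s}^2
\lesssim \beta\,\eta k\sum_{s}\eta_s\delta_s
\lesssim \beta\,\eta k\,\norm{\w_1-\teacher}^2,
\end{align*}
so the drift contribution is $O(\beta^2\eta k\norm{\w_1-\teacher}^2)$, which \emph{grows} with $k$ rather than decaying like $1/(\eta k)$; there is no cancellation among the $\langle g_s,g_{s'}\rangle$ cross terms to exploit, since the gradients are neither independent nor mean-zero. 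You would need to replace this step with a stability-type comparison (or another mechanism that isolates a single step's contribution) to obtain the stated $20\norm{\w_1-\teacher}^2/(\eta(k+1))$ bound.
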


\newpage

To prove the lemma above, we begin with a number of preliminary results.
The next theorem provides an extension of \cite{zamani2023exact} for our ``relaxed SGD'' setting that accommodates time varying distributions of functions.
\begin{theorem}\label{thm:timevarying_frenchmen}
Let $J \geq 2$, and assume $\tau\colon [J] \to [M]$ is a random with-replacement ordering over $M$ jointly realizable convex and $\beta$-smooth loss functions $f(\cdot; m)\colon \R^d \to \R$.
and suppose $\cbr{\tildft (\cdot; m) \mid t\in[J], m\in [M]}$ for $t\in [J]$ are time-varying surrogate losses for which there exists a weight sequence $\nu_1, \ldots, \nu_{J}$ that satisfies, for all $m\in [M],t\in [J], \w\in \R^d$:
    \begin{align*}
    \tildft(\w; m) - \tildft(\teacher; m) 
    \leq f (\w; m) - f (\teacher; m) 
    \leq (1+\nu_t\beta)\br{\tildft(\w; m) - \tildft(\teacher; m)}.
    \end{align*}
Then, for any initialization $\w_1\in \R^d$ and step size sequence $\eta_1, \ldots, \eta_{J}$, as long as $\forall t\in[J]: \eta_t\geq \nu_t$, the SGD updates:
\begin{align}
    \w_{t+1}
    = \w_{t} - \eta_t \nabla \tildft(\w_{t}; \tau_t), \label{varsgd}
\end{align}
guarantee that for any $\teacher \in \R^d$, and weight sequence $0 < v_0 \leq v_1 \leq \cdots \leq v_{J}$:
\begin{align*}
    \sum_{t=1}^{J}c_t \E\sbr{\tildFt(\w_t) 
    - \tildFt(\teacher)}
    \leq \frac{v_0^2}{2}\norm{\w_1 - \teacher}^2
    + \frac{1}{2}\sum_{t=1}^{J}\eta_t^2 v_t^2 \E \norm{\nabla \tildft(\w_{t}; \tau_t)}^2,
\end{align*}
where $c_t \eqq \eta_t v_{t}^2 
    - (1-\eta_t\beta)(v_t - v_{t-1})\sum_{s=t}^J \eta_s v_s$, and $\tildFt(\w) \eqq 
    \E_{m \sim {\rm Unif}[M]}\tildft(\w; m)$.
\end{theorem}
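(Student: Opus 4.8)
The proof adapts the exact weighted‑telescoping (``interpolation'') argument of \citet{zamani2023exact} for the last iterate of (sub)gradient methods to two features absent there: the with‑replacement stochastic choice of component at each step, and the fact that the objective minimized at step $t$ is the time‑varying surrogate $\tildft(\cdot;\tau_t)$ rather than a fixed function. I would carry it out in three stages. \emph{Stage 1 (pathwise one‑step inequalities).} Fix a realization of $\tau$ and write $g_t \eqq \nabla\tildft(\w_t;\tau_t)$. Expanding $\norm{\w_{t+1}-\teacher}^2$ from the update \eqref{varsgd} and lower‑bounding $\iprod{g_t}{\w_t-\teacher}$ by the surrogate gap $\delta_t \eqq \tildft(\w_t;\tau_t)-\tildft(\teacher;\tau_t)$ via convexity of $\tildft(\cdot;\tau_t)$ gives a distance recursion of the form $\tfrac12\norm{\w_{t+1}-\teacher}^2 \le \tfrac12\norm{\w_t-\teacher}^2 - \eta_t\delta_t + \tfrac12\eta_t^2\norm{g_t}^2$. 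I would also record the ``consecutive'' convexity inequality for $\tildft(\cdot;\tau_t)$ relating its values at $\w_t$ and $\w_{t+1}$, and combine it with the sandwich hypothesis $\tilde f^{(t+1)}(\cdot;m)-\tilde f^{(t+1)}(\teacher;m)\le f(\cdot;m)-f(\teacher;m)\le (1+\nu_t\beta)\bigl(\tildft(\cdot;m)-\tildft(\teacher;m)\bigr)$, so that a gap of the time‑$t$ surrogate can be traded for a gap of the time‑$(t+1)$ surrogate at the multiplicative cost $1+\nu_t\beta$. This is essentially the only place the true losses $f$, and the constant $\beta$, enter the argument — purely through the sandwich.

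\emph{Stage 2 (weighted Abel summation).} With the increasing weights $0<v_0\le\cdots\le v_J$ and the tail sums $S_t\eqq\sum_{s=t}^J\eta_s v_s$ (note $S_t-S_{t+1}=\eta_t v_t$), form the linear combination of the two inequality families from Stage 1 using the Zamani--Glineur multiplier pattern — roughly, weight the distance recursion at step $t$ by $v_t^2$ and the consecutive inequality at step $t$ by $(v_t-v_{t-1})S_t$ — and perform summation by parts. The distance terms collapse, leaving only $\tfrac{v_0^2}{2}\norm{\w_1-\teacher}^2$ on the right after discarding nonnegative endpoint terms; the squared‑gradient terms accumulate to $\tfrac12\sum_{t=1}^J\eta_t^2 v_t^2\norm{g_t}^2$; and the surrogate‑gap terms, once the time‑$t$ and time‑$(t+1)$ surrogates appearing at adjacent indices are reconciled via the sandwich and each $\nu_t\beta$ is absorbed into $\eta_t\beta$ using the step‑size condition $\eta_t\ge\nu_t$, leave exactly $\sum_{t=1}^J c_t\delta_t$ with $c_t=\eta_t v_t^2-(1-\eta_t\beta)(v_t-v_{t-1})S_t$.

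\emph{Stage 3 (expectation).} Take the total expectation over $\tau$. Because $\w_t$ is a function of $\tau_1,\dots,\tau_{t-1}$ only and $\tau_t$ is an independent uniform draw from $[M]$ (with replacement), $\E[\delta_t]=\E\bigl[\E[\delta_t\mid\w_t]\bigr]=\E[\tildFt(\w_t)-\tildFt(\teacher)]$, which turns $\sum_t c_t\,\E[\delta_t]$ into the stated left‑hand side, while the gradient‑norm term is retained verbatim (it is discharged later, in \cref{thm:relaxed_sgd}, using smoothness and realizability of the surrogates). Doing the computation pathwise first and taking the expectation only at the end means the possibly‑negative $c_t$'s cause no difficulty: linearity of expectation is all that is needed.

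The delicate point — and the genuinely new part relative to \citet{zamani2023exact} — is Stage 2: they telescope a single fixed function, whereas here the function changes from step to step, and the telescoping closes only after consecutive surrogates are glued together through the common loss $f$ via the sandwich bound. Ensuring that this gluing introduces slack exactly $\eta_t\beta$ (which is why $\nu_t\le\eta_t$ is required) and that the residual coefficient of $\delta_t$ is precisely $c_t$, with no stray error terms surviving the summation by parts, is the crux of the proof; the Stage‑1 expansion and the remaining Abel‑summation bookkeeping are routine.
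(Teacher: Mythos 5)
Your overall strategy is the paper's: adapt the Zamani--Glineur exact last-iterate certificate to stochastic, time-varying surrogates, glue surrogates at different times through the common loss $f$ via the sandwich hypothesis, and take conditional expectations only at the end. Stages 1 and 3 are fine. The gap is in Stage 2, and it is exactly the mechanism that makes this argument work. Weighting the distance-to-$\teacher$ recursion $\tfrac12\norm{\w_{t+1}-\teacher}^2\le\tfrac12\norm{\w_t-\teacher}^2-\eta_t\bigl(\tildft(\w_t;\tau_t)-\tildft(\teacher;\tau_t)\bigr)+\tfrac12\eta_t^2\norm{\g_t}^2$ by $v_t^2$ and summing does \emph{not} make the distance terms collapse: Abel summation leaves the residual $\sum_{t\ge 2}\tfrac{v_t^2-v_{t-1}^2}{2}\norm{\w_t-\teacher}^2$, which is nonnegative (the $v_t$ are increasing), lands on the wrong side of the inequality, and is not controlled by anything you have recorded; moreover the surviving initial term would be $\tfrac{v_1^2}{2}\norm{\w_1-\teacher}^2$, not $\tfrac{v_0^2}{2}\norm{\w_1-\teacher}^2$. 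The paper's device is the drifting reference sequence $\z_0=\teacher$, $\z_t=\tfrac{v_{t-1}}{v_t}\z_{t-1}+\bigl(1-\tfrac{v_{t-1}}{v_t}\bigr)\w_t$, for which $v_{t+1}^2\norm{\w_{t+1}-\z_{t+1}}^2=v_t^2\norm{\w_{t+1}-\z_t}^2$ holds as an identity, so the weighted recursion telescopes with no residual and $v_1^2\norm{\w_1-\z_1}^2=v_0^2\norm{\w_1-\teacher}^2$ produces the stated constant; one then writes $\z_t$ as a convex combination of $\teacher,\w_1,\dots,\w_t$ and applies Jensen. In the equivalent ``dual'' form you would need, at each step $t$, the interpolation inequalities $\abr{\g_t,\w_t-\w_s}\ge\tildft(\w_t;\tau_t)-\tildft(\w_s;\tau_t)$ for \emph{all} $s\le t$ with weights proportional to $v_s-v_{s-1}$ --- not a single ``consecutive'' inequality per step; the factor $(v_t-v_{t-1})\sum_{s=t}^J\eta_sv_s$ in $c_t$ is what remains of those pairwise weights after exchanging the order of summation.

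A second, related issue is the gluing. The quantity that must be converted is $\tildFt(\w_s)-\tildFt(\teacher)$ for $s\le t$ (the time-$t$ surrogate at the \emph{earlier} iterate $\w_s$), and the conversion to $\tildF{s}(\w_s)-\tildF{s}(\teacher)$ is a \emph{single} hop through $f$ --- use the left sandwich at time $t$ and the right sandwich at time $s$ --- costing one factor $(1+\nu_s\beta)\le(1+\eta_s\beta)$ for every pair $(s,t)$. Your description of trading the time-$t$ surrogate for the time-$(t+1)$ surrogate, if chained from $s$ up to $t$, would compound into $\prod_r(1+\nu_r\beta)$ and could not yield a coefficient $c_t$ containing a single factor of $\eta_t\beta$. (Separately: the single-hop argument produces $1+\eta_t\beta$, not $1-\eta_t\beta$, multiplying $(v_t-v_{t-1})\sum_{s=t}^J\eta_sv_s$; the theorem statement --- and hence your $c_t$ --- carries what appears to be a sign typo, which the paper's subsequent Lemma~\ref{lem:c_t} silently corrects by working with $1+a_2\eta_t\beta$.)
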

\begin{proof}
    Define $\z_1,\ldots, \z_J$ recursively by $\z_0 = \teacher$ and for $t \geq 1$: 
\begin{align*}
    \z_{t} = \frac{v_{t-1}}{v_{t}} \z_{t-1} + \Big( 1-\frac{v_{t-1}}{v_{t}} \Big) \w_{t}.
\end{align*}
Denote $\g_t \eqq \nabla \tildft(\w_t; \tau_t)$ and 
observe,
\begin{align*}
\norm{\w_{t+1}-\z_{t+1}}^2
&=
\frac{v_{t}^2}{v_{t+1}^2} \norm{\w_{t+1}-\z_{t}}^2
\\
&=
\frac{v_{t}^2}{v_{t+1}^2} \norm{\w_{t}-\eta_t \g_t-\z_{t}}^2
\\
&=
\frac{v_{t}^2}{v_{t+1}^2} \big( \norm{\w_t-\z_t}^2 - 2\eta_t \abr{\g_t, \w_t -\z _t} + \eta_t^2 \norm{g_t}^2 \big),
\end{align*}
thus, rearranging we obtain
\begin{align*}
2v_{t}^2 \eta_t \abr{\g_t, \w_t -\z _t}
&=
v_{t}^2 \norm{\w_t-\z_t}^2 - v_{t+1}^2 \norm{\w_{t+1}-\z_{t+1}}^2 + v_{t}^2 \eta_t^2 \norm{\g_t}^2
.
\end{align*}
Summing over $t=1,\ldots,J$ yields
\begin{align*}
\sum_{t=1}^J v_{t}^2 \eta_t \abr{\g_t, \w_t -\z _t}
&\leq
\frac12 v_{0}^2 \norm{\w_1-\teacher}^2 + \frac12 \sum_{t=1}^J v_{t}^2 \eta_t^2 \norm{\g_t}^2
,
\end{align*}
where we used that,
\begin{align*}
\norm{\w_1-\z_1} 
= \frac{v_0}{v_1} \norm{\w_1-\z_0}
= \frac{v_0}{v_1} \norm{\w_1-\teacher}
.
\end{align*}
Next, by convexity of $\tildFt$ and the fact that $\w_t, \z_t$ are independent of $\tau_t$, conditioned on $\tau_1, \ldots, \tau_{t-1}$:
\begin{align*}
    \E_{\tau_t}\abr{\g_t, \w_t - \z_t}
    &=
    \abr{\E_{\tau_t}[\nabla \tildft(\w_t; \tau_t)] , \w_t - \z_t}
    \\
    &=
    \abr{\nabla \tildFt(\w_t), \w_t - \z_t}
    \geq 
    \tildFt(\w_t) - \tildFt(\z_t).
\end{align*}
Therefore,
\begin{align*}
\sum_{t=1}^T v_{t}^2 \eta_t 
\E \sbr{\tildFt(\w_t) - \tildFt(\z_t)}
&\leq
\frac12 v_{0}^2 \norm{\w_1-\teacher}^2 + \frac12 \sum_{t=1}^T v_{t}^2 \eta_t^2 \E \norm{\g_t}^2
.
\end{align*}
On the other hand, $\z_t$ can be written directly as a convex combination of $\w_1,\ldots,\w_J$ and $\teacher$, as follows:
\begin{align*}
\z_t = \frac{v_0}{v_t} \teacher + \sum_{s=1}^t \frac{v_s-v_{s-1}}{v_t} \w_s
.
\end{align*}
Jensen's inequality then implies, using convexity of $\tildFt$:
\begin{align*}
&\sum_{t=1}^J v_{t}^2 \eta_t \E\sbr{ \tildFt(\w_t)- \tildFt(\z_t) }
\\
&\geq
\sum_{t=1}^J v_{t}^2 \eta_t \E \sbr{ 
    \tildFt(\w_t) - \frac{v_0}{v_t} \tildFt(\teacher) 
    - \sum_{s=1}^t \frac{v_s-v_{s-1}}{v_t} \tildFt(\w_s) }
\\
&=
\sum_{t=1}^J v_{t} \eta_t \E \sbr{ 
    v_{t} \tildFt(\w_t) - v_0 \tildFt(\teacher) 
    - \sum_{s=1}^t (v_s-v_{s-1}) \tildFt (\w_s) 
}
\\
&=
\sum_{t=1}^J v_{t} \eta_t \E \sbr{ 
    v_{t} 
    \br{\tildFt(\w_t) - \tildFt(\teacher) }
    - \sum_{s=1}^t (v_s-v_{s-1}) 
    \br{\tildFt(\w_s) - \tildFt(\teacher) }
}
\end{align*}
Combining the two bounds and denoting 
$\tilde \delta_t \eqq \tildFt(\w_t) - \tildFt(\teacher)$, we conclude that
\begin{align*}
\sum_{t=1}^J v_{t} \eta_t \E \sbr{ 
    v_{t} \tilde \delta_t
    - \sum_{s=1}^t (v_s-v_{s-1}) \br{\tildFt (\w_s) - \tildFt(\teacher)}
}
&\leq
    \frac{v_{0}^2}{2}  \norm{\w_1-\teacher}^2 
    \\&+ \frac12 \sum_{t=1}^J v_{t}^2 \eta_t^2 \E \norm{\g_t}^2
.
\end{align*}
Now, by assumption, for any $s\leq t, m\in [M]$:
\begin{align*}
    \forall \w: \:
    \tildft(\w; m) \!-\!  \tildft(\teacher; m) \leq f (\w; m) \!-\! f (\teacher; m)
    \leq 
    \br{1\!+\!\eta_s\beta} 
    (\tildf{s}(\w; m) \!-\! \tildf{s}(\teacher; m)),
\end{align*}
hence, taking expectations over $m\sim \tau$, we obtain (w.p.~$1$ w.r.t.~randomness of $\w_s$);
\begin{align*}
    - \br{1+\eta_s\beta}\tilde \delta_s
    =
    - \br{1+\eta_s\beta} 
    \br{\tildF{s}(\w_s) - \tildF{s}(\teacher)}
    \leq - \br{\tildFt(\w_s) - \tildFt(\teacher)}.
\end{align*}
Combining with the previous display, we now have
\begin{align*}
\sum_{t=1}^J v_{t} \eta_t \E \sbr{ 
    v_{t} \tilde \delta_t 
    - \sum_{s=1}^t (v_s-v_{s-1})(1-\eta_s\beta) \tilde \delta_s 
}
\leq
    \frac{v_{0}^2}{2}  \norm{\w_1-\teacher}^2 
    + \frac12 \sum_{t=1}^J v_{t}^2 \eta_t^2 \E \norm{\g_t}^2
,
\end{align*}
which leads to the following after changing the order of summation;
\begin{align*}
\sum_{t=1}^J &\br{ 
    \eta_t v_{t}^2
    - (1-\eta_t\beta)(v_t - v_{t-1})\sum_{s=t}^J \eta_s v_s}
    \E \tilde \delta_t
\leq
    \frac{v_{0}^2}{2}  \norm{\w_1-\teacher}^2 
    + \frac12 \sum_{t=1}^J 
        v_{t}^2 \eta_t^2 \E \norm{\g_t}^2
,
\end{align*}
and completes the proof.
\end{proof}

\newpage
Next, we prove a technical lemma which we employ in conjunction with the above in the proof of \cref{thm:relaxed_sgd_indexing}.
\begin{lemma}\label{lem:c_t}
    Let $k \in \N$, $\beta > 0,a_1 > 0, a_2 > 0$, $\eta \in (0,\frac{3}{(8a_1+5a_2) \beta}]$, $\eta_t = \eta \cdot \frac{k-t+1}{k}$ for $t \in \set{1,2,\ldots,k}$, $v_t=\frac{2}{k-t+1}+\frac{1}{k}$ for $t \in \set{0,1,\ldots,k-1}$ and $v_k=v_{k-1}=1+\frac{1}{k}$. Denote $c_t = \eta_t v_t^2 - a_1 \beta \eta_t^2 v_t^2 - (1+a_2 \eta_t \beta) (v_t-v_{t-1}) \sum_{s=t}^k \eta_s v_s$. Then for all $t \in \set{1,2,\ldots,k}$, $c_t \geq 0$, and in particular, $c_k \geq \frac{\eta}{k}$.
\end{lemma}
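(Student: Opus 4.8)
I would pass to the change of variable $n := k-t+1 \in [k]$, which linearizes everything: $\eta_t = \eta n/k$, and for $t\le k-1$ (that is, $n\ge 2$) $v_t = \tfrac2n+\tfrac1k = \tfrac{2k+n}{nk}$ with $v_t-v_{t-1} = \tfrac{2}{n(n+1)}$, whereas for $t=k$ ($n=1$) one has $v_k=v_{k-1}$, so $v_k-v_{k-1}=0$. First I would dispose of $t=k$: the subtracted term in $c_k$ vanishes, leaving $c_k = \eta_k v_k^2(1-a_1\beta\eta_k) = \tfrac{\eta}{k}\bigl(1+\tfrac1k\bigr)^2\bigl(1-\tfrac{a_1\beta\eta}{k}\bigr)$. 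Using $a_1\beta\eta \le \tfrac{3a_1}{8a_1+5a_2} < \tfrac38$ (which follows from $\eta\le\tfrac{3}{(8a_1+5a_2)\beta}$ and $a_2>0$), one checks $\bigl(1+\tfrac1k\bigr)^2\bigl(1-\tfrac{a_1\beta\eta}{k}\bigr) \ge \bigl(1+\tfrac2k\bigr)\bigl(1-\tfrac{3}{8k}\bigr) \ge 1$ for all $k\ge1$, so $c_k\ge\eta/k$; this yields the ``in particular'' claim and the case $t=k$ of $c_t\ge0$ simultaneously.

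For $t\le k-1$ I would first compute the tail sum $S_t := \sum_{s=t}^k \eta_s v_s$ in closed form. Peeling off the $s=k$ term and using $\sum_{j=2}^n j = \tfrac{n(n+1)}{2}-1$, one gets $S_t = \tfrac{\eta}{2k^2}A_n$ with $A_n := 2k(2n-1)+n(n+1) = (n+1)(4k+n)-6k$. Plugging $\eta_t v_t^2 = \tfrac{\eta(2k+n)^2}{nk^3}$ and $(v_t-v_{t-1})S_t = \tfrac{\eta A_n}{n(n+1)k^2}$ into the definition of $c_t$ and pulling out the positive factor $\tfrac{\eta}{nk^2}$, the sign of $c_t$ equals that of
\[
\Phi_n \;:=\; \frac{(2k+n)^2}{k}\,(1-a_1\beta\eta_t)\;-\;\frac{A_n}{n+1}\,(1+a_2\beta\eta_t).
\]

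The heart of the argument is the pair of divisions $\tfrac{(2k+n)^2}{k} = 4k+4n+\tfrac{n^2}{k}$ and $\tfrac{A_n}{n+1} = 4k+n-\tfrac{6k}{n+1}$; substituting these together with $\eta_t=\eta n/k$ and writing $x:=n/k$, a short expansion collapses $\Phi_n$ to
\[
\Phi_n \;=\; n\,h(x)\;+\;\frac{6}{n+1}\bigl(k + a_2\beta\eta\,n\bigr),
\qquad h(x) := 3 + x - a_1\beta\eta(2+x)^2 - a_2\beta\eta(4+x),
\]
where the remainder is manifestly positive. It then remains to show $h(x)\ge 0$ for $x\in[0,1]$ (note $x=n/k\in(0,1]$). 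Since $h''(x) = -2a_1\beta\eta < 0$, $h$ is concave on $[0,1]$ and hence minimized at an endpoint, so it suffices to check $h(0)=3-4\beta\eta(a_1+a_2)\ge0$ and $h(1)=4-\beta\eta(9a_1+5a_2)\ge0$. Both reduce, via $\beta\eta\le\tfrac{3}{8a_1+5a_2}$, to $12(a_1+a_2)\le 3(8a_1+5a_2)$ and $3(9a_1+5a_2)\le 4(8a_1+5a_2)$, i.e.\ $0\le 12a_1+3a_2$ and $0\le 5a_1+5a_2$ — both trivially true. Therefore $\Phi_n\ge0$, so $c_t\ge0$ for every $t\in[k]$.

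The only real obstacle is the algebraic bookkeeping: obtaining the closed form for $S_t$ and the two divisions exactly, so that the unwieldy expression for $\Phi_n$ collapses with the $a_1,a_2$ corrections cleanly isolated as a single concave function $h$ of $x=n/k$. It is essential to retain the full $x$-dependence rather than bound crudely — for instance, $(2k+n)^2\le 9k^2$ would force a ``$9a_1$'' in the hypothesis in place of the stated ``$8a_1$'' — and it is precisely the concavity of $h$ that lets one get away with checking only the two endpoints $x=0,1$ while still recovering the sharp constant $\tfrac{3}{(8a_1+5a_2)\beta}$.
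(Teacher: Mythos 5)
Your proposal is correct and follows essentially the same route as the paper: handle $t=k$ separately via $v_k=v_{k-1}$, compute the tail sum $\sum_{s=t}^k\eta_s v_s$ in closed form, and compare it term-by-term against $\eta_t v_t^2(1-a_1\beta\eta_t)$ using $\beta\eta\le\tfrac{3}{8a_1+5a_2}$. The only difference is cosmetic: you keep the algebra exact and finish by checking a concave quadratic $h(x)$ at the endpoints $x=0,1$, whereas the paper discards the small positive/negative remainder terms early and bounds the residue directly; both land on the same constant.
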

\begin{proof}
    As $v_k=v_{k-1}$ and $\eta \leq \frac{3}{(8a_1+5a_2)\beta}$,
\begin{align*}
    c_k &= \eta_k v_k^2 - a_1 \beta \eta_k^2 v_k^2 = \eta_k v_k^2 \br{1-\frac{a_1 \beta \eta}{k}}
    = \frac{\eta}{k}\br{1+\frac{1}{k}}^2\br{1-\frac{a_1 \beta \eta}{k}}
    \\&\geq \frac{\eta}{k}\br{1+\frac{1}{k}}\br{1-\frac{3}{8k}}
    = \frac{\eta}{k}\br{1+ \frac{5}{8k}-\frac{3}{8k^2}} \geq \frac{\eta}{k}
    .
\end{align*}
We proceed to lower bound $c_t$ for $t < k$.
Focusing on the first terms, $A_t \eqdef \eta_t v_t^2 - a_1 \beta \eta_t^2 v_t^2$,
\begin{align*}
    A_t
    &=
    \eta_t v_t^2 \br{1-a_1 \beta \eta_t}
    = \frac{\eta(k-t+1)}{k} \br{\frac{2}{k-t+1}+\frac{1}{k}}^2 \br{1-a_1 \beta \eta_t}
    \\&=
    \eta \br{\frac{4}{k(k-t+1)} + \frac{4}{k^2} + \frac{k-t+1}{k^3}} \br{1-a_1 \beta \eta_t}
    \\&\geq
    \eta \br{\frac{4}{k(k-t+1)} + \frac{4}{k^2}} \br{1-a_1 \beta \eta_t}
    .
\end{align*}
Moving to the last term, $B_t \eqdef (1+a_2 \beta \eta_t)(v_t-v_{t-1})\sum_{s=t}^k \eta_s v_s$,
\begin{align*}
    B_t
    &= (1+a_2 \beta \eta_t)\eta \br{\frac{2}{k-t+1} - \frac{2}{k-t+2}} \br{\frac{1+\frac{1}{k}}{k}+\sum_{s=t}^{k-1} \br{\frac{2}{k}+\frac{k-s+1}{k^2}}}
    \\&=
    (1+a_2 \beta \eta_t)\frac{2 \eta}{k(k-t+1)(k-t+2)} \br{1+\frac{1}{k} + 2(k-t) + \frac1k \sum_{s=t}^{k-1} \br{k-s+1}}
    \\&=
    (1+a_2 \beta \eta_t)\frac{2\eta}{k(k-t+1)(k-t+2)} \br{1+\frac{1}{k}+2(k-t) + \frac{(k-t+3)(k-t)}{2k}}
    \\&=
    (1+a_2 \beta \eta_t)\frac{\eta(2k+2+4k(k-t)+(k-t+3)(k-t))}{k^2(k-t+1)(k-t+2)}
    \\&=
    (1+a_2 \beta \eta_t)\eta\br{\frac{-6}{k(k-t+1)(k-t+2)} + \frac{4}{k(k-t+1)}+ \frac{1}{k^2}}
    \\&\leq
    (1+a_2 \beta \eta_t)\eta\br{\frac{4}{k(k-t+1)}+ \frac{1}{k^2}}
    .
\end{align*}
Thus, for $t < k$,
\begin{align*}
    \frac{c_t}{\eta}
    &\geq
    \br{\frac{4}{k(k-t+1)} + \frac{4}{k^2}} \br{1-a_1 \beta \eta_t}
    -(1+a_2 \beta \eta_t)\br{\frac{4}{k(k-t+1)}+ \frac{1}{k^2}}
    \\&=
    \frac{3}{k^2}
    - \beta \eta_t \br{\frac{4 a_1 + 4 a_2}{k(k-t+1)}+\frac{4 a_1 + a_2}{k^2}}
    \\&=
    \frac{3}{k^2}
    - \beta \eta \br{\frac{4 a_1 + 4 a_2}{k^2}+\frac{(4 a_1 + a_2) (k-t+1)}{k^3}}
    \\&\geq
    \frac{3}{k^2}
    - \frac{\beta \eta}{k^2}\br{8a_1+5a_2}
    .
\end{align*}
Thus, for $\eta \leq \frac{3}{(8a_1+5a_2)\beta}$, $c_t \geq 0$.
\end{proof}

\newpage

The next lemma provides the stability property, which we leverage to translate our loss guarantees to the seen-task loss defined in \cref{def:seen_loss}.
\begin{lemma}\label{lem:wr_stability}
Assume the conditions of \cref{thm:relaxed_sgd_indexing}
and consider the algorithm defined in \cref{varsgd} with non-increasing step sizes $\eta_t\leq 1/2\beta$. In addition, define for every $1 \leq k$, $\hat f_{1:k}(\w) \eqq \frac1{k}\sum_{t=1}^k f(\w; \tau_t)$.
For all $1\leq k$, the following holds:
\begin{align*}
    \E \hat f_{1:k}(\w_{k+1})
    &\leq 
        2 \E \F(\w_{k+1})
        +
        \frac{8\beta^2\eta\norm{\w_1 - \teacher}^2}{k+1}
        \,.
\end{align*}
\end{lemma}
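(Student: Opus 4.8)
The plan is to run a ghost-sample / algorithmic-stability argument, adapted to the fact that the iterates are generated by the surrogate losses $\tildft$ while we measure the true losses $f$, and sharpened so that the remainder term scales like $\bigO(1/k)$ rather than $\bigO(1)$.

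First I would symmetrize. Write $S=(\tau_1,\dots,\tau_k)$, so $\w_{k+1}=\w_{k+1}(S)$ is a deterministic function of $S$ (given $\w_1$). Draw i.i.d.\ ghost samples $\tau_1',\dots,\tau_k'\sim\Unif[M]$ independent of $S$, and let $S^{(i)}$ be $S$ with $\tau_i$ replaced by $\tau_i'$. Since all draws are i.i.d., $\E f(\w_{k+1}(S);\tau_i')=\E f(\w_{k+1}(S^{(i)});\tau_i)$ for each $i$, while $\E_{\tau_i'}f(\w_{k+1}(S);\tau_i')=\F(\w_{k+1}(S))$; averaging over $i\in[k]$ yields
\[
\E\hat f_{1:k}(\w_{k+1})-\E\F(\w_{k+1})=\frac1k\sum_{i=1}^{k}\E\bigl[f(\w_{k+1}(S);\tau_i)-f(\w_{k+1}(S^{(i)});\tau_i)\bigr].
\]

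Next I would establish stability of the trajectories. For fixed $i$, the runs on $S$ and $S^{(i)}$ agree through step $i$; at step $i$ they differ only in the $\tildf{i}$-gradient at the common point $\w_i$ (on $\tau_i$ versus $\tau_i'$); and for every $t>i$ both runs apply the same map $\w\mapsto\w-\eta_t\nabla\tildft(\w;\tau_t)$, which is non-expansive because $\tildft(\cdot;\tau_t)$ is convex and $\beta$-smooth and $\eta_t\le 1/(2\beta)\le 2/\beta$. Hence $\Delta_i\eqdef\norm{\w_{k+1}(S)-\w_{k+1}(S^{(i)})}\le\eta_i\norm{\nabla\tildf{i}(\w_i;\tau_i)-\nabla\tildf{i}(\w_i;\tau_i')}$, and, since $\tau_i,\tau_i'$ are i.i.d.\ conditionally on $\tau_1,\dots,\tau_{i-1}$ (so the cross term is nonnegative), $\E\Delta_i^2\le 2\eta_i^2\,\E\norm{\nabla\tildf{i}(\w_i;\tau_i)}^2$. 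I would then bound each summand of the identity by convexity of $f(\cdot;\tau_i)$, Cauchy--Schwarz, the smooth-convex self-bounding inequality $\norm{\nabla f(\w;\tau_i)}^2\le 2\beta f(\w;\tau_i)$ (using realizability, $f(\teacher;\tau_i)=0$), and AM--GM ($\sqrt{2\beta a}\,b\le\tfrac12 a+\beta b^2$):
\[
f(\w_{k+1}(S);\tau_i)-f(\w_{k+1}(S^{(i)});\tau_i)\le\norm{\nabla f(\w_{k+1}(S);\tau_i)}\,\Delta_i\le\tfrac12 f(\w_{k+1}(S);\tau_i)+\beta\Delta_i^2.
\]
Averaging over $i$ and moving the resulting $\tfrac12\E\hat f_{1:k}(\w_{k+1})$ to the left-hand side is precisely what produces the factor $2$ in front of $\E\F(\w_{k+1})$.

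Finally I would bound the remainder: $2\beta\,\E\tfrac1k\sum_i\Delta_i^2\le\tfrac{4\beta}{k}\sum_i\eta_i^2\,\E\norm{\nabla\tildf{i}(\w_i;\tau_i)}^2$, and telescoping the descent inequality $\norm{\w_{t+1}-\teacher}^2\le\norm{\w_t-\teacher}^2-\eta_t(2/\beta-\eta_t)\norm{\nabla\tildft(\w_t;\tau_t)}^2\le\norm{\w_t-\teacher}^2-\tfrac{3\eta_t}{2\beta}\norm{\nabla\tildft(\w_t;\tau_t)}^2$ (valid for $\eta_t\le 1/(2\beta)$) gives $\sum_t\eta_t\norm{\nabla\tildft(\w_t;\tau_t)}^2\le\tfrac{2\beta}{3}\norm{\w_1-\teacher}^2$; since $\eta_i\le\eta\eqdef\eta_1=\max_t\eta_t$, the remainder is at most $\tfrac{8\beta^2\eta}{3k}\norm{\w_1-\teacher}^2\le\tfrac{8\beta^2\eta}{k+1}\norm{\w_1-\teacher}^2$ for $k\ge 1$. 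Combining the displays yields $\E\hat f_{1:k}(\w_{k+1})\le 2\,\E\F(\w_{k+1})+\tfrac{8\beta^2\eta}{k+1}\norm{\w_1-\teacher}^2$. The hard part is the per-summand bound: because the quadratic losses are not Lipschitz, $\norm{\nabla f(\w_{k+1};\tau_i)}$ cannot be controlled by a fixed constant, so self-boundedness must be used to turn the stability estimate into a self-referential inequality — which is also what forces the factor $2$ in the statement. A secondary subtlety is that the $\bigO(1/k)$ scaling of the remainder only emerges by summing the squared surrogate gradients through the telescoping descent bound (where the smallness of the $\eta_t$ is used), rather than bounding each gradient individually.
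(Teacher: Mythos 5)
Your proof is correct and follows the same overall architecture as the paper's: a ghost-sample stability decomposition of $\E\hat f_{1:k}(\w_{k+1})-\E\F(\w_{k+1})$, non-expansiveness of the maps $\w\mapsto\w-\eta_t\nabla\tildft(\w;\tau_t)$ for $t>i$ (convex, $\beta$-smooth, $\eta_t\le 2/\beta$), and a bound on the single-step divergence at step $i$ via the surrogate gradient at the common iterate $\w_i$. The two arguments differ in two places. First, the factor $2$ in front of $\E\F(\w_{k+1})$: the paper gets it from the smoothness inequality $|h(\tilde\w)-h(\w)|\le h(\w)+\beta\norm{\tilde\w-\w}^2$, whereas you get it from convexity plus the self-bounding property plus AM--GM, which leaves $\tfrac12\E\hat f_{1:k}(\w_{k+1})$ on the right-hand side to be absorbed by rearrangement; these are interchangeable. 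Second, and more substantively, the cumulative remainder: the paper controls $\sum_i\eta_i\E f(\w_i;\tau_i)$ by invoking \cref{thm:timevarying_frenchmen} with $v_t\equiv 1$ and shuttling between surrogates and true losses, while you telescope the elementary pathwise descent inequality $\norm{\w_{t+1}-\teacher}^2\le\norm{\w_t-\teacher}^2-\eta_t(2/\beta-\eta_t)\norm{\nabla\tildft(\w_t;\tau_t)}^2$ (valid because $\teacher$ minimizes every $\tildft(\cdot;m)$ by condition (iii)) to get $\sum_t\eta_t\norm{\nabla\tildft(\w_t;\tau_t)}^2\le\tfrac{2\beta}{3}\norm{\w_1-\teacher}^2$ directly. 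Your route is more self-contained and, notably, lands exactly on the stated constant $8\beta^2\eta\norm{\w_1-\teacher}^2/(k+1)$, whereas the paper's own final display reads $8\beta\norm{\w_1-\teacher}^2/k$, which does not literally match the lemma statement (one must use $\eta_i^2\le\eta\,\eta_i$ rather than $\beta\eta_i^2\le\eta_i$ in the penultimate step to recover the $\beta^2\eta$ scaling). I see no gaps in your argument.
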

\begin{proof}
First, any $\beta$-smooth $h:\R^d\to\R$ holds that
\begin{align*}
\av{h(\tilde \w) - h(\w)}
&
\leq \av{\nabla h(\w)\T(\tilde \w - \w)} + \frac\beta2\norm{\tilde \w - \w}^2
\notag
\\
\tag{\text{Young's ineq.}}
&
\leq \frac1{2\beta}\norm{\nabla h(\w)}^2 + \frac\beta2\norm{\tilde \w - \w}^2 + \frac\beta2\norm{\tilde \w - \w}^2
\notag
\\
&
\leq h(\w) + \beta\norm{\tilde \w - \w}^2\,.
\end{align*}
Denote $f_m \eqq f(\cdot; m)$ for all $m\in [M]$.
Now, similarly the standard stability $\iff$ generalization argument \citep{shalev2010learnability,hardt2016train}, and denoting by $\w_s^{(i)}$ the iterate after $s$ steps on the training set where the $i$'th example, $m_i$ was resampled (we denote the new example by $m'_i$):
\begin{align*}
\av{\E \sbr{\F(\w_{k+1}) - \hat f_{1:k}(\w_{k+1})}}
&=
\Big|{\frac1{k}\sum_{i=1}^{k} \E_{m_i \sim \tau}\sbr{
        f(\w_{k+1}; m_i) - f(\w_{k+1}^{(i)};m_i)}
    }\Big|
\\
&\leq
\frac1{k}\sum_{i=1}^{k} \E\sbr{f(\w_{k+1}; m_i) + \beta \norm{\w_{k+1}^{(i)} - \w_{k+1}}^2}
\\
 &
=
\E \F(\w_{k+1})
+ \frac\beta k\sum_{i=1}^{k} 
    \E \norm{\w_{k+1}^{(i)} - \w_{k+1}}^2.
\end{align*}
Next, we bound $\norm{\w_{k+1}^{(i)} - \w_{k+1}}^2$. Since by \cref{thm:relaxed_sgd_indexing}, for every $t$, $\tildft$ is convex and $\beta$-smooth, by the non-expansiveness of gradient steps in the convex and $\beta$-smooth regime when for every $t$, $\eta_t\leq 2/\beta$ \citep[see Lemma 3.6 in][]{hardt2016train}:
\begin{align*}
        s \leq i 
        &\implies 
        \norm{\w_{s}^{(i)} - \w_{s}} = 0,
        \\
        i < s
        &\implies
	\norm{\w_{s+1}^{(i)} - \w_{s+1}}^2
	\leq 
	\norm{\w_{i+1}^{(i)} - \w_{i+1}}^2.
\end{align*}
In addition, denoting by $f_{m'_{i}}$ the function that sampled after replacing $f_{{m}_i}$ and its corresponding time varying objective by $\tildf{m'_{i}}$, by the conditions in \cref{thm:relaxed_sgd_indexing}, we have that, 
\begin{align*}
	\norm{\w_{i+1}^{(i)} - \w_{i+1}}^2
        &
        =\norm{\w_{i}^{(i)}-
        \eta_i \nabla \tildf{m'_{i}}(\w_{i}^{(i)})
        - 
        \br{\w_{i}
        -\eta_i\nabla \tildf{m_{i}}(\w_{i})
        }
        }^2
        \\
        &
        =
        \eta_i^2
        \norm{
        \nabla \tildf{m'_{i}}(\w_{i}^{(i)})
        - \nabla \tildf{m_{i}}(\w_{i})
        }^2
        \\
        &\leq
        2\eta_i^2\norm{\nabla \tildf{m'_{i}}(\w_{i}^{(i)})}^2+ 2\eta_i^2\norm{\nabla \tildf{m_{i}}(\w_{i})}^2
        \\
        &\leq
        4\beta \eta_i^2\tildf{m'_{i}}(\w_{i}^{(i)})+ 4\beta \eta_i^2\tildf{m_{i}}(\w_{i})
        \\
        &\leq
        4\beta \eta_i^2f_{m'_{i}}(\w_{i}^{(i)})+ 4\beta \eta_i^2f_{m_{i}}(\w_{i})
        ,
\end{align*}
and, taking expectations,
\begin{align*}
	\norm{\w_{i+1}^{(i)} - \w_{i+1}}^2
        &\leq
        8\beta \eta_i^2\E f_{m_{i}}(\w_{i})
        ,
\end{align*}
Now,
\begin{align*}
\frac\beta {k}\sum_{i=1}^{k} 
        \E \norm{\w_{k+1}^{(i)} - \w_{k+1}}^2
&\leq 
{8\beta^2}\,
    \E\sbr{\frac{1} {k}\sum_{i=1}^{k} \eta^2_i f_{m_{i}}(\w_i)}
    \\&\leq 
    8\beta \E\sbr{\frac{1} {k}\sum_{i=1}^{k} \eta_if_{m_{i}}(\w_i)}.
\end{align*}
Summarizing, we have shown that:
\begin{align*}
	\av{\E\sbr{\F(\w_{k+1}) 
            -\hat f_{1:k}( \w_{k+1})}}
	&\leq
	\E \F(\w_{k+1})
	+ \frac\beta {k}\sum_{i=1}^{k} \E\norm{\w_{k+1}{(i)} - \w_{k+1}}^2
	\\
        &\leq 
        \E \F(\w_{k+1}) +
        8\beta \E\sbr{\frac{1} {k}\sum_{i=1}^{k} \eta_if_{m_{i}}(\w_i)}
        .
\end{align*}
 Now, by \cref{thm:timevarying_frenchmen} with $v_t=1$ for every $t$, we have, since $\eta_t\beta \leq \frac{1}{4}$, $\frac{1}{1+\eta_t\beta}\geq \frac{4}{5}$
\begin{align*}
    \frac{4}{5}\sum_{i=1}^{k}\eta_i \E f_{m_{i}}(\w_i) \tag{$\E f_{m_{i}}(w_i)=\E \F(w_i)$}
    &= \frac{4}{5}\sum_{i=1}^{k}\eta_i \E \F(\w_i)
    \\&\leq\sum_{i=1}^{k}\eta_i \E\tildF{i}(\w_i)
    \\&=
    \sum_{i=1}^{k}\eta_i \E\sbr{\tildF{i}(\w_i) 
    - \tildF{i}(\teacher)}
    \\&\leq \frac{1}{2}\norm{\w_1 - \teacher}^2
    + \frac{1}{2}\sum_{i=1}^{k}\eta_i^2\E \norm{\nabla \tildf{i}(w_i)}^2\\&\leq \frac{1}{2}\norm{\w_1 - \teacher}^2
    + \sum_{i=1}^{k}\beta \eta_i^2\E \tildf{i}(w_i)
    \\&\leq \frac{1}{2}\norm{\w_1 - \teacher}^2
    + \frac{1}{4}\sum_{i=1}^{k} \eta_i\E f_{m_{i}}(w_i),
\end{align*}

this implies,
\begin{align*}
   \sum_{i=1}^{k}\eta_i \E f_{m_{i}}(\w_i)\leq \norm{\w_1 - \teacher}^2.
\end{align*}
Then we can conclude,
    \begin{align*}
    	\av{\E\sbr{\F(\w_{k+1}) 
            -\hat f_{1:k}( \w_k)}}
    	&\leq 
           \E \F(\w_{k+1})+
\frac{8\beta\norm{\w_1 - \teacher}^2}{k}
\,,
    \end{align*}
    and the result follows.
\end{proof}
\newpage
We are now ready to prove our main lemma for this section.
\begin{proof}[Proof of \cref{thm:relaxed_sgd_indexing}]
To begin, note that we are after a guarantee for $\w_{k+1}$, which is the SGD iterate that was produced by taking $k$ steps over $k$ losses.
To that end, we are going to apply \cref{thm:timevarying_frenchmen} with $J=k+1$, hence we are obligated to supply a random ordering $\tau\colon[k+1] \to [M]$, $\tildf{k+1}$ and $\eta_{k+1}$, which are not supplied in the statement of our lemma. Therefore, we define
\begin{align*}
    \forall m\in [M]:\tildf{k+1}(\cdot; m) \eqq f(\cdot; m), 
    \;\text{ and }\;
    \eta_{k+1} \eqq 
    \eta\br{\frac{1}{k+1}}
    = \eta\br{\frac{(k+1)-(k+1)+1}{k+1}}.
\end{align*}
We additionally define $\tildF{k+1}(\w) \eqq \E_{m\sim {\rm Unif[M]}} \tildf{k+1}(\w; m)$.
It is immediate to verify $\tildf{k+1}$ satisfies the properties required from $\tildft$ for $t\in[k]$ and 
$\eta_{k+1}$ is the next step size in the sequence $\eta_1, \ldots, \eta_k$ defined in the statement. Finally, we simply define the extra sampled index $\tau_{k+1}$ to be uniform over $[M]$, exactly like $\tau_t$ for $t\in [k]$.

Now, the conditions for \cref{thm:timevarying_frenchmen} are immediately satisfied with $J=k+1$ by our assumptions and augmentation described above, leading to:
\begin{align*}
    \sum_{t=1}^{k+1} &\br{\eta_t v_{t}^2 
    - (1-\eta_t\beta)(v_t - v_{t-1})\sum_{s=t}^{k+1} \eta_s v_s} \E\sbr{\tildFt(\w_t)
    - \tildFt(\teacher)}
    \\
    &\leq \frac{v_0^2}{2}\norm{\w_1 - \teacher}^2
    + \frac{1}{2}\sum_{t=1}^{k+1}\eta_t^2 v_t^2 
        \E \norm{\nabla \tildft (\w_t; \tau_t)}^2.
\end{align*}
Now, by the joint realizability assumption, conditioning on all randomness up to round $t$,
\begin{align*}
    \E_{\tau_t} \norm{\nabla \tildft (\w_t; \tau_t)}^2 
    \leq 2 \beta \E_{\tau_t}[ \tildft(\w_t; \tau_t)- \tildft(\teacher; \tau_t)] = 2\beta \br{\tildFt(\w_t)-\tildFt(\teacher)}.
\end{align*}
Combining with the previous display and rearranging, this yields
\begin{align}\label{eq:sm-re-bound}
    \sum_{t=1}^{k+1} \br{\eta_t v_{t}^2 - \beta \eta_t^2 v_t^2
    - (1-\eta_t\beta)(v_t - v_{t-1})\sum_{s=t}^{k+1} \eta_s v_s}& \E\sbr{\tildFt(\w_t) - \tildFt(\teacher)}
    \nonumber \\
    &\qquad\qquad\leq \frac{v_0^2}{2}\norm{\w_1 - \teacher}^2.
\end{align}
Now, by \cref{lem:c_t}, the step size sequence $\eta_t=\eta(\frac{(k+1)-t+1}{k+1})$ with $\eta \leq \frac{3}{13\beta}$ and $\cbr{v_t}_{t=1}^{k+1}$ as specified by the lemma, guarantee that $c_{k+1} \geq \frac{\eta}{k+1}$, $v_0\leq 3/(k+1)$, and $c_t \geq 0$ for all $t\in[k+1]$.
Combining these properties with \cref{eq:sm-re-bound} we obtain,
\begin{align*}
    \E \sbr{\F(\w_{k+1}) - \F(\teacher)}
    &=
    \E\sbr{\tildF{k+1}(\w_{k+1}) - \tildF{k+1}(\teacher)}
    \\
    &\leq
    \frac{v_0^2}{2 c_{k+1}}\norm{\w_1 - \teacher}^2
    \leq
    \frac{9}{2 \eta (k+1)}\norm{\w_1 - \teacher}^2
    ,
\end{align*}
which completes the proof for the first part. For the seen-task guarantee, by \cref{lem:wr_stability}, we have
\begin{align*}
    \E \sbr{\frac1k\sum_{t=1}^k \F(\w_{k+1}; \tau_t)
    - \F(\teacher; \tau_t)
    }
    &\leq 
    2 \E \F(\w_{k+1})
    +
    \frac{8\beta^2\eta\norm{\w_1 - \teacher}^2}{k+1}
    ,
\end{align*}
which gives, after combining with the population loss guarantee:
\begin{align*}
    \E \sbr{\frac1k\sum_{t=1}^k f(\w_{k+1}; \tau_t)
    - f(\teacher; \tau_t)
    }
    &\leq
    \frac{18}{ \eta (k+1)}\norm{\w_1 - \teacher}^2
    +
    \frac{8\beta^2\eta\norm{\w_1 - \teacher}^2}{k+1}
    \\ &\leq
    \frac{20 \norm{\w_1 - \teacher}^2}{ \eta (k+1)}
    \tag{$\eta\leq 1/(2\beta)$}
    ,
\end{align*}
which completes the proof.
\end{proof}



\end{document}